\definecolor{redp}{rgb}{0.78, 0.03, 0.08}
\definecolor{greenp}{rgb}{0.0, 0.51, 0.5}
\definecolor{yellowp}{rgb}{0.59, 0.44, 0.09}
\definecolor{greencol}{rgb}{0.0,0.4,0.0}
\definecolor{fcolor}{rgb}{0.8, 0.4, 0.0}
\definecolor{bluep}{rgb}{205,219,194}
\definecolor{vibrantBlue}{RGB}{0, 119, 187}
\definecolor{vibrantCyan}{RGB}{51, 187, 238}
\definecolor{vibrantTeal}{RGB}{0, 153, 136}
\definecolor{vibrantOrange}{RGB}{238, 119, 51}
\definecolor{vibrantRed}{RGB}{204, 51, 17}
\definecolor{vibrantMagenta}{RGB}{238, 51, 119}
\definecolor{vibrantGrey}{RGB}{100, 100, 100}
\definecolor{fuqqzz}{rgb}{0.9568627450980393,0,0.6}
\definecolor{ccqqqq}{rgb}{0.8,0,0}
\definecolor{xdxdff}{rgb}{0.49019607843137253,0.49019607843137253,1}
\definecolor{qqqqff}{rgb}{0,0,1}
\definecolor{crimason}{rgb}{0.8627,0.0784,0.2352}
\definecolor{green}{rgb}{0,0.5019,0}
\definecolor{ududff}{rgb}{0.30196078431372547,0.30196078431372547,1}
\definecolor{darkblue}{rgb}{0,0,139}
\pgfplotsset{compat=1.15}
\title{
Lifelong Hyper-Policy Optimization with Multiple Importance Sampling Regularization
}
\author{
Pierre Liotet\textsuperscript{\rm 1}, Francesco Vidaich\textsuperscript{\rm 2 }, Alberto Maria Metelli\textsuperscript{\rm 1}, Marcello Restelli\textsuperscript{\rm 1}
}
\begin{document}

\maketitle


\begin{abstract}
Learning in a lifelong setting, where the dynamics continually evolve, is a hard challenge for current reinforcement learning algorithms. Yet this would be a much needed feature for practical applications. 
In this paper, we propose an approach which learns a hyper-policy, whose input is time, that outputs the parameters of the policy to be queried at that time. 
This hyper-policy is trained to maximize the estimated future performance, efficiently reusing past data by means of importance sampling, at the cost of introducing a controlled bias. We combine the future performance estimate with the past performance to mitigate catastrophic forgetting.
To avoid overfitting the collected data, we derive a differentiable variance bound that we embed as a penalization term. Finally, we empirically validate our approach, in comparison with state-of-the-art algorithms, on realistic environments, including water resource management and trading.
\end{abstract}

\section{Introduction}
In the most common setting, Reinforcement Learning~\citep[RL,][]{sutton2018reinforcement} considers the interaction between an agent and an environment in a sequence of episodes. The agent progressively adapts its policy, but the dynamics of the environment, typically, remain unchanged. Most importantly, the agent can experience multiple times the same portion of the environment. 
However, this usual setting is sometimes not met in real applications. Hence several modifications have been proposed to model different, more realistic, scenarios. 
One of them is \emph{non-stationary} RL~\citep{bowerman1974nonstationary}, which considers that the episodes can follow different distributions, or even that the distribution changes within each episode. 
The change can either be \emph{abrupt}, when a clear separation between tasks evolving through time is present, or \emph{smooth}, when the environment evolution displays some regularity \wrt time. Non-stationarity can arise from diverse causes and can be interpreted as a form of partial knowledge on environment \cite{khetarpal2020towards}. 
Learning in non-stationary environments has been diffusely addressed in the literature~\cite{garcia2000solving, ghate2013linear, lesner2015nonstationary}. 
Nevertheless, in these works, the agent-environment interaction based on episodes is preserved, so that the same region of non-stationary behavior can be experienced multiple times by the agent.

Still moving towards a more realistic setting, another modification is the \emph{lifelong} interaction with the environment~\citep{silver2013lifelong, brunskill2014pac}. Here, the separation in episodes vanishes and, therefore, there is no clear distinction between learning and testing. 
Moreover, given the never-ending nature of this interaction, the agent is not allowed to reset the environment and, consequently, it might not be possible to visit twice some portions of the environment. 
Thus, the agent aims at exploiting the experience collected in the past to optimize its future performance. 
In this sense, Lifelong Learning (LL) can be considered closer to the intuitive idea of learning for human agents. More technically, LL requires the agent to readily adapt its behavior to the environment evolution, as well as keeping memory of past behaviors in order to leverage this knowledge on future similar phases~\citep{khetarpal2020towards}. This represents, indeed, a critical trade-off, peculiar of the lifelong setting. Indeed, if the agent displays a highly non-stationary behavior, the samples collected in the past would be poorly informative and, consequently, hardly usable to estimate the future performance. Instead, preferring a more stationary behavior would favor sample reuse, at the cost of sacrificing the optimality of the learned behavior.


In this paper, we consider the RL problem with a lifelong interaction between an agent and the environment, where the environment's dynamics \emph{smoothly} evolve over time.  
We address this problem by designing a hyper-policy, responsible for selecting the best policy to be played at time $t$. This way, we decouple the problem of learning in a non-stationary setting, by assigning to the hyper-policy level the management of the dependence on time and to the policy level the action to be played given a state (Section~\ref{sec:formulation}).
This hyper-policy is trained with an objective composed of
the future performance, the ultimate quantity to be maximized, and the past performance. Although the past performance is not the direct interest of our agent, it is included to constrain the hyper-policy to perform well on past samples, thus mitigating \textit{catastrophic forgetting}. 
Future performance is estimated through multiple importance sampling.
To avoid overfitting, we additionally penalize the hyper-policy for the variance of the estimations. Rather than estimating this quantity, that would inject further uncertainty, we derive a differentiable upper-bound allowing a gradient based optimization. This penalization, involving a divergence between past and future hyper-policies, has the indirect effect of quantifying and controlling the \quotes{amount} of non-stationarity selected by the agent (Section~\ref{sec:objective}). 
We propose a practical policy-gradient optimization of the objective, which we name POLIS, for Policy Optimization in Lifelong learning through Importance Sampling.
After having revised the literature (Section~\ref{sec:relatedWorks}), we provide an experimental evaluation on realistic domains, including a trading environment and a water resource management problem, in comparison with state-of-the-art baselines (Section~\ref{sec:experiments}). The proofs of the results presented in the main paper are reported in Appendix~\ref{apx:proofs}.

\section{Preliminaries}
In this section, we report the necessary background that will be employed in the following sections.

\textbf{Lifelong RL}~~A Non-Stationary Markov Decision Process~\citep[MDP,][]{puterman2014markov} is defined as $\mathcal{M} = (\Xs, \As, P, R, \gamma, D_0)$, where $\Xs$ and $\As$ are the state and action spaces respectively, $P= (P_t)_{t \in \Nat}$ is the transition model that for every decision epoch $t \in \Nat$ and $(x,a) \in \Xs \times \As$ assigns a probability distribution over the next state $x' \sim P_t(\cdot|x,a)$, $R = (R_t)_{t \in \Nat}$ is the reward distribution assigning for every $t \in \Nat$ and $(x,a) \in \Xs \times \As$ the reward $r \sim R_t(\cdot|x,a)$ such that $\|r\|_{\infty} \le R_{\max} < \infty $, $\gamma \in [0,1]$ is the discount factor, and $D_0$ is the initial state distribution. A non-stationary policy $\pi = (\pi_t)_{t \in \Nat}$ assigns for every decision epoch $t \in \Nat$ and state $x \in \Xs$ a probability distribution over the actions $a_t \sim \pi_t(\cdot|x)$. Let $T \in \Nat$ be the current decision epoch, let $\beta \in \Nat[1]$, we define the \emph{$\beta$-step ahead expected return} as:
\begin{align}\label{eq:Jbeta}
	J_{T,\beta}(\pi) =\sum_{t = T+1}^{T+\beta} \widehat{\gamma}^t \Epit[\pi] [r_{t}],
\end{align}
where $\widehat{\gamma}^t = \gamma^{t-T-1}$ and we denote with $\Epit$ the expectation under the visitation distribution induced by policy $\pi$ in MDP $\mathcal{M}$ after $t$ decision epochs.
A policy $\pi^{\star}_{T,\beta}$ is $\beta$-step ahead optimal if $\pi^{\star}_{T,\beta} \in \argmax_{\pi \in \Pi^\beta} J_{T,\beta}(\pi)$, where $\Pi^\beta$ is the set of non-stationary policies operating over $\beta$ decision epochs. 
In \emph{classical RL}, the agent's goal consists in maximizing $J_{0,H}$, where $H$ is the (possibly infinite) horizon of the task, having the possibility to collect \emph{multiple} episodes (not necessarily of length $H$). Instead, from the \emph{lifelong RL} perspective, the agent is interested in maximizing the $\infty$-step ahead expected return $J_{T,\infty}(\pi)$, having observed \emph{in the past only one} episode of length $T$, \ie optimizing for the future.

\textbf{Multiple Importance Sampling}~~Importance Sampling~\citep[IS,][]{mcbook} allows estimating the expectation $\mu = \E_{x \sim P}[f(x)]$ of a function $f$ under a \emph{target} distribution $P$ having samples collected with a sequence of \emph{behavioral} distributions $(Q_j)_{j\in[\![1,J]\!]}$ such that $P \ll Q_j$, \ie $P$ is absolutely continuous \wrt $Q_j$, for all $j \in [\![1,J]\!]$. Let $p$ and $(q_j)_{j\in[\![1,J]\!]}$ be the density functions corresponding to $P$ and $(Q_j)_{j\in[\![1,J]\!]}$, then, the resulting unbiased estimator is:
\begin{align*}
	\widehat{\mu} = \sum_{j=1}^J \frac{1}{N_j}\sum_{i=1}^{N_j} \beta_j(x_{ij})\frac{p(x_{ij})}{q_j(x_{ij})}f(x_{ij}),
\end{align*}
where $\{x_{ij}\}_{i=1}^{N_j} \sim Q_j$ and $(\beta_j(x))_{j\in[\![1,J]\!]}$
are a partition of the unit for every $x \in \Xs$.
A common choice for the latter is to use the \emph{balance heuristic}~\citep[BH,][]{veach1995monte}, yielding $\beta_j(x)=\frac{N_jq_j(x)}{\sum_{k=1}^J N_k q_k(x)}$. 
Using BH, samples can be regarded as obtained from the mixture of the $(Q_j)_{j\in[\![1,J]\!]}$ distributions as $\Phi = \sum_{k=1}^J\frac{N_k}{N}Q_k$, with $N=\sum_{j=1}^{J} N_j$. 


\textbf{\Renyi divergence}~~
Let $\alpha \in [0,\infty]$, the $\alpha$-\Renyi divergence between two probability distributions $P$ and $Q$ such that $P \ll Q$ is defined as:
\begin{align*}
	D_{\alpha}(P\Vert Q) = \frac{1}{\alpha-1}\log \int_{\Xs} p(x)^\alpha q(x)^{1-\alpha} \de x.
\end{align*}
We denote with $d_{\alpha}(P\Vert Q) = \exp\{D_{\alpha}(P\Vert Q)\}$ the exponential $\alpha$-\Renyi divergence, linked to the $\alpha$-moment of the importance weight, \ie $\mathbb{E}_{x\sim Q}\left[\left(\frac{p(x)}{q(x)}\right)^{\alpha}\right] = d_{\alpha}(P\Vert Q)^{\alpha-1}$.

%
%
%
%
%
\section{Lifelong Parameter-Based Policy Optimization}\label{sec:formulation}
In this paper, we consider the Policy Optimization~\citep[PO,][]{deisenroth2013survey} setting in which the policy belongs to a set of parametric policies $\Pi_{\Theta} = \{\pi_{\vtheta} : \vtheta \in \Theta \subseteq \Reals^{d_1}\}$. In particular, we focus on the \emph{parameter-based} PO\footnote{We follow the taxonomy of~\cite{metelli2018policy}.} in which the policy parameter $\vtheta$ is sampled from a \emph{hyper-policy} $\nu_{\vrho}$ belonging, in turn, to a parametric set $\mathcal{N}_{\mathcal{P}} = \{\nu_{\vrho} : \vrho \in \mathcal{P} \subseteq \Reals^{d_2}\}$~\citep{sehnke2008parameter}. As opposed to \emph{action-based} PO in which policies $\pi_{\vtheta}$ needs to be stochastic for exploration reasons, in parameter-based PO we move the stochasticity to the hyper-policy $\nu_{\vrho}$ level and $\pi_{\vtheta}$ can be deterministic.

Optimizing the $\beta$-step ahead expected return in \cref{eq:Jbeta} requires, in general, considering non-stationary policies. From the PO perspective, this requirement can be fulfilled in two ways. The traditional way consists in augmenting the state $x$ with time the $t$ and, consequently, considering a policy of the form $\pi_{\vtheta}(\cdot|(x,t))$. This approach highlights the direct dependence of the action $a_t \sim \pi_{\vtheta}(\cdot|(x_t,t))$ on the time $t$. However, in several cases, it is convenient to track the evolution of the policy parameters $\vtheta$ as a function of the time $t$, whose dependence might be simpler compared to that of the action. In this latter approach, the one we adopt in this work, the policy parameter is sampled from a \emph{time-dependent} hyper-policy $\vtheta_t \sim \nu_{\vrho}(\cdot|t)$ and the policy depends on the state only $\pi_{\vtheta_t}(\cdot|x_t)$. We will refer to this setting as \emph{lifelong parameter-based PO}.
Refer to Figure~\ref{fig:graph_models} for a comparison of the graphical models of the two approaches.
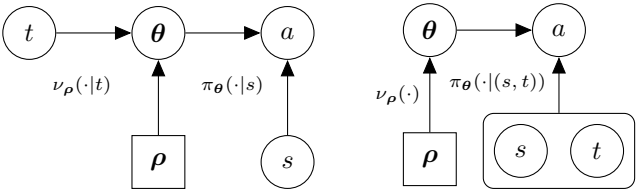
\begin{figure}
    \centering
    \begin{tikzpicture}
    \node[latent] (x) {$t$}; %
    \node[latent, right =of x] (z) {$\bm{\theta}$} ; %
    \node[latent, right=of z] (y) {$a$};
    \node[latent, rectangle, below=of z] (r) {$\bm{\rho}$};
    \node[latent, below=of y] (s) {$s$};
    \path[->] (x) edge node[above ] {} (z);
    \path[->] (z) edge node[above ] {} (y);
    \path[->] (r) edge node[above ] {} (z);
    \path[->] (s) edge node[above ] {} (y);
    \node (x) at (.7,-.7) [draw=none]  {\scriptsize $\nu_{\bm{\rho}}(\cdot|t)$};
    \node (x) at (2.7,-.7) [draw=none]  {\scriptsize $\pi_{\bm{\theta}}(\cdot|s)$};
\end{tikzpicture}
\hfill
\begin{tikzpicture}
    \node[latent] (z) {$\bm{\theta}$} ; %
    \node[latent, right=of z] (y) {$a$};
    \node[latent, rectangle, below=of z] (r) {$\bm{\rho}$};
    \node[latent, below left=1.1cm and .0cm of y] (s) {$s$};
    \node[latent, below right=1.1cm and .0cm of y] (t) {$t$};
	\node[draw, rectangle, rounded corners, wrap=(s) (t), rectangle, inner sep=4pt] (w) {};    
    \path[->] (z) edge node[above ] {} (y);
    \path[->] (r) edge node[left ] {\scriptsize $\nu_{\bm{\rho}}(\cdot)$} (z);
    \path[->] (w) edge node[above ] {} (y);
    \node (x) at (.9,-.7) [draw=none]  {\scriptsize $\pi_{\bm{\theta}}(\cdot|(s,t))$};
\end{tikzpicture}
    \caption{Graphical models of the two approaches to model non-stationarity of the environment:  the non-stationarity is handled at the parameter selection level (left), the non-stationarity is handled at the action selection level (right).}
    \label{fig:graph_models}
\end{figure}



In this setting, we aim at learning a hyper-policy parameter $\vrho^\star_{T,\beta}$ maximizing the \textit{$\beta$-step ahead expected return}:
\begin{align}\label{eq:opt}
	\vrho^\star_{T,\beta} \in \argmax_{\vrho \in \mathcal{P}} J_{T,\beta}(\vrho) = \sum_{t=T+1}^{T+\beta} \widehat{\gamma}^t \Epit[\vrho][r_t],
\end{align} 
where $\Epit[\vrho][\cdot]$ is a shorthand for  $\E_{\vtheta \sim \nu_{\vrho}(\cdot|t)}[\Epit[\pi_{\vtheta}][\cdot]]$. 

\section{Lifelong Parameter-Based PO via Multiple Importance Sampling}\label{sec:objective}

In this section, we propose an estimator for $J_{T,\beta}(\vrho) $ (Section~\ref{sec:betaEst}), we analyze its bias (Section~\ref{sec:isBias}) and variance (Section~\ref{sec:isVariance}), and we propose a novel surrogate objective accounting for the estimation uncertainty (Section~\ref{subsec:algo_optim}). 

\subsection{$\beta$-Step Ahead Expected Return Estimation}\label{sec:betaEst}
The main challenge we face in estimating $J_{T,\beta}(\vrho)$ is that it requires evaluating hyper-policy $\nu_{\vrho}$ in the future, while having samples from the past only. Since the environment evolves smoothly, it is reasonable to use the past data to approximate the future dynamics and IS to correct the hyper-policy behavior mismatch from past to future. More specifically, in this section, we study how to leverage the history of the past $\alpha$ samples $\mathcal{H}_{T,\alpha} = (\vtheta_t,r_t)_{t \in [\![T-\alpha+1,T]\!]}$ in order to estimate the $\beta$-step ahead expected return $J_{T,\beta}(\vrho)$. 

As a preliminary step, we illustrate the estimation of the $s$-step ahead expected reward $\mathbb{E}_s^{\vrho}[r_s]$. For every $s \in [\![T+1,T+\beta]\!]$ we employ the following MIS estimator that makes use of the history $\mathcal{H}_{T,\alpha}$:
\begin{align}\label{eq:hStepEstimator}
    \widehat{r}_{s} = \sum_{t=T-\alpha+1}^{T} \omega^{T-t} \frac{\nu_{\vrho}(\vtheta_t|s)}{ \sum_{k=T-\alpha+1}^T \omega^{T-k} \nu_{\vrho}(\vtheta_t|k)} r_{t},
\end{align}
where $\omega \in [0,1]$ is an exponential weighting parameter.
The importance sampling correction $\frac{\nu_{\vrho}(\vtheta_t|s)}{ \sum_{k=T-\alpha+1}^T \omega^{T-k} \nu_{\vrho}(\vtheta_k|k)}$ addresses the mismatch between the hyper-policies in the future $\nu_{\vrho}(\cdot|s)$ and those in the past $\nu_{\vrho}(\cdot|k)$. 
The reader may have noticed that these importance weights are not using the exact BH weights. Indeed, we have adapted the heuristic to include our knowledge that the environment is \emph{smoothly changing}. 
With BH, each past sample would have been weighted equally whereas our sampling mixture probability, proportional to $ \sum_{k=T-\alpha+1}^T \omega^{T-k} \nu_{\vrho}(\vtheta_t|k)$, gives more weight to recent samples thanks to the parameter $\omega$ which exponentially discounts samples as they are collected far from current time $T$~\citep{jagerman2019people}. 

Using $\widehat{r}_{s}$ as building block, we now propose the estimator for the $\beta$-step ahead expected return $\widehat{J}_{T,\alpha,\beta}(\vrho)$ that is obtained as the discounted sum of the $s$-step ahead expected reward estimators of Equation~\eqref{eq:hStepEstimator}:
\begin{align*}
    \widehat{J}_{T,\alpha,\beta}({\vrho}) &= \sum_{s=T+1}^{T+\beta} \widehat{\gamma}^{s} \widehat{r}_{s} \\
    & = \sum_{t=T-\alpha+1}^{T} \omega^{T-t} \frac{\sum_{s=T+1}^{T+\beta} \widehat{\gamma}^s \nu_{\vrho}(\vtheta_t|s)}{\sum_{k=T-\alpha+1}^T \omega^{T-k} \nu_{\vrho}(\vtheta_t|k)} r_{t}.
\end{align*}

This estimator could be, in principle, employed in an optimization algorithm but, as common in IS-based estimators, we would incur in the following undesired effect. In order to increase $\widehat{J}_{T,\alpha,\beta}({\vrho})$, the agent can either increase the probability of good actions for the future policies $\nu_{\vrho}(\vtheta_t|s)$ (the numerator of the importance weight) or decrease the probability of the same good actions for the policies from the past $\nu_{\vrho}(\vtheta_t|k)$ (the denominator of the importance weight). 
The latter phenomenon, akin to catastrophic forgetting, is clearly undesired, but can be easily spotted by looking at the past reward. Specifically, we propose to adjust the objective function with the return of the last $\alpha$ steps, called \textit{$\alpha$-step behind expected return}:
\begin{align*}
    \widecheck{J}_{T,\alpha}({\vrho}) = \frac{1}{C_\omega} \sum_{t=T-\alpha+1}^{T} \omega^{T-t}  \widecheck{\gamma}^t r_{t},
\end{align*}
where $\widecheck{\gamma}^t = \widecheck{\gamma}^{t-T+\alpha-1}$, $C_\omega = \frac{1-\omega^\alpha}{1-\omega}$ if $\omega<1$ otherwise $C_\omega = \alpha$. 
Putting all together, we obtain the objective:
\begin{align*}
    \overline{J}_{T,\alpha,\beta}(\vrho) = \widehat{J}_{T,\alpha,\beta}({\vrho})  +  \widecheck{J}_{T,\alpha}({\vrho}).
\end{align*}

\subsection{Bias Analysis}\label{sec:isBias}
In this section, we analyze the bias of estimator $\widehat{J}_{T,\alpha,\beta}(\vrho)$, under suitable regularity conditions on the environment and on the hyper-policy model. In particular, we will require that the environment and the hyper-policy are \emph{smoothly changing}. We formalize the intuition in the following assumptions.

\begin{ass}[Smoothly Changing Environment]\label{ass:sce}
For every $t,t' \in \Nat$, and for every policy $\pi$ it holds for some Lipschitz constant $0 \le L_{\mathcal{M}} < \infty$:
\begin{align*}
    \left| \left(\mathbb{E}_{t}^{\pi}  -\mathbb{E}_{t'}^{\pi} \right)\left[ r \right] \right| \le L_{\mathcal{M}} \left|t - t'\right|.
\end{align*}
\end{ass}
\begin{ass}[Smoothly Changing Hyper-policy]\label{ass:sch}
For every $t,t' \in \Nat$, and for every time-dependent hyper-policy $\vrho \in \mathcal{P}$ it holds for some Lipschitz constant $0 \le L_{\nu} < \infty$:
\begin{align*}
    \left\| \nu_{\vrho}(\cdot|t) - \nu_{\vrho}(\cdot|t') \right\|_1 \le L_{\nu} \left|t - t'\right|.
\end{align*}
\end{ass}
Thus, Assumption~\ref{ass:sce} prescribes that executing the same policy $\pi$ in different time instants $t$ and $t'$ results in an expected reward that can be bounded proportionally to the time distance. A similar requirement is requested by Assumption~\ref{ass:sch}, involving the  total variation distance between time-dependent hyper-policies. Under these assumptions, we provide the following bias bound. We denote with $\mathbb{E}_{T,\alpha}^{\vrho}$ the expectation under the probability distribution induced by the joint hyper-policy $\prod_{t=T-\alpha+1}^{T} \nu_{\vrho}(\cdot\vert t)$ in the MDP.

\begin{restatable}{lemma}{biasBound}
\label{lem:bias_bound}
Under Assumptions~\ref{ass:sce} and~\ref{ass:sch}, the bias of the estimator $\widehat{J}_{T,\alpha,\beta}(\vrho)$, for $\omega<1$, can be bounded as:
\begin{align*}
    &\left|J_{T,\beta}(\vrho) - \mathbb{E}^{\vrho}_{T,\alpha}[\widehat{J}_{T,\alpha,\beta}] \right|\\  
    & \qquad \le (L_{\mathcal{M}} + 2R_{\max}L_\nu)  C_\gamma(\beta)\left(\frac{\omega}{1-\omega} +\frac{1}{1-\gamma}\right),
\end{align*}
{\thinmuskip=1mu
\medmuskip=1mu
\thickmuskip=1mu
where $C_\gamma(\xi)=\frac{1-\gamma^\xi}{1-\gamma}$ if $\gamma<1$ otherwise $C_\gamma(\xi)=\xi$ for $\xi \ge 1$.
}
\end{restatable}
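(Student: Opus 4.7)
The plan is to linearise the bias across the $\beta$ future decision epochs by writing
\begin{equation*}
J_{T,\beta}(\vrho) - \mathbb{E}^{\vrho}_{T,\alpha}[\widehat{J}_{T,\alpha,\beta}] = \sum_{s=T+1}^{T+\beta}\widehat{\gamma}^s \left(\mathbb{E}^{\vrho}_s[r_s] - \mathbb{E}^{\vrho}_{T,\alpha}[\widehat{r}_s]\right),
\end{equation*}
reducing the problem to a per-step bound on $|\mathbb{E}^{\vrho}_s[r_s] - \mathbb{E}^{\vrho}_{T,\alpha}[\widehat{r}_s]|$. To unpack the self-normalised IS weights I would introduce the \emph{idealised} estimator
\begin{equation*}
\widecheck{r}_s = \frac{1}{C_\omega}\sum_{t=T-\alpha+1}^{T}\omega^{T-t} r_t,
\end{equation*}
which is what $\widehat{r}_s$ collapses to if the empirical mixture $\mu(\vtheta):=\sum_k\omega^{T-k}\nu_{\vrho}(\vtheta|k)$ is replaced by its stationary counterpart $C_\omega\nu_{\vrho}(\vtheta|s)$. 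The per-step bias then splits via the triangle inequality as $|\mathbb{E}^{\vrho}_s[r_s]-\mathbb{E}^{\vrho}_{T,\alpha}[\widecheck{r}_s]| + |\mathbb{E}^{\vrho}_{T,\alpha}[\widehat{r}_s-\widecheck{r}_s]|$.

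For the first summand, the expectation factorises as $\mathbb{E}^{\vrho}_{T,\alpha}[\widecheck{r}_s]=\frac{1}{C_\omega}\sum_t\omega^{T-t}\mathbb{E}^{\vrho}_t[r_t]$, and adding/subtracting $\int\nu_{\vrho}(\vtheta|t)\mathbb{E}^{\pi_{\vtheta}}_s[r]d\vtheta$ gives $|\mathbb{E}^{\vrho}_t[r_t]-\mathbb{E}^{\vrho}_s[r_s]|\le(L_{\mathcal{M}}+R_{\max}L_{\nu})|t-s|$, invoking Assumption~\ref{ass:sce} on the environment term and Assumption~\ref{ass:sch} (together with $\|r\|_\infty\le R_{\max}$) on the hyper-policy term. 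For the second summand, the crucial cancellation is that $\sum_t\omega^{T-t}\nu_{\vrho}(\vtheta|t)=\mu(\vtheta)$, which collapses the $t$-sum under the integral and leaves
\begin{equation*}
|\mathbb{E}^{\vrho}_{T,\alpha}[\widehat{r}_s-\widecheck{r}_s]|\le R_{\max}\left\|\nu_{\vrho}(\cdot|s)-\tfrac{\mu(\cdot)}{C_\omega}\right\|_1\le \frac{R_{\max}L_{\nu}}{C_\omega}\sum_k\omega^{T-k}|s-k|,
\end{equation*}
after applying Assumption~\ref{ass:sch} summand-by-summand. Since the two summands share the same weighted-average structure, their sum is controlled by $(L_{\mathcal{M}}+2R_{\max}L_\nu)\cdot \frac{1}{C_\omega}\sum_t\omega^{T-t}|s-t|$.

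Writing $|s-t|=(s-T)+(T-t)$, the only algebraic identity I need to verify is the geometric bound $\sum_{j=0}^{\alpha-1}j\,\omega^j\le C_\omega\cdot \omega/(1-\omega)$, which controls the backward part uniformly in $\alpha$. Summing the per-step bound weighted by $\widehat{\gamma}^s$, combined with $\sum_s\widehat{\gamma}^s=C_\gamma(\beta)$ and $\sum_s\widehat{\gamma}^s(s-T)\le C_\gamma(\beta)/(1-\gamma)$ (obtained by exchanging the order of summation), then delivers the claimed bound. The main obstacle I anticipate is precisely the second summand: a pointwise bound on the IS ratio $\nu_{\vrho}(\vtheta|t)/\mu(\vtheta)$ inflates everything by a useless factor $\omega^{-(T-t)}$, and the argument only goes through because collapsing the $t$-sum \emph{inside} the integral against $\mu(\vtheta)$ converts the error into a clean total-variation estimate in which $L_\nu$ enters linearly rather than through a spurious inverse importance weight.
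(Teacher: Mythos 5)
Your proof is correct and follows essentially the same route as the paper's: both reduce the bias to a per-step term, split it by a triangle inequality around an intermediate quantity (you replace the behavioural mixture in the denominator by $C_\omega\nu_{\vrho}(\cdot|s)$, i.e.\ compare against the plain weighted average of past rewards, while the paper replaces the target density $\nu_{\vrho}(\cdot|s)$ in the numerator by the mixture $\overline{\nu}_{\vrho}$ --- the two pivots yield the same constants $L_{\mathcal{M}}+2R_{\max}L_\nu$), apply the two smoothness assumptions to the two resulting terms, and finish with the same weighted sum $\frac{1}{C_\omega}\sum_t \omega^{T-t}|s-t|$ controlled by the truncated geometric bounds $\frac{1}{C_\omega}\sum_{j=0}^{\alpha-1} j\,\omega^{j}\le \frac{\omega}{1-\omega}$ and $\sum_s \widehat{\gamma}^s (s-T)\le C_\gamma(\beta)/(1-\gamma)$. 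Your collapse of the $t$-sum against $\mu(\vtheta)$ into a total-variation estimate is exactly the mechanism the paper uses to avoid pointwise importance-weight bounds, so no gap remains.
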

A tighter, but more intricate bias bound,  and a derivation for the case $\omega=1$ can be found in Appendix~\ref{apx:proofs}.
Some observations are in order. First, we note the role of $\omega$ in controlling the bias: the smaller $\omega$, the smaller the bias. Second, the bound is proportional to the Lipschitz constants governing the non-starionarity of the environment and of the hyper-policy. It is worth noting that in a fully stationary setting (\ie $L_{\mathcal{M}}=L_{\nu}=0$), the estimator is unbiased.

\subsection{Variance Analysis}\label{sec:isVariance}
 Before showing the construction of the surrogate objective, we derive in this section a bound on the variance of $\overline{J}_{T,\alpha,\beta}(\vrho)$ that involves the \Renyi divergence. To this purpose, we denote with $\mathbb{V}\mathrm{ar}^{\vrho}_{T,\alpha}$ the variance under the probability distribution induced by the joint hyper-policy $\prod_{t=T-\alpha+1}^{T} \nu_{\vrho}(\cdot\vert t)$ in the MDP.
\begin{restatable}{lemma}{ppvarbound}
\label{pp:var_bound}
The variance of the objective $\overline{J}_{T,\alpha,\beta}$ can be bounded as:
{\thinmuskip=0mu
\medmuskip=0mu
\thickmuskip=0mu
\begin{align*}
    &\mathbb{V}\mathrm{ar}^{\vrho}_{T,\alpha}\left[\overline{J}_{T,\alpha,\beta}(\vrho)\right]\leq 
    2 R_{\max}^{2} \Bigg(C_\gamma(\alpha)^2 + C_\gamma(\beta)^2 \\
    &  \times d_2 \left( \frac{1}{C_\gamma(\beta)} \sum_{s=T+1}^{T+\beta} \widehat{\gamma}^s \nu_{\vrho}(\cdot|s)  \Bigg\|   \frac{1}{C_\omega} \sum_{t=T-\alpha+1}^{T} \omega^{T-t} \nu_{\vrho}(\cdot|t)  \Bigg) \right).
\end{align*}
}
\end{restatable}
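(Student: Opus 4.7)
The plan is to decouple the two components of $\overline{J}_{T,\alpha,\beta}(\vrho)=\widehat{J}_{T,\alpha,\beta}(\vrho)+\widecheck{J}_{T,\alpha}(\vrho)$ using the standard inequality $\mathbb{V}\mathrm{ar}(A+B)\le 2\mathbb{V}\mathrm{ar}(A)+2\mathbb{V}\mathrm{ar}(B)$, bound each variance by the corresponding second moment, and then handle $\widecheck{J}$ by direct boundedness of rewards and $\widehat{J}$ by an IS-to-$\alpha$-Rényi conversion. The target bound already has the shape $2R_{\max}^2(\cdot+\cdot)$, which is a strong hint that this is the right split.

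For the past term, I would use $\mathbb{V}\mathrm{ar}^{\vrho}_{T,\alpha}[\widecheck{J}_{T,\alpha}]\le \mathbb{E}^{\vrho}_{T,\alpha}[\widecheck{J}_{T,\alpha}^{\,2}]$ and then bound $|\widecheck{J}_{T,\alpha}|$ pathwise. Since $|r_t|\le R_{\max}$ and $\sum_{t=T-\alpha+1}^{T}\omega^{T-t}/C_\omega=1$, Jensen's inequality on $x\mapsto x^2$ applied to the convex combination, followed by $|\widecheck{\gamma}^t|\le 1$ and $\sum_{t=T-\alpha+1}^T |\widecheck{\gamma}^t|\le C_\gamma(\alpha)$, yields $\mathbb{V}\mathrm{ar}^{\vrho}_{T,\alpha}[\widecheck{J}_{T,\alpha}]\le R_{\max}^2 C_\gamma(\alpha)^2$, producing the first summand.

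For the future term, I would introduce the normalized mixtures
\[
P_\beta(\cdot)\;=\;\frac{1}{C_\gamma(\beta)}\sum_{s=T+1}^{T+\beta}\widehat{\gamma}^s\,\nu_{\vrho}(\cdot|s),\qquad Q_\alpha(\cdot)\;=\;\frac{1}{C_\omega}\sum_{k=T-\alpha+1}^{T}\omega^{T-k}\,\nu_{\vrho}(\cdot|k),
\]
and rewrite
\[
\widehat{J}_{T,\alpha,\beta}(\vrho)\;=\;C_\gamma(\beta)\sum_{t=T-\alpha+1}^{T}\frac{\omega^{T-t}}{C_\omega}\,\frac{P_\beta(\vtheta_t)}{Q_\alpha(\vtheta_t)}\,r_t.
\]
Since $\{\omega^{T-t}/C_\omega\}_t$ is a probability distribution, Jensen's inequality applied to $x\mapsto x^2$ gives
\[
\widehat{J}_{T,\alpha,\beta}(\vrho)^{2}\;\le\;C_\gamma(\beta)^{2}\sum_{t=T-\alpha+1}^{T}\frac{\omega^{T-t}}{C_\omega}\left(\frac{P_\beta(\vtheta_t)}{Q_\alpha(\vtheta_t)}\right)^{\!2}\!r_t^{\,2}.
\]
Taking expectations and using $r_t^2\le R_{\max}^2$, the remaining average is exactly $\mathbb{E}_{\vtheta\sim Q_\alpha}[(P_\beta(\vtheta)/Q_\alpha(\vtheta))^2]$, because marginalizing $\vtheta_t\sim\nu_{\vrho}(\cdot|t)$ against the weights $\omega^{T-t}/C_\omega$ reconstructs $Q_\alpha$. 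By the Rényi identity recalled in the Preliminaries, this equals $d_2(P_\beta\|Q_\alpha)$, giving $\mathbb{V}\mathrm{ar}^{\vrho}_{T,\alpha}[\widehat{J}_{T,\alpha,\beta}]\le R_{\max}^2 C_\gamma(\beta)^2\,d_2(P_\beta\|Q_\alpha)$. Combining with the past-term bound produces the claim.

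The main obstacle is the rewriting step that turns the MIS estimator into a clean single-proposal IS estimator: the BH-style denominator $\sum_k\omega^{T-k}\nu_{\vrho}(\vtheta_t|k)$ must be recognized, up to the normalization $C_\omega$, as $Q_\alpha$, and the weighted sum over sampling epochs must be recast as a draw from $Q_\alpha$ so that the Rényi moment identity applies cleanly; verifying absolute continuity of $P_\beta$ with respect to $Q_\alpha$ is implicit here (both mixtures share the same component support when $\omega>0$). Everything else reduces to Jensen's inequality and the boundedness of the rewards.
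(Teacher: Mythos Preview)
Your proposal is correct and follows essentially the same route as the paper: split the variance via $\mathbb{V}\mathrm{ar}[A+B]\le 2\mathbb{V}\mathrm{ar}[A]+2\mathbb{V}\mathrm{ar}[B]$, bound each piece by its second moment, handle $\widecheck{J}_{T,\alpha}$ by the bounded-reward argument, and for $\widehat{J}_{T,\alpha,\beta}$ apply Jensen (the paper calls it Cauchy--Schwarz) to the convex combination over $t$, then marginalize $\vtheta_t$ against the weights $\omega^{T-t}/C_\omega$ to recover the mixture $Q_\alpha$ and identify the $2$-R\'enyi moment. The only cosmetic difference is that the paper bounds the past term via $\omega^{T-t}/C_\omega\le 1$ directly rather than invoking Jensen, but the end result is the same.
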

The variance bound resembles the ones usually provided in the context of off-policy estimation and learning~\citep[\eg][]{metelli2018policy, papini2019optimistic, metelli2020importance}. The first addendum accounts for the variance of the estimator component $\widecheck{J}_{T,\alpha}({\vrho}) $ that does not involve importance sampling, whereas the second refers to $ \widehat{J}_{T,\alpha,\beta}({\vrho})$, based on importance sampling. Indeed, this latter term comprises the exponentiated $2$-\Renyi divergence between two mixture hyper-policies. Unfortunately, even in presence of convenient distributions, like Gaussians, the \Renyi divergence between mixtures does not admit a closed form~\cite{papini2019optimistic}. In Appendix~\ref{app:var_bound}, we discuss several approaches, based on variational upper-bounds, to provide a usable version of such a divergence. In the following, we report the upper-bound that we will use in practice.
\begin{restatable}{lemma}{pdivbound}
\label{pp:divergence_bound}
    The divergence between mixtures of Lemma~\ref{pp:var_bound} can be bounded as:
\begin{align*}
    d_2 & \left(\frac{1}{C_\gamma(\beta)} \sum_{s=T+1}^{T+\beta} \widehat{\gamma}^s \nu_{\vrho}(\cdot|s) \left\Vert \frac{1}{C_\omega} \sum_{t=T-\alpha+1}^{T} \omega^{T-t} \nu_{\vrho}(\cdot|t)  \right.\right) 
    \\
    & \;\; \le \frac{C_\omega}{C_\gamma(\beta)^2} \underbrace{\left(\sum_{s=T+1}^{T+\beta} \frac{\widetilde{\gamma}^{s}}{\left(\sum\limits_{ t=T-\alpha+1}^{ T}\frac{\omega^{T-t}}{d_{2}(\nu_{\vrho}(\cdot\vert s)\left\Vert \nu_{\vrho}(\cdot\vert t)\right)}\right)^{\frac{1}{2}}}\right)^2}_{{B}_{T,\alpha,\beta}(\vrho)}.
\end{align*}
\end{restatable}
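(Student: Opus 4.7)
The plan is to unfold $d_2$ by definition and then reduce the bound to a single scalar inequality per future step $s$, which is established via a Jensen-type argument based on concavity of the weighted harmonic mean.

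First, I would write, by definition of the exponentiated $2$-R\'enyi divergence,
\begin{align*}
d_2(P\Vert Q) = \frac{C_\omega}{C_\gamma(\beta)^2}\int \frac{\bigl(\sum_{s=T+1}^{T+\beta} \widehat{\gamma}^s \nu_{\vrho}(x\vert s)\bigr)^2}{\sum_{t=T-\alpha+1}^{T} \omega^{T-t}\nu_{\vrho}(x\vert t)}\,\mathrm{d}x,
\end{align*}
where $P$ and $Q$ denote the two mixtures appearing in the statement. Expanding the square in the numerator produces cross terms of the form $\int \nu_{\vrho}(x\vert s)\nu_{\vrho}(x\vert s')/\sum_t \omega^{T-t}\nu_{\vrho}(x\vert t)\,\mathrm{d}x$. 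Applying the Cauchy--Schwarz inequality in integral form (with respect to the measure $\mathrm{d}x/\sum_t \omega^{T-t}\nu_{\vrho}(x\vert t)$) bounds each such term by $\sqrt{I(s)I(s')}$, where I set $I(s) := \int \nu_{\vrho}(x\vert s)^2/\sum_t \omega^{T-t}\nu_{\vrho}(x\vert t)\,\mathrm{d}x$. This collapses the double sum into $(\sum_s \widehat{\gamma}^s \sqrt{I(s)})^2$, matching the outer square in the claimed bound.

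The technical core is then to show, for each $s \in \llbracket T+1,T+\beta\rrbracket$,
\begin{align*}
I(s) \;\le\; \Bigl(\textstyle\sum_{t=T-\alpha+1}^{T} \omega^{T-t}/d_2(\nu_{\vrho}(\cdot\vert s)\Vert \nu_{\vrho}(\cdot\vert t))\Bigr)^{-1}.
\end{align*}
I would rewrite $I(s) = \mathbb{E}_{x\sim \nu_{\vrho}(\cdot\vert s)}\bigl[\,1/\sum_t \omega^{T-t}/r_t(x)\,\bigr]$, where $r_t(x) := \nu_{\vrho}(x\vert s)/\nu_{\vrho}(x\vert t)$ so that $\mathbb{E}_{x\sim \nu_{\vrho}(\cdot\vert s)}[r_t(x)] = d_2(\nu_{\vrho}(\cdot\vert s)\Vert \nu_{\vrho}(\cdot\vert t))$. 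It then suffices to invoke the concavity of the weighted harmonic mean $h(r_1,\dots,r_\alpha) := \bigl(\sum_t w_t/r_t\bigr)^{-1}$ on $\mathbb{R}_+^\alpha$: by Jensen's inequality,
\begin{align*}
\mathbb{E}[h(r(x))] \;\le\; h(\mathbb{E}[r(x)]),
\end{align*}
which is exactly the desired bound on $I(s)$.

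The main obstacle is justifying the concavity of $h$. I would do this via the variational identity
\begin{align*}
h(r) = \min_{\lambda \in \Delta_{\alpha-1}} \sum_{t} \lambda_t^2 \, r_t / w_t,
\end{align*}
with optimum attained at $\lambda_t^\star \propto w_t/r_t$ (verified by a one-line Lagrange/Cauchy--Schwarz computation), which writes $h$ as the pointwise minimum of affine functions of $r$ and hence concave; alternatively, a Hessian computation reduces negative semidefiniteness to a direct Cauchy--Schwarz inequality. Combining the three ingredients---unfolding $d_2$, Cauchy--Schwarz on the cross terms, and the Jensen bound on $I(s)$---yields the stated inequality (with $\widetilde{\gamma}^s = \widehat{\gamma}^s$).
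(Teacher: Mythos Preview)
Your proof is correct and follows the same two-step decomposition as the paper: first control the ``future'' mixture $\Psi$ component-wise, then control the ``past'' mixture $\Phi$ for each fixed future index $s$. The paper, however, does not unfold the definition of $d_2$ directly. Instead it invokes a general variational upper bound on $d_\alpha(\Psi\Vert\Phi)$ due to \citet{papini2019optimistic} (their Lemma~4, restated in the paper as Lemma~\ref{pp:papini_lemma}) and then specializes it via the ``two steps $\psi$ first'' scheme (Proposition~\ref{pp:var_bound_2_steps_psi_first}): first $d_2(\Psi\Vert\Phi)\le\bigl(\sum_i\zeta_i\,d_2(P_i\Vert\Phi)^{1/2}\bigr)^2$, then $d_2(P_i\Vert\Phi)\le\bigl(\sum_j\mu_j/d_2(P_i\Vert Q_j)\bigr)^{-1}$, both obtained by Lagrange-optimizing the variational parameters $\psi_{ij},\phi_{ij}$.

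Your Cauchy--Schwarz step is exactly the first of these inequalities for $\alpha=2$, proved as a triangle inequality in $L^2$ rather than via Lagrange multipliers; your Jensen step is exactly the second, and your variational identity $h(r)=\min_{\lambda\in\Delta}\sum_t\lambda_t^2 r_t/w_t$ is precisely the $L{=}1$, $\alpha{=}2$ specialization of Lemma~\ref{pp:papini_lemma}. So the two routes coincide at every intermediate bound; what differs is the packaging. Your argument is more elementary and self-contained (no external lemma, no general-$\alpha$ machinery), while the paper's route is more modular and immediately yields the analogous bound for any $\alpha\ge 1$ and situates the result within the family of variational bounds compared in Appendix~\ref{app:var_bound}.
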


\subsection{Surrogate Objective}\label{subsec:algo_optim}
The direct optimization of the objective $\overline{J}_{T,\alpha,\beta}(\vrho)$ makes the hyper-policy overfit the non-stationary process on the last $\alpha$ steps. To allow for a better generalization on future unseen variations, following the idea of~\citet{metelli2018policy}, we regularize the objective with the bound on the variance of Lemma~\ref{sec:isVariance}. The following concentration bound, based on Cantelli's inequality, is the theoretical grounding of our surrogate objective.
\begin{restatable}{thr}{thrlbound}\label{thr:thrlbound}
\label{th:rl_bound}
    For every $\delta\in (0,1)$, with probability at least $1-\delta$, it holds that: 
\begin{align*}
    \mathbb{E}_{T,\alpha}^{\vrho}&\left[\overline{J}_{T,\alpha,\beta}(\vrho) \right] 
    \geq 
    \overline{J}_{T,\alpha,\beta}(\vrho) \\
    & - \sqrt{\frac{1-\delta}{\delta} 2R_{\max}^2 \left( C_\gamma(\alpha)^2 + C_\omega {B}_{T,\alpha,\beta}(\vrho)\right)}.
\end{align*}
\end{restatable}

We are now ready to construct the surrogate objective function and show how to optimize it. Following the path of~\citet{metelli2018policy}, we take an \emph{uncertainty-averse} approach, by maximizing a probabilistic lower bound of the quantity of interest, \ie the one presented in Theorem~\ref{thr:thrlbound}. Renaming $\lambda = \sqrt{\frac{1-\delta}{\delta} 2 R_{\max}^2 }$,  and treating it as a hyperparameter, we get the following surrogate objective:
\begin{align}
    \mathcal{L}_{\lambda}(\vrho) = \overline{J}_{T,\alpha,\beta}(\vrho) 
     - \lambda  \sqrt{C_\gamma(\alpha)^2+C_\omega B_{T,\alpha,\beta}(\vrho)}.
    \label{eq:objective}
\end{align}
In order to optimize this objective, we use a policy-gradient approach that is discussed in Appendix~\ref{app:fut_ret_grad}. We call this algorithm POLIS (Policy Optimization in Lifelong learning through Importance Sampling) and provide its pseudocode in \cref{alg:cap}. 

\begin{algorithm}[t]
\small
\caption{Lifelong learning with POLIS}\label{alg:cap}
\textbf{Input}: steps behind $\alpha$, steps ahead $\beta$, regularization $\lambda$, discount factor $\omega$, training period $h$, training epochs $N$,
\begin{algorithmic}[1]
\State Initialize $\nu_{\boldsymbol{\rho}}$, $t \leftarrow 0$
\While{True}  \Comment{Possibly never-ending loop}
\State Sample $\boldsymbol{\theta}_t\sim\nu_{\rho}(t)$
\State Collect new state $s_t$ and reward $r_t$ using $\pi_{\boldsymbol{\theta}_t}$
\If{$t$ mod $h = 0$ and $t>1$}
    \For{\texttt{$i\in \{1,\dots,N\}$}} \Comment{Training loop of POLIS}
        \State Compute $\widecheck{J}_{t,\alpha}({\boldsymbol{\rho}})$,  $\overline{J}_{t,\alpha,\beta}({\boldsymbol{\rho}})$ and  $B_{t,\alpha,\beta}(\boldsymbol{\rho})$ 
        \State $\boldsymbol{\rho} \leftarrow \arg\max_{\boldsymbol{\rho}} \mathcal{L}_{\lambda}(\boldsymbol{\rho})$ 
        \Comment{See \cref{eq:objective}}
     \EndFor
\EndIf
\State $t \leftarrow t+1$
\EndWhile
\end{algorithmic}
\end{algorithm}


\section{Related Works}\label{sec:relatedWorks}

The problem of lifelong RL is not new to the community.
Nevertheless, the term encapsulates problems which can have slightly differing definitions, thus hindering the comparison between existing approaches.
In this section, we will briefly discuss the most relevant ones. A more complete overview can be found in \cite{padakandla2021survey}.
Lifelong RL approaches handle finite-horizon settings \cite{hallak2015contextual,ortner2020uclr2} as well as infinite-horizon settings. 
In this second case, many works focus on the detection of abrupt changes in the dynamics \cite{daSilva2006context, hadoux2014sequential} or on scenarios where the non-stationarity arises from switching between stationary dynamics, where the number of such dynamics is known \cite{choi2000hidden}. 

Non-stationarity in the MDP dynamics is not only bound to LL, since it is also a core element of \textit{Continual Learning}.
To adapt to the evolving environment, continually learning agents need to find structure in the world to tackle new tasks by decomposing them in smaller sub-problems through function composition \cite{griffiths2019subtasks} or by extracting meaningful information in the form of abstract concepts \cite{zhang2018decoupling, Francois-Lavet2019abstract}.
Other approaches focus on capturing task-agnostic underlying dynamics of the world, by building auxiliary tasks like reward prediction \cite{jaderberg2017rew-pred} or using inverse dynamics prediction
\cite{Shelhamer2017dyn-pred} to provide denser training signal.
A general overview of continual RL approaches can be found in \cite{khetarpal2020towards}.

Another relevant approach to Lifelong RL is Meta RL, which leverages past experience to learn new skills more efficiently, i.e. using a small amount of new data. 
Usual Meta RL algorithms can be adapted to non-stationarity by modelling the consecutive tasks as a Markov chain model \cite{al-shedivat2018continuous} or using experience replay \cite{riemer2018learning}. 

Lastly, more similar to our approach, is the one of \citet{chandak2020optimizing} where the policy is trained to optimize the future predicted performance. To this end, the past performance is first of all estimated through importance sampling and then used to forecast future performance. All these steps are differentiable, which allows optimizing the policy through gradient ascent. 
The  authors  propose  two  algorithms, Pro-OLS, forecasting the performance using an ordinary least-squares regression and Pro-WLS, where the regression takes into account the importance weights inside a weighted least-squares.
The latter reduces the variance of the estimates at the expense of adding some bias.
One major difference with our approach is that their method is designed for {episodic} RL, where the non-stationarity arises from one episode to the next. We instead consider a truly lifelong framework, where that are no episodes and non-stationarity arises at the single step level.
Our approach is different for three other reasons. First, 
while our estimate of the $\beta$-step ahead performance has in common with the aforementioned paper the use of importance sampling, our objective is however greatly different as 
we add an extra discounting parameter to control the bias due to non-stationarity and two terms, the $\alpha$-step behind performance and a variance regularization.
Second,
our surrogate objective can be optimized at {any point} in time, meaning that if a significant shift in dynamics is detected, one has the opportunity to retrain the algorithm suddenly.
Third, 
we consider a parameter-based approach in which the hyper-policy depends only upon time, the policy may thus change at every step. \citet{chandak2020optimizing} consider an action-based approach where policy's parameters are fixed during an episode.
\section{Experiments}\label{sec:experiments}
In this section, we report the experimental evaluation of our algorithm in comparison with state-of-the-art baselines.

\subsection{Lifelong Learning Framework}
The schedule for a lifelong interaction with the environment is divided in two periods. In the first, which we refer to as \textit{behavioural period}, a behavioural hyper-policy is queried to sample data in order to gather enough samples to compute the first {$\alpha$-step behind expected return}.
In the second period, referred to as \textit{target period}, the agent continues interacting with same environment, but is now training its hyper-policy every few steps (50 in all experiments) for a given number of gradient steps (100 in all experiments).
For all tasks, we set $\gamma=\omega=1$.\footnote{The code is available at \url{https://github.com/pierresdr/polis}.} 

We consider a particular subclass of non-stationary environments, frequently encountered in practice. The state $x = (x^c,x^u)$ is decomposed into a controllable $x^c$ and a non-controllable $x^u$ part. The controllable part evolves according to stationary dynamics and depends on the action $P^c((x')^c|x^c,a)$. The non-controllable part instead is not affected by the action and follows non-stationary dynamics $P_t^u((x')^u|x^u)$.
\begin{ass}
	\label{ass:factorable_state}
The transition model $P= (P_t)_{t \in \Nat}$ factorizes as follows for every $x = (x^c,x^u) \in \xs$, $a \in \As$, and $t \in \Nat$:
\begin{equation}
	P_t(x'|x,a) = P^c((x')^c|x^c,x^u,a) P_t^u((x')^u|x^u) .
\end{equation}
\end{ass}
Under the assumption, we can sample new trajectories from the last $\alpha$ steps, where the non-stationary part of the state $x^u$ is kept fixed but the sampled policy at each time and therefore the stationary controllable part of the state $x^c$ changes. Therefore, we have access to the value of the gradient of the $\alpha$-step behind expected return by direct estimation, without requiring importance sampling. 

\subsection{Trading Environment}
The first task is the daily trading of the EUR-USD (€/\$) currency pair from the Foreign Exchange (Forex).
Following \cite{bisi2020foreign}, we allow the agent to trade up to a fixed quantity of 100k\$ USD with a per-transaction fee $f$ of 1\$. 
The agent has a continuous actions space in $[-1,1]$, where $1$ and $-1$ correspond to buy or sell with the maximum order size, while $0$ corresponds to staying flat.
We do not model the effect of our agent's trades on the market,\footnote{We assume that the order size of the agent is negligible \wrt the market liquidity.} thus satisfying \cref{ass:factorable_state}.
The state of the agent is composed of its actual portfolio ($x_{t}^{c}$, which also corresponds to its previous action) and the current rate of the currency ($x_{t}^{u}$).
The reward is defined as $r_{t}=a_t(x_{t+1}^{u}-x_{t}^{u})-f\lvert a_t - x_{t}^{c}\rvert$.

We consider three datasets of historical data, 2009-2012, 2013-2016, and 2017-2020; each period having a little more than $1000$ data points. $\alpha$ is set to 500 and we consider a {target period} of 500 steps.

Finding a satisfactory set of hyperparameters (in the sense of parameters of the algorithm itself, not parameters of the hyper-policy) can be problematic in our lifelong scenario. Indeed, here, there is no distinction between training and testing since the parameters are continually updated. Selecting hyperparameters for future interactions with the environment by evaluating past performances is thus prone to overfitting on the past performance. 
To account for this problem, we compare two hyperparameter selection approaches.
In the first, we select the best performing hyperparameters from the dataset 2009-2012 and evaluate the selection on the other two datasets.
In the second approach, we both select the hyperparameters and evaluate on the last two datasets.

The trading of the EUR-USD currency pair is a highly complex task. To give more chance to the algorithm to exploit potential patterns of the series, we provide another trading task on a simulated series. The framework is the same, only changes the underlying rate process which will now be a Vasicek process. In this scenario, the rate $(p_t)_{t\geq1}$ satisfies $ p_{t+1} = 0.9 p_t + u_t$, where $u_t\sim\mathcal{N}(0,1)$. On this task we will test the set of hyperparameters selected on the EUR-USD.

\subsection{Dam Environment}
The second environment is a water resource management problem. A dam is used to save water from rains and possibly release it to meet a certain demand for water (\eg the needs of a town).
We model the environment following \cite{castelletti2010tree, pmlr-v80-tirinzoni18a}.
The inflow (\eg rain) is the non-stationary process and the agent has obviously no impact on it, thus satisfying \cref{ass:factorable_state}. The mean inflow follows one of either 3 profiles given in \cref{app:dam_dataset}. 
The state observed by the agent is the day's lake level. The agent does not observe the day of the year, contrarily to \cite{pmlr-v80-tirinzoni18a}, in order to ensure non-stationarity. 
The agent's action is continuous and corresponds to selecting the daily amount of water to release in order to avoid flooding and meet the demand.
Considering the flooding level $F=300$ and the daily demand for water $D=10$, the penalty that the agent gets for each is respectively $c_F = (\max(x-F,0))^2$ and $c_D = (\max(a-D,0))^2$, where $a$ is the action of the agent and $x$ the current lake level.
The final cost is a convex combination of those costs, whose weights depend on the inflow profile (see \cref{app:dam_dataset}). In this environment, $\alpha$ is set to 1000 in order to include enough years of past data in the estimator. We provide results for a {target period} of 500 steps. Because the results for this environment are less sensitive to the choice of hyperparameters, we only select them according to the performance given the first inflow profile.

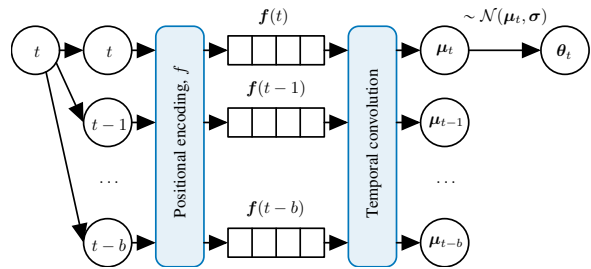
\begin{figure}
    \centering
    \usetikzlibrary{arrows}

\def\x{0}
\def\y{0}
\def\scale{0.32}

\begin{tikzpicture}[line cap=round,line join=round,>=triangle 45,x=1cm,y=1cm]


\clip(\x,\y) rectangle (25*\scale+\x,11.5*\scale+\y);

\draw [line width=\scale*2pt] (1*\scale+\x,9*\scale+\y) circle (\scale*1cm);
\draw [->,line width=\scale*2pt] (1.44*\scale+\x,8.1*\scale+\y) -- (3*\scale+\x,1*\scale+\y);
\draw [->,line width=\scale*2pt] (1.86*\scale+\x,8.49*\scale+\y) -- (3*\scale+\x,6*\scale+\y);

\draw (0.7*\scale+\x,9.4*\scale+\y) node[anchor=north west,scale=\scale*2] {$t$};
\draw (3.7*\scale+\x,9.4*\scale+\y) node[anchor=north west,scale=\scale*2] {$t$};
\draw (3.1*\scale+\x,6.4*\scale+\y) node[anchor=north west,scale=\scale*2] {$t-1$};
\draw (3.1*\scale+\x,1.4*\scale+\y) node[anchor=north west,scale=\scale*2] {$t-b$};
\draw (3.4*\scale+\x,3.8*\scale+\y) node[anchor=north west,scale=\scale*2] {$\dots$};

\draw[line width=\scale*2pt,rounded corners,color=vibrantBlue,fill=vibrantBlue,fill opacity=0.1] (14*\scale+\x,0*\scale+\y) rectangle (16*\scale+\x,10*\scale+\y) {};
\draw (14.5*\scale+\x,1.8*\scale+\y) node[anchor=north west,scale=\scale*2,rotate=90] {Temporal convolution};

\draw[line width=\scale*2pt,rounded corners,color=vibrantBlue,fill=vibrantBlue,fill opacity=0.1] (6*\scale+\x,0*\scale+\y) rectangle (8*\scale+\x,10*\scale+\y) {};
\draw (6.5*\scale+\x,1.8*\scale+\y) node[anchor=north west,scale=\scale*2,rotate=90] {Positional encoding, $f$};

\draw [->,line width=\scale*2pt] (2*\scale+\x,9*\scale+\y) -- (3*\scale+\x,9*\scale+\y);
\draw [->,line width=\scale*2pt] (5*\scale+\x,9*\scale+\y) -- (6*\scale+\x,9*\scale+\y);
\draw [->,line width=\scale*2pt] (5*\scale+\x,6*\scale+\y) -- (6*\scale+\x,6*\scale+\y);
\draw [->,line width=\scale*2pt] (5*\scale+\x,1*\scale+\y) -- (6*\scale+\x,1*\scale+\y);


\draw[line width=\scale*2pt,] (9*\scale+\x,9.6*\scale+\y) rectangle (13*\scale+\x,8.4*\scale+\y) {};
\draw [line width=\scale*2pt] (10*\scale+\x,8.4*\scale+\y) -- (10*\scale+\x,9.6*\scale+\y);
\draw [line width=\scale*2pt] (11*\scale+\x,8.4*\scale+\y) -- (11*\scale+\x,9.6*\scale+\y);
\draw [line width=\scale*2pt] (12*\scale+\x,8.4*\scale+\y) -- (12*\scale+\x,9.6*\scale+\y);
\draw (10*\scale+\x,11*\scale+\y) node[anchor=north west,scale=\scale*2,] {$\boldsymbol{f}(t)$};

\draw[line width=\scale*2pt,] (9*\scale+\x,6.6*\scale+\y) rectangle (13*\scale+\x,5.4*\scale+\y) {};
\draw [line width=\scale*2pt] (10*\scale+\x,5.4*\scale+\y) -- (10*\scale+\x,6.6*\scale+\y);
\draw [line width=\scale*2pt] (11*\scale+\x,5.4*\scale+\y) -- (11*\scale+\x,6.6*\scale+\y);
\draw [line width=\scale*2pt] (12*\scale+\x,5.4*\scale+\y) -- (12*\scale+\x,6.6*\scale+\y);
\draw (9.5*\scale+\x,8*\scale+\y) node[anchor=north west,scale=\scale*2,] {$\boldsymbol{f}(t-1)$};

\draw[line width=\scale*2pt,] (9*\scale+\x,1.6*\scale+\y) rectangle (13*\scale+\x,0.4*\scale+\y) {};
\draw [line width=\scale*2pt] (10*\scale+\x,0.4*\scale+\y) -- (10*\scale+\x,1.6*\scale+\y);
\draw [line width=\scale*2pt] (11*\scale+\x,0.4*\scale+\y) -- (11*\scale+\x,1.6*\scale+\y);
\draw [line width=\scale*2pt] (12*\scale+\x,0.4*\scale+\y) -- (12*\scale+\x,1.6*\scale+\y);
\draw (9.5*\scale+\x,3*\scale+\y) node[anchor=north west,scale=\scale*2,] {$\boldsymbol{f}(t-b)$};

\draw [->,line width=\scale*2pt] (8*\scale+\x,9*\scale+\y) -- (9*\scale+\x,9*\scale+\y);
\draw [->,line width=\scale*2pt] (8*\scale+\x,6*\scale+\y) -- (9*\scale+\x,6*\scale+\y);
\draw [->,line width=\scale*2pt] (8*\scale+\x,1*\scale+\y) -- (9*\scale+\x,1*\scale+\y);
\draw [->,line width=\scale*2pt] (13*\scale+\x,9*\scale+\y) -- (14*\scale+\x,9*\scale+\y);
\draw [->,line width=\scale*2pt] (16*\scale+\x,9*\scale+\y) -- (17*\scale+\x,9*\scale+\y);
\draw [->,line width=\scale*2pt] (13*\scale+\x,6*\scale+\y) -- (14*\scale+\x,6*\scale+\y);
\draw [->,line width=\scale*2pt] (16*\scale+\x,6*\scale+\y) -- (17*\scale+\x,6*\scale+\y);
\draw [->,line width=\scale*2pt] (13*\scale+\x,1*\scale+\y) -- (14*\scale+\x,1*\scale+\y);
\draw [->,line width=\scale*2pt] (16*\scale+\x,1*\scale+\y) -- (17*\scale+\x,1*\scale+\y);

\draw [line width=\scale*2pt] (4*\scale+\x,9*\scale+\y) circle (\scale*1cm);
\draw [line width=\scale*2pt] (18*\scale+\x,9*\scale+\y) circle (\scale*1cm);
\draw [line width=\scale*2pt] (4*\scale+\x,6*\scale+\y) circle (\scale*1cm);
\draw [line width=\scale*2pt] (18*\scale+\x,6*\scale+\y) circle (\scale*1cm);
\draw [line width=\scale*2pt] (4*\scale+\x,1*\scale+\y) circle (\scale*1cm);
\draw [line width=\scale*2pt] (18*\scale+\x,1*\scale+\y) circle (\scale*1cm);

\draw (17.1*\scale+\x,1.5*\scale+\y) node[anchor=north west,scale=\scale*2] {$\boldsymbol{\mu}_{t-b}$};
\draw (17.1*\scale+\x,6.5*\scale+\y) node[anchor=north west,scale=\scale*2] {$\boldsymbol{\mu}_{t-1}$};
\draw (17.4*\scale+\x,9.5*\scale+\y) node[anchor=north west,scale=\scale*2] {$\boldsymbol{\mu}_{t}$};
\draw (17.4*\scale+\x,3.8*\scale+\y) node[anchor=north west,scale=\scale*2] {$\dots$};

\draw [->,line width=\scale*2pt] (19*\scale+\x,9*\scale+\y) -- (22*\scale+\x,9*\scale+\y);

\draw (22.4*\scale+\x,9.5*\scale+\y) node[anchor=north west,scale=\scale*2] {$\vtheta_{t}$};

\draw [line width=\scale*2pt] (23*\scale+\x,9*\scale+\y) circle (\scale*1cm);

\draw (18.5*\scale+\x,11*\scale+\y) node[anchor=north west,scale=\scale*2,] {$\sim\mathcal{N}(\boldsymbol{\mu}_{t}, \boldsymbol{\sigma})$};

\end{tikzpicture}
    \caption{The hyper-policy queried at time $t$. First the last $b$ times are appended to $t$, $b$ the {receptive field} of the {temporal convolution}. They are fed to a positional encoding which outputs a vector $\boldsymbol{f}(t)$ for each. The latter are then fed to a {temporal convolution} which returns the mean $\boldsymbol{\mu}_t$ of $\vtheta_t$.}
    \label{fg:hyper_policy}
\end{figure}

\begin{figure*}[t]
\centering
\begin{minipage}{.62\textwidth}
  \centering
  \begin{subfigure}{.49\textwidth}
  \centering
  \includegraphics[width=\linewidth]{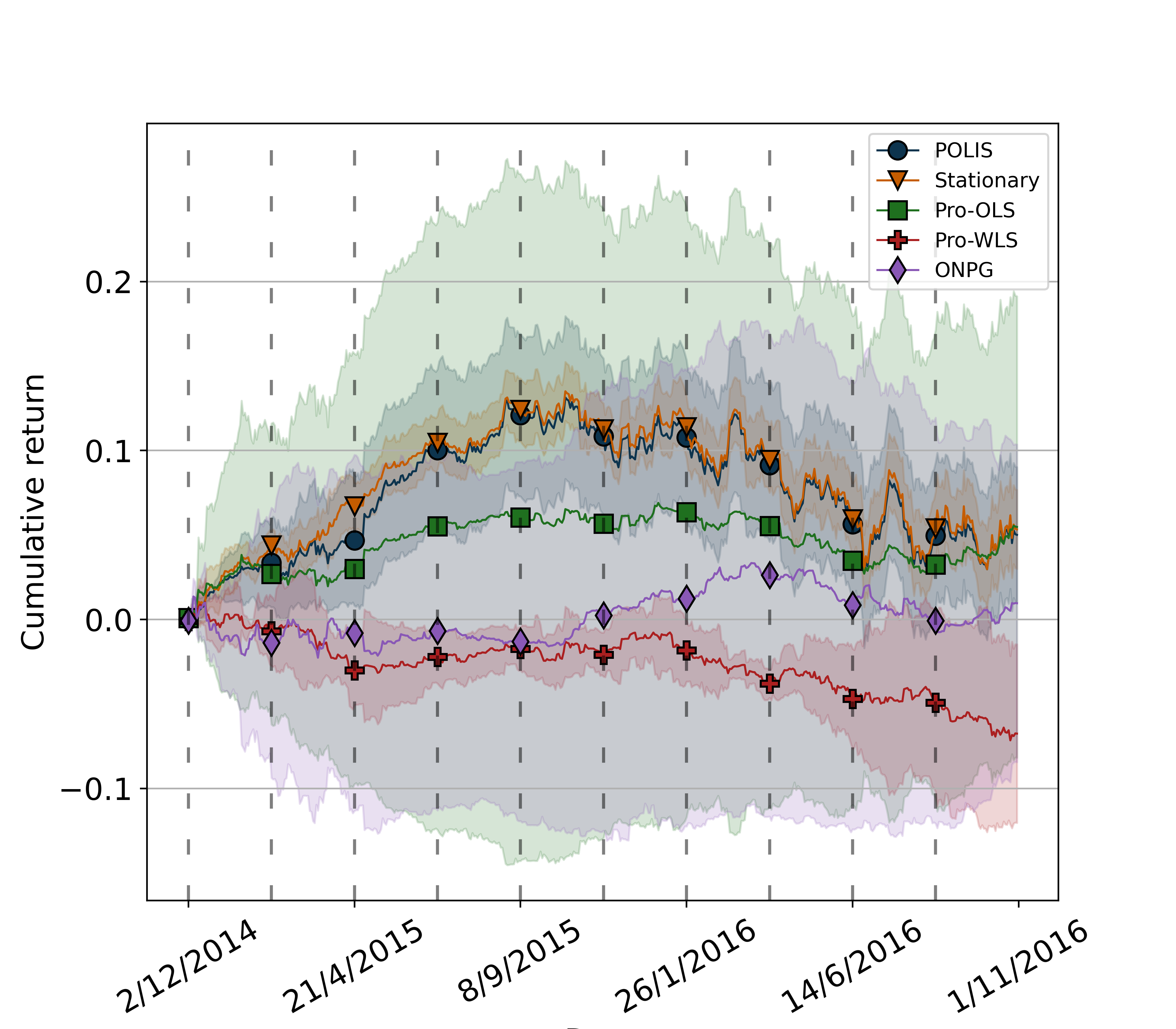}
\end{subfigure}%
\hfill
\begin{subfigure}{.49\textwidth}
  \centering
  \includegraphics[width=\linewidth]{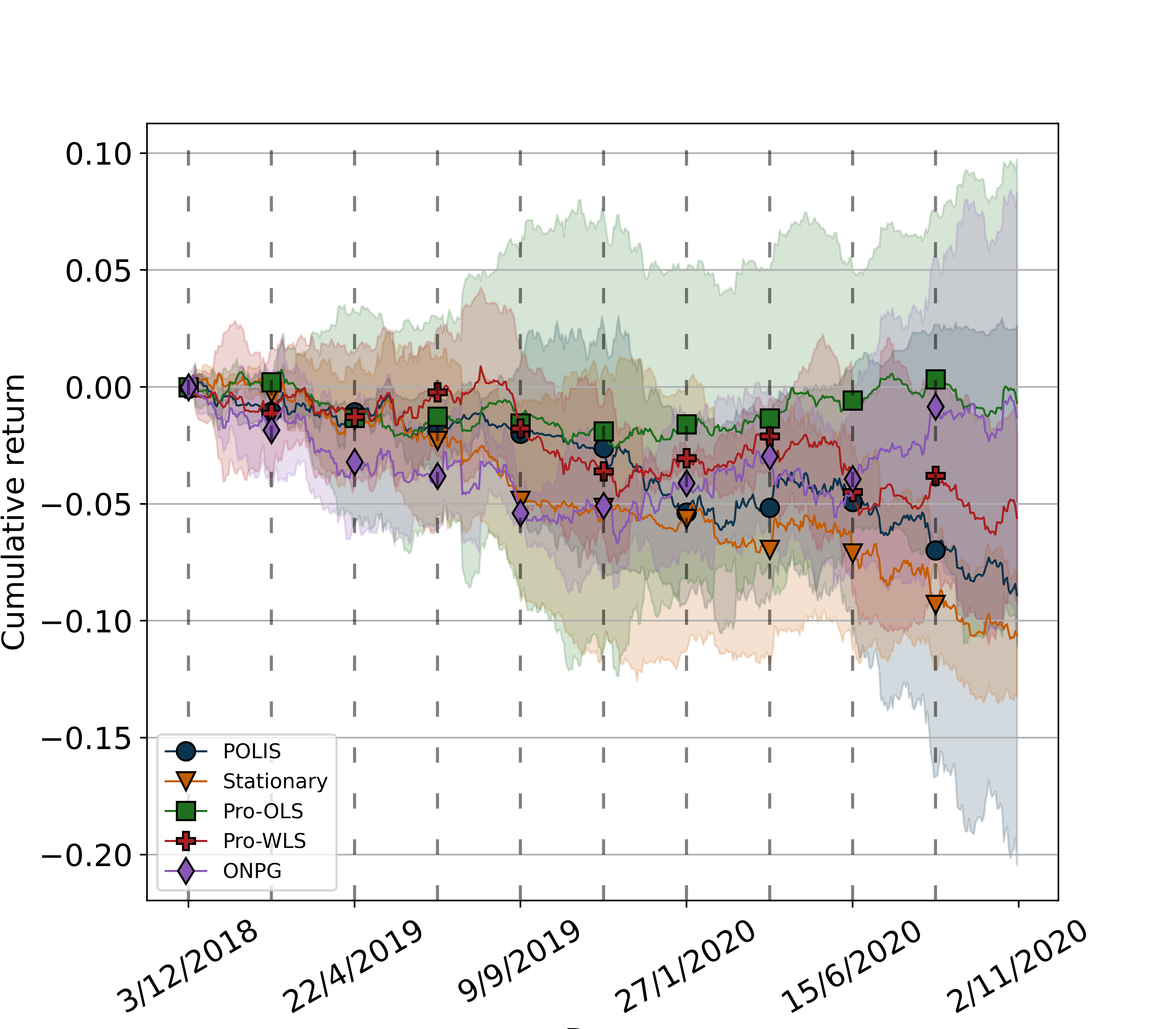}
\end{subfigure}
\captionof{figure}{Lifelong learning on the EUR-USD currency pair (left 2013-2016, right 2017-2020). Mean cumulative returns on the target period with one standard deviation shaded area, over 3 seeds. Vertical dashed line indicate retrain. Hyperparameters selected on the return of 2009-2012.} \label{fig:exp_trading_T1}
\end{minipage}%
\hfill
\begin{minipage}{.36\textwidth}
  \centering
  \includegraphics[width=.843\linewidth]{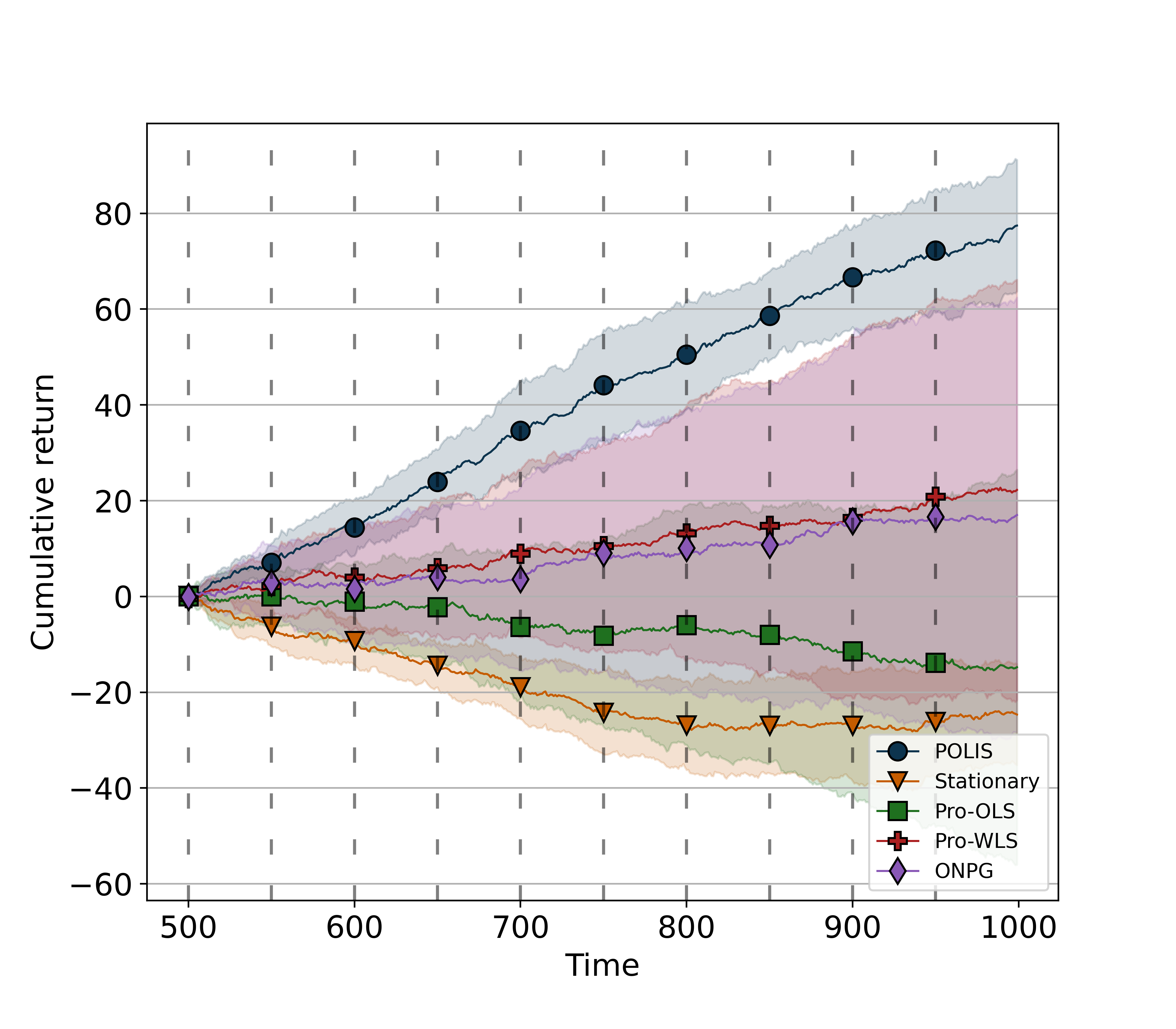}
  \captionof{figure}{Lifelong learning on the Vasicek process. Mean cumulative returns on the target period with one standard deviation shaded area, over 10 seeds. Vertical dashed line indicate retrain.}\label{fig:exp_vasicek}
  \label{fig:test2}
\end{minipage}
\end{figure*}

\begin{figure*}[t]
\centering
\begin{minipage}{.62\textwidth}
  \centering
  \begin{subfigure}{.49\textwidth}
  \centering
  \includegraphics[width=\linewidth]{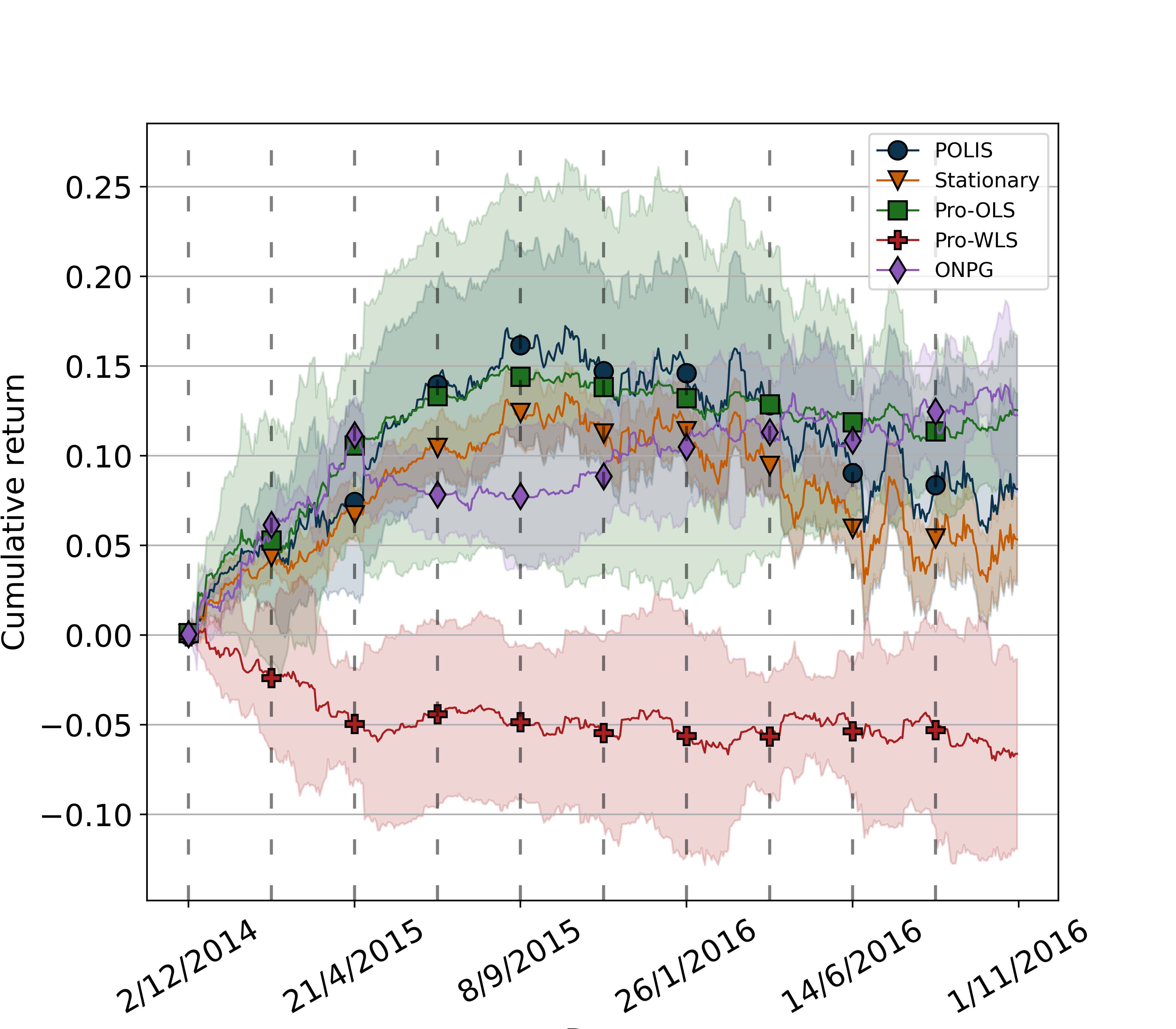}
\end{subfigure}%
\hfill
\begin{subfigure}{.49\textwidth}
  \centering
  \includegraphics[width=\linewidth]{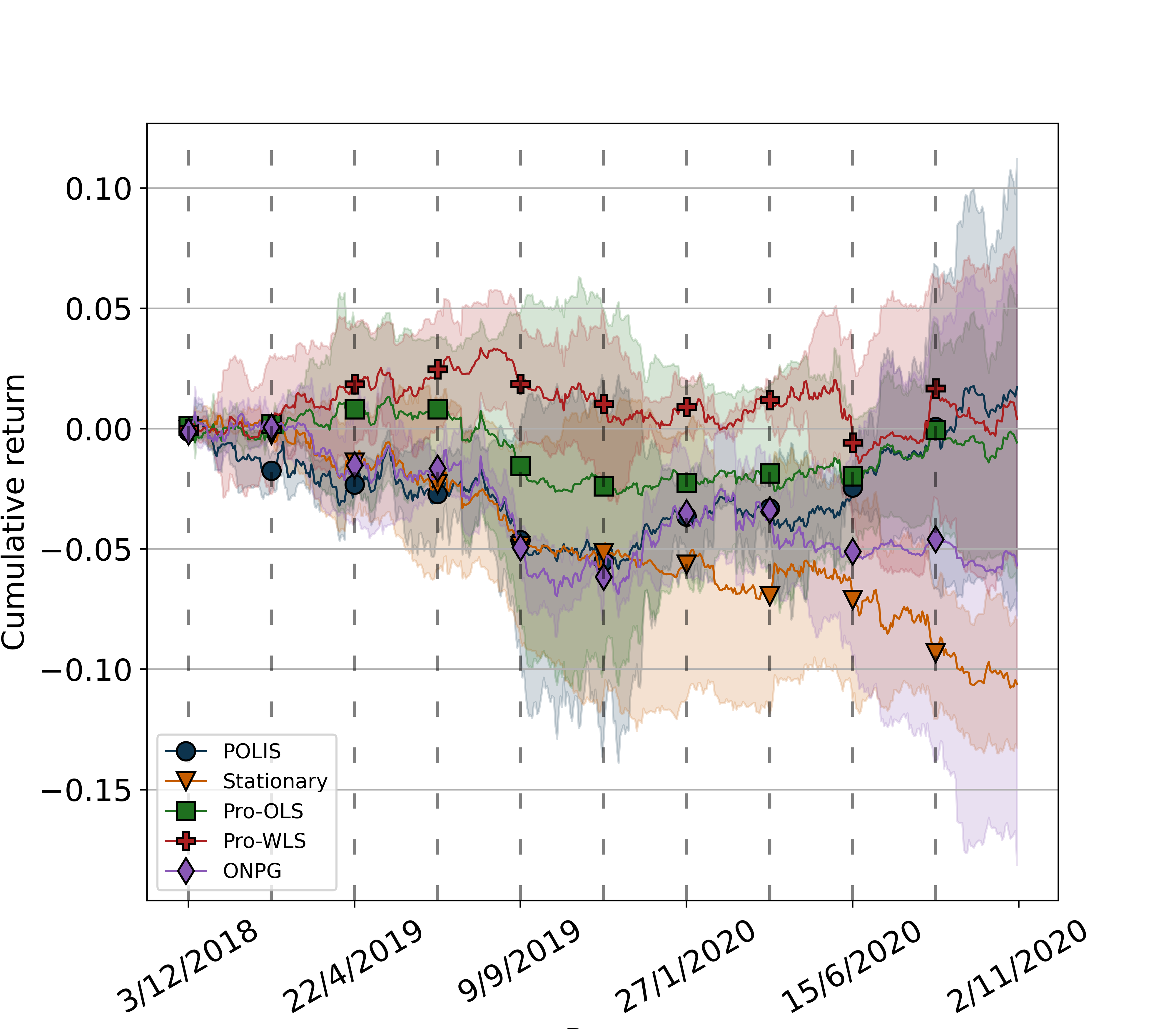}
\end{subfigure}
\captionof{figure}{Lifelong learning on the EUR-USD currency pair (left 2013-2016, right 2017-2020). Mean cumulative returns on the target period with one standard deviation shaded area, over 3 seeds. Vertical dashed line indicate retrain. Hyperparameters selected on the returns of 2013-2016 and 2017-2020.}\label{fig:exp_trading_T2T3}
\end{minipage}%
\hfill
\begin{minipage}{.36\textwidth}
  \centering
  \resizebox{\textwidth}{!}{%
    \begin{tabular}{l|lll}
          & \multicolumn{1}{c}{Inflow 1}       & \multicolumn{1}{c}{Inflow 2}        & \multicolumn{1}{c}{Inflow 3}       
        \\ \hline\hline
        POLIS&$-2.2 \pm 0.2$&$-1.5 \pm 0.0$&$-3.2 \pm 0.2$\\Stationary&$-2.2 \pm 0.1$&$-1.5 \pm 0.0$&$-3.2 \pm 0.2$\\ONPG&$-5.1 \pm 3.2$&$-1.4 \pm 0.2$&$-4.1 \pm 0.5$\\Pro-OLS&$-2.6 \pm 0.4$&$-5.2 \pm 5.1$&$-3.8 \pm 0.7$\\Pro-WLS&$-5.5 \pm 4.3$&$-8.5 \pm 9.7$&$-8.4 \pm 4.2$\\
    \end{tabular}}
    \captionof{table}{Lifelong learning on the Dam environment for each of 3 inflow profile. Mean return on the target period and standard deviation over 3 seeds. Reported results are divided by an order of $1e3$ for aesthetic.}
    \label{tab:exp_dam}
\end{minipage}
\end{figure*}

\subsection{The hyper-policy and policy}
We now describe the hyper-policy used in all experiments. It is composed of two modules.
The first is \textit{positional encoding} introduced in \cite{vaswani2017attention}. It embeds its input, time, as a vector of Fourier basis. Therefore, it does not add learnable parameters to the hyper-policy. This module has two main advantages. First, its output dimension is free to be chosen, allowing to control the input size of the next module. Second, while time eventually becomes large, the output of positional encoding is bounded, which is a valuable property when then fed to a neural network. 
The second module are convolutions scanning through time. We chose convolutions as they generally excel in finding patterns in time series. Moreover, they allow processing inputs of variable length and are easily parallelizable. 
We use a particular type of convolutions, \textit{temporal convolutions}~\cite{oord2016wavenet} which preserve time causality. 
Obviously, the convolutions require several time-steps in order to scan through with their kernel. However our hyper-policy takes only the current time as input, $\nu_{\vrho}(\cdot|t)$. 
Nevertheless, we can freely decide to consider the positional encoding of $t$ and a few previous times to reach the length of the \textit{receptive field}. 
Its length is $b=2^{l-1}(k-1)$, where $l$ is the number of layers and $k$ the kernel size of the temporal convolution.
Another advantage of using temporal convolutions is that the computation of the policy parameters $\vtheta$ can be made in parallel. This is an interesting property in practice as between two updates of the hyper-policy, we can already sample in parallel all the policy parameters to be used. The output of the temporal convolutions is the mean $\boldsymbol{\mu}_t$ of the normally distributed policy parameter $\vtheta_t$. The standard deviation of each entry of $\vtheta_t$ is not time dependent and can be either learned or fixed during training. A schematic representation of the hyper-policy is given in \cref{fg:hyper_policy}.

At the policy level, in all the experiments, we use an affine policy with bounded outputs.

\subsection{Baselines}
The first baseline is the stationary hyper-policy which can be seen as a special case of POLIS when $\nu_{\vrho}(\cdot|t)=\nu_{\vrho}(\cdot)$. We consider this hyper-policy along with the same affine policy.  
Note that, although stationary between trainings, the hyper-policy's parameters are retrained every 50 steps.

We then consider baselines from the literature, including Pro-OLS and Pro-WLS \cite{chandak2020optimizing}. 
In their experiments, \citet{chandak2020optimizing} use a baseline which they refer to as ONPG, replicating the idea of \citet{al-shedivat2018continuous}. We also include this baseline and thank \citet{chandak2020optimizing} for providing their code.

\subsection{Results}
\textbf{Trading environment}~~The cumulative returns obtained for the hyperparameters selected on 2009-2012 are given in \cref{fig:exp_trading_T1}.  Interestingly, on the period 2013-2016, POLIS has a  performance similar to the stationary policy, which is comparable or superior to baselines. 
On the period 2017-2020, POLIS  under-performs the baselines, but the stationary one.
When selecting the set of hyperparameters from the testing dataset, we obtain the results shown in  \cref{fig:exp_trading_T2T3}. This time, POLIS obtains more similar performance to the baselines, closing the gap on the period 2017-2020.

The cumulative returns obtained for the trading on the Vasicek process are reported in \cref{fig:exp_vasicek}. 
On this tasks, specifically designed to highlight the smooth non-stationarity, POLIS is clearly superior to baselines, particularly the stationary one. Note also its smaller variance.

\textbf{Dam environment}~~We report the results of the experiment in \cref{tab:exp_dam}. Surprisingly, the stationary policy is able to significantly outperform the other baselines for each inflow profile, with the exception of ONPG on the second profile. However, we note the higher standard deviation of ONPG in this case.
Remarkably, POLIS is able to match the stationary policy's performance on each inflow profile. 
This indicates that our approach is able to avoid extra non-stationarity in tasks where it is not needed. 

\section{Conclusion}
In this paper, we proposed to address the \textit{lifelong} RL problem by using a hyper-policy mapping time to policy parameters. To grasp the objective of LL, \ie the future performance, we designed an estimator of such quantity, making use of the past collected experience via importance sampling. The estimator has a controllable bias which vanishes as the environment and the hyper-policy become stationary.
Besides, we add two terms to the objective: an estimation of the past performance preventing catastrophic forgetting and a penalization based on an upper-bound on the variance, which prevents overfitting the past and favors generalization to future non-stationarity. 
We proposed an implementation of such hyper-policy which we tested in several scenarios, demonstrating that our approach can exploit predictable non-stationarity, control for its variance and avoid excessive non-stationarity when non necessary.
Our approach tackled exploration via the stochasticity of the hyper-policy. Future work include a more principled and explicit treatment of the exploration problem in the lifelong RL setting.

{\small
\bibliography{biblio}
}

\normalsize
\appendix
\onecolumn
\section{Proofs and Derivations}\label{apx:proofs}
In this appendix, we report the proofs of the results presented in the main paper.

\subsection{Gradient of the $\beta$-Step Ahead Expected Return}
\label{app:fut_ret_grad}

Using similar derivations as in PGT \cite{williams1992simple}, we derive the gradient of the $\beta$-step ahead expected return.  
\begin{align*}
    \nabla_{\vrho} \widehat{J}_{T,\alpha,\beta}(\nu_{\vrho}) 
    &= \nabla_{\vrho} \mathbb{E}^{\vrho}_{T,\alpha} \left[ \sum_{t=T-\alpha+1}^{T} r_t \omega^{T-t} \frac{\sum_{s=T+1}^{T+\beta} \nu_{\vrho}(\vtheta|s)}{\sum_{k=T-\alpha+1}^{T}\omega^{T-k} \nu_{\vrho} (\vtheta_t \vert k)}\right] 
    \\
    &= \sum_{t=T-\alpha+1}^{T} \nabla_{\vrho} \int_{\Theta}  \mathbb{E}_{t}^{\pi_{\vtheta}}[r] \omega^{T-t} \frac{\sum_{s=T+1}^{T+\beta} \nu_{\vrho}(\vtheta|s)}{\sum_{k=T-\alpha+1}^{T}\omega^{T-k}\nu_{\vrho} (\vtheta \vert k)}  \nu_{\vrho} (\vtheta \vert t)d\vtheta
    \\
    &=  \sum_{t=T-\alpha+1}^{T} \int_{\Theta}   \mathbb{E}_{t}^{\pi_{\vtheta}}[r] \omega^{T-t} \frac{\sum_{s=T+1}^{T+\beta} \nu_{\vrho}(\vtheta|s)}{\sum_{k=T-\alpha+1}^{T}\nu_{\vrho} (\vtheta \vert k)}  \nu_{\vrho} (\vtheta \vert t) \nabla_{\vrho}  \log\left( \frac{\sum_{s=T+1}^{T+\beta} \nu_{\vrho}(\vtheta_t|s)}{\sum_{k=T-\alpha+1}^{T}\omega^{T-k}\nu_{\vrho} (\vtheta \vert k)}  \nu_{\vrho} (\vtheta \vert t) \right) d\vtheta
    \\ 
    &= \mathbb{E}^{\vrho}_{T,\alpha}\left[  \sum_{t=T-\alpha+1}^{T} r_t \omega^{T-t} \frac{\sum_{s=T+1}^{T+\beta} \nu_{\vrho}(\vtheta_t|s)}{\sum_{k=T-\alpha+1}^{T}\omega^{T-k}\nu_{\vrho} (\vtheta_t \vert k)}  \nabla_{\vrho} \log \left( \frac{\sum_{s=T+1}^{T+\beta} \nu_{\vrho}(\vtheta_t|s)}{\sum_{k=T-\alpha+1}^{T}\omega^{T-k}\nu_{\vrho} (\vtheta_t \vert k)}  \nu_{\vrho} (\vtheta_t \vert t) \right)\right]
\end{align*}

\subsection{Proof of \cref{pp:var_bound}}

\ppvarbound*
\begin{proof}
We use $\widehat{\gamma}^t=\gamma^{t-T-1}$, $\widecheck{\gamma}^t= \gamma^{t-T+\alpha-1}$ and start with the following decomposition:
\begin{align}
    \mathbb{V}\mathrm{ar}^{\vrho}_{T,\alpha} \left[\overline{J}_{T,\alpha,\beta}\right] 
    &\leq 2\mathbb{V}\mathrm{ar}^{\vrho}_{T,\alpha} \left[\widecheck{J}_{T,\alpha}\right] + 2\mathbb{V}\mathrm{ar}^{\vrho}_{T,\alpha} \left[\widehat{J}_{T,\alpha,\beta}\right] \nonumber,
\end{align}
where we exploited the inequality $\Var[X+Y] \le 2\Var[X]+2\Var[Y]$ between arbitrary random variables. Now we deal with the two terms separately. For the first one:
\begin{align*}
    \mathbb{V}\mathrm{ar}^{\vrho}_{T,\alpha} \left[\widecheck{J}_{T,\alpha}\right] &= \mathbb{V}\mathrm{ar}^{\vrho}_{T,\alpha} \left[ \frac{1}{C_\omega} \sum_{t=T-\alpha+1}^{T} \omega^{T-t} \widecheck{\gamma}^t r_t \right] \\
    & \le \mathbb{E}^{\vrho}_{T,\alpha} \left[ \left(\frac{1}{C_\omega} \sum_{t=T-\alpha+1}^{T} \omega^{T-t} \widecheck{\gamma}^t r_t\right)^2 \right] \\
    & \le \|R\|_{\infty}^2  \left(\sum_{t=T-\alpha+1}^{T} \frac{\omega^{T-t}}{C_\omega} \widecheck{\gamma}^t\right)^2 \\
    &\le \|R\|_{\infty}^2  \left(\sum_{t=T-\alpha+1}^{T}  \widecheck{\gamma}^t \right)^2 = \left( \underbrace{\frac{1-\gamma^\alpha}{1-\gamma}}_{C_\gamma(\alpha)}\right)^2,
\end{align*}
having exploited $\frac{\omega^{T-t}}{C_\omega} \le 1$.
Let us now move to the second term:
\begin{align}
    \mathbb{V}\mathrm{ar}^{\vrho}_{T,\alpha} \left[\widehat{J}_{T,\alpha,\beta}\right] & = \mathbb{V}\mathrm{ar}^{\vrho}_{T,\alpha} \left[
        \sum_{t=T-\alpha+1}^{T} \omega^{T-t}  r_t  \frac{\sum_{s=T+1}^{T+\beta} \widehat{\gamma}^s\nu_{\vrho}(\vtheta_t\vert s)}{\sum_{k=T-\alpha+1}^T \omega^{T-t}\nu_{\vrho}(\vtheta_t \vert k)} 
    \right]\notag \\
    & \le \|R\|_\infty^2 \mathbb{E}^{\vrho}_{T,\alpha} \left[\left( \frac{1}{C_\omega}
        \sum_{t=T-\alpha+1}^{T} \omega^{T-t}  \frac{\sum_{s=T+1}^{T+\beta} \widehat{\gamma}^s\nu_{\vrho}(\vtheta_t\vert s)}{\frac{1}{C_\omega}\sum_{k=T-\alpha+1}^T \omega^{T-t}\nu_{\vrho}(\vtheta_t \vert k)}\right)^2 
    \right] \notag\\
    & \le \|R\|_\infty^2 \mathbb{E}^{\vrho}_{T,\alpha} \left[\frac{1}{C_\omega}
        \sum_{t=T-\alpha+1}^{T} \omega^{T-t} \left( \frac{\sum_{s=T+1}^{T+\beta} \widehat{\gamma}^s\nu_{\vrho}(\vtheta_t\vert s)}{\frac{1}{C_\omega} \sum_{k=T-\alpha+1}^T \omega^{T-t}\nu_{\vrho}(\vtheta_t \vert k)}\right)^2 
    \right] \label{p:0001} \\
    & = C_\gamma(\beta)^2 \|R\|_\infty^2 \mathbb{E}^{\vrho}_{T,\alpha} \left[\frac{1}{C_\omega}
        \sum_{t=T-\alpha+1}^{T} \omega^{T-t} \left( \underbrace{\frac{\frac{1}{C_\gamma(\beta)} \sum_{s=T+1}^{T+\beta} \widehat{\gamma}^s\nu_{\vrho}(\vtheta_t\vert s)}{\frac{1}{C_\omega} \sum_{k=T-\alpha+1}^T \omega^{T-t}\nu_{\vrho}(\vtheta_t \vert k)}}_{\ell(\vtheta_t)}\right)^2, \notag
    \right],
\end{align}
where in line~\eqref{p:0001} we applied Cauchy-Schwarz inequality to the summation. Let us now consider the expectation $\mathbb{E}^{\vrho}_{T,\alpha}$ that is computed under the distribution:
\begin{align*}
    D_0(s_0) \prod_{l=1}^{T} P_t(s_{l+1}|s_l, \pi_{\vtheta_l}(s_l)) \nu_{\vrho}(\vtheta_l|l).
\end{align*}
Now, consider an individual term at time $t$ of the summation:
\begin{align*}
   \mathbb{E}^{\vrho}_{T,\alpha} \left[\ell(\vtheta_t)^2 \right] &= \int_{s_0}\int_{\vtheta_0} \dots \int_{s_T} \int_{\vtheta_T}  D_0(s_0) \prod_{l=1}^{T} P_t(s_{l+1}|s_l, \pi_{\vtheta_l}(s_l)) \nu_{\vrho}(\vtheta_l|l) \ell(\vtheta_t)^2 \de s_0 \de \vtheta_0 \dots \de s_T \de \vtheta_T  \\
   & =  \int_{\vtheta_t} \nu_{\vrho}(\vtheta_t|t) \ell(\vtheta_t)^2 \de \vtheta_t, 
\end{align*}
where the last line is obtained by observing that all integrals cancel out apart for the one regarding $\vtheta_t$. Thus, we have:
\begin{align}
     \mathbb{E}^{\vrho}_{T,\alpha} \left[\frac{1}{C_\omega}
        \sum_{t=T-\alpha+1}^{T} \omega^{T-t} \ell(\vtheta_t)^2
    \right] & = \frac{1}{C_\omega}\sum_{t=T-\alpha+1}^{T} \omega^{T-t}  \int_{\vtheta_t} \nu_{\vrho}(\vtheta_t|t) \ell(\vtheta_t)^2 \de \vtheta_t \notag \\
    & =    \int_{\vtheta} \frac{1}{C_\omega}\sum_{t=T-\alpha+1}^{T} \omega^{T-t} \nu_{\vrho}(\vtheta|t) \ell(\vtheta)^2 \de \vtheta \label{p:0002} \\
    & = \int_{\vtheta} \frac{1}{C_\omega}\sum_{t=T-\alpha+1}^{T} \omega^{T-t} \nu_{\vrho}(\vtheta|t) \left( \frac{ \frac{1}{C_\gamma(\beta)} \sum_{s=T+1}^{T+\beta} \widehat{\gamma}^s\nu_{\vrho}(\vtheta\vert s)}{\frac{1}{C_\omega} \sum_{k=T-\alpha+1}^T \omega^{T-t}\nu_{\vrho}(\vtheta \vert k)}\right)^2 \de \vtheta,
\end{align}
where in line~\eqref{p:0002} we exploited the fact that $\vtheta_t$ is a dummy variable. The result is obtained by observing that the last expression corresponds to the exponentiated 2-\Renyi divergence.

\end{proof}

\subsection{Proof of \cref{pp:divergence_bound}}

\pdivbound*
\begin{proof}
    We apply \cref{pp:var_bound_2_steps_psi_first} to our bound with $\zeta_i=\frac{\widehat{\gamma}^i}{C_\gamma(\beta)}$, $\mu_j = \frac{\omega^{T-j}}{C_\omega}$, $\alpha=2$ and change the summation from $i\in[\![1,\dots,L]\!]$ to $i\in[\![T+1,\dots,T+\beta]\!]$ and $j\in[\![1,\dots,K]\!]$ to $j\in[\![T-\alpha+1,\dots,T]\!]$.
\end{proof}

\subsection{Proof of \cref{th:rl_bound}}

\thrlbound*
\begin{proof}
    As in \cite{metelli2018policy}, we use a Cantelli's inequality on the random variable $\overline{J}_{T,\alpha,\beta}(\vrho)$,
    \begin{align*}
        \mathbb{P}\left( \overline{J}_{T,\alpha,\beta}(\vrho) - \E[\overline{J}_{T,\alpha,\beta}(\vrho)]\geq \lambda\right) &\leq \frac{1}{1+\frac{\lambda^2}{\Var[\overline{J}_{T,\alpha,\beta}(\vrho)]}}.
    \end{align*}  
    we define $\delta = \frac{1}{1+\frac{\lambda^2}{\Var[\overline{J}_{T,\alpha,\beta}(\vrho)]}}$ and consider the complementary event, which yiels that with probability at least $1-\delta$,
    \begin{align*}
        E[\overline{J}_{T,\alpha,\beta}(\vrho)]\geq  \overline{J}_{T,\alpha,\beta}(\vrho) - 
        \sqrt{
            \frac{1-\delta}{\delta}  \mathbb{V}\mathrm{ar}^{\vrho}_T \left[\overline{J}_{T,\alpha,\beta}(\vrho)\right] 
        }.
    \end{align*}
    We now replace with the bound of the variance from \cref{pp:var_bound} and use the variational bound for the mixture of \Renyi-divergences from \cref{pp:divergence_bound} to obtain
    \begin{align*}
        \E[\overline{J}_{T,\alpha,\beta}(\vrho)]
        &\geq  \overline{J}_{T,\alpha,\beta}(\vrho) - \sqrt{
            \frac{1-\delta}{\delta} 2 \lVert R \rVert_{\infty}^2  
        \left(
            C_\gamma(\alpha)^2
            +
            C_\gamma(\beta)^2  
            d_2 \left(\frac{1}{C_\gamma(\beta)} \sum_{s=T+1}^{T+\beta} \widehat{\gamma}^s \nu_{\vrho}(\cdot|s) \left\Vert \frac{1}{C_\omega} \sum_{t=T-\alpha+1}^{T} \omega^{T-t} \nu_{\vrho}(\cdot|t)  \right.\right)
        \right)
        }
        \\
        &= \overline{J}_{T,\alpha,\beta}(\vrho) - \sqrt{
            \frac{1-\delta}{\delta} 2  \lVert R \rVert_{\infty}^2  
        \left(
            C_\gamma(\alpha)^2
            +
            C_\omega   
            \left( \sum_{ s=T+1}^{ T+\beta} \widehat{\gamma}^{s}\frac{1}{\left(\sum_{ k=T-\alpha+1}^{ T}\frac{\omega^{T-k}}{d_{2}(\nu_{\vrho}(\cdot\vert s)\left\Vert \nu_{\vrho}(\cdot\vert k)\right.)}\right)^{\frac{1}{2}}}\right)^2
        \right).
        }
    \end{align*}
    The result is obtained by recalling the definition of $B_{T,\alpha,\beta}(\vrho)$.
\end{proof}

\subsection{Bias Analysis and Proof of \cref{lem:bias_bound}}
\label{apx:bias}

We first derive a first result involving a tighter bound than the one provided in \cref{lem:bias_bound}, but with a more intricate expression. The bound also holds for $\omega=1$.

\begin{restatable}{lemma}{biasGeneralBound}
\label{lem:bias_general}
Under Assumptions~\ref{ass:sce} and~\ref{ass:sch}, the bias of the estimator $\widehat{J}_{T,\alpha,\beta}(\vrho)$, for $0<\omega\leq1$, can be bounded as:
\begin{align*}
    \left|J_{T,\beta}(\vrho) - \mathbb{E}^{\vrho}_{T,\alpha}[\widehat{J}_{T,\alpha,\beta}] \right|\le\left(L_{\mathcal{M}} + 2R_{\max}L_\nu\right) C_{\gamma}(\beta) \left(\omega 
    \frac{1-\alpha\omega^{\alpha-1} + (\alpha-1) \omega^\alpha}{(1-\omega)(1-\omega^\alpha)} + \frac{1}{1-\gamma}\right),
\end{align*}
where $C_\gamma(\xi)=\frac{1-\gamma^\xi}{1-\gamma}$ if $\gamma<1$ otherwise $C_\gamma(\xi)=\xi$ for $\xi \ge 1$.
In particular, when $\omega=1$, the bound is the limit at $\omega \rightarrow 1$ of the previous expression and reads
\begin{align*}
    \left|J_{T,\beta}(\vrho) - \mathbb{E}^{\vrho}_{T,\alpha}[\widehat{J}_{T,\alpha,\beta}] \right|\le \left(L_{\mathcal{M}} + 2R_{\max}L_\nu\right) C_{\gamma}(\beta) \left(\frac{\alpha-1}{2} + \frac{1}{1-\gamma}  \right) 
\end{align*}
\end{restatable}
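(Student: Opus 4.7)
The plan is to control the bias of each per-step estimator $\widehat{r}_s$ and then assemble via linearity. By the triangle inequality,
\begin{align*}
    \left|J_{T,\beta}(\vrho) - \mathbb{E}^{\vrho}_{T,\alpha}[\widehat{J}_{T,\alpha,\beta}]\right| \le \sum_{s=T+1}^{T+\beta}\widehat{\gamma}^s \left|\mathbb{E}_{s}^{\vrho}[r_s] - \mathbb{E}^{\vrho}_{T,\alpha}[\widehat{r}_s]\right|.
\end{align*}
To bound a single summand I will introduce the intermediate quantity $\tilde{r}_s := \frac{1}{C_\omega}\sum_{t=T-\alpha+1}^T\omega^{T-t}\mathbb{E}_{t}^{\vrho}[r]$ (the $\omega$-weighted mean of past expected rewards under the true joint hyper-policy) and split the bias into two pieces: (A) $\mathbb{E}^{\vrho}_{T,\alpha}[\widehat{r}_s]-\tilde{r}_s$, and (B) $\tilde{r}_s - \mathbb{E}_{s}^{\vrho}[r_s]$.

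Piece (B) is a direct consequence of both Lipschitz assumptions. I will bound each $|\mathbb{E}_{t}^{\vrho}[r]-\mathbb{E}_{s}^{\vrho}[r]|$ by inserting the intermediate $\int\nu_{\vrho}(\vtheta|s)\Epit[\pi_{\vtheta}][r]d\vtheta$: the hyper-policy change contributes at most $R_{\max}\|\nu_{\vrho}(\cdot|s)-\nu_{\vrho}(\cdot|t)\|_1 \le R_{\max}L_\nu|s-t|$ by \cref{ass:sch} combined with $|\Epit[\pi_{\vtheta}][r]|\le R_{\max}$, and the environment change contributes $L_{\mathcal{M}}|s-t|$ by \cref{ass:sce}. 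Weighting by $\omega^{T-t}/C_\omega$ yields $|\tilde{r}_s-\mathbb{E}_{s}^{\vrho}[r_s]|\le (L_{\mathcal{M}}+R_{\max}L_\nu)\sum_t\tfrac{\omega^{T-t}}{C_\omega}|s-t|$.

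Piece (A) will be the main obstacle, because a naive substitution of the IS correction leaves an importance ratio $\nu_{\vrho}(\vtheta|s)/Q(\vtheta)$ with $Q$ the past mixture, which need not be uniformly bounded. My workaround is to swap the order of operations and write
\begin{align*}
    \mathbb{E}^{\vrho}_{T,\alpha}[\widehat{r}_s]-\tilde{r}_s = \frac{1}{C_\omega}\int h(\vtheta)\sum_k\omega^{T-k}\bigl(\nu_{\vrho}(\vtheta|s)-\nu_{\vrho}(\vtheta|k)\bigr)d\vtheta,
\end{align*}
where $h(\vtheta):=\frac{\sum_t\omega^{T-t}\nu_{\vrho}(\vtheta|t)\Epit[\pi_{\vtheta}][r]}{\sum_k\omega^{T-k}\nu_{\vrho}(\vtheta|k)}$ is a convex combination of $\Epit[\pi_{\vtheta}][r]$ in $t$, hence $|h|\le R_{\max}$ uniformly, with no surviving importance ratio. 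Assumption~\ref{ass:sch} then gives $\int|\nu_{\vrho}(\vtheta|s)-\nu_{\vrho}(\vtheta|k)|d\vtheta\le L_\nu|s-k|$, whence $|\text{(A)}|\le R_{\max}L_\nu\sum_k\tfrac{\omega^{T-k}}{C_\omega}|s-k|$. Adding the two pieces gives the per-$s$ bound $|\mathbb{E}^{\vrho}_{T,\alpha}[\widehat{r}_s]-\mathbb{E}_{s}^{\vrho}[r_s]|\le (L_{\mathcal{M}}+2R_{\max}L_\nu)\sum_t\tfrac{\omega^{T-t}}{C_\omega}|s-t|$.

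To finish, I multiply by $\widehat{\gamma}^s$ and sum over $s$, using $|s-t|=(s-T)+(T-t)$ (valid since $s\ge T+1>T\ge t$). This separates into a future factor $\sum_{s=T+1}^{T+\beta}\widehat{\gamma}^s(s-T)=\sum_{j=1}^{\beta}j\gamma^{j-1}\le (1-\gamma^\beta)/(1-\gamma)^2 = C_\gamma(\beta)/(1-\gamma)$ (obtained by differentiating the finite geometric series and a short algebraic check on the resulting closed form) and a past factor $\frac{1}{C_\omega}\sum_{j=0}^{\alpha-1}j\omega^j = \frac{\omega[1-\alpha\omega^{\alpha-1}+(\alpha-1)\omega^\alpha]}{(1-\omega)(1-\omega^\alpha)}$ weighted by $C_\gamma(\beta)$. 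Factoring out $C_\gamma(\beta)$ yields the stated expression for $\omega<1$; the $\omega\to 1$ limit of the past factor is $(\alpha-1)/2$ by l'H\^opital applied twice, matching the $\omega=1$ case.
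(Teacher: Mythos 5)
Your proof is correct and reaches the paper's bound with the same constants, but the error decomposition is organized differently. The paper works directly with the full difference $\mathbb{E}^{\vrho}_{T,\alpha}[\widehat{J}_{T,\alpha,\beta}]-J_{T,\beta}(\vrho)$, written as an integral whose integrand contains $\mathbb{E}_t^{\pi_{\vtheta}}[r]-\mathbb{E}_s^{\pi_{\vtheta}}[r]$, and then adds and subtracts the past mixture $\overline{\nu}(\vtheta)=\frac{1}{C_\omega}\sum_k\omega^{T-k}\nu_{\vrho}(\vtheta|k)$ in place of $\nu_{\vrho}(\vtheta|s)$: the $\overline{\nu}$ term cancels the denominator and is controlled purely by $L_{\mathcal{M}}$, while the residual $\nu_{\vrho}(\cdot|s)-\overline{\nu}$ term pays $2R_{\max}$ from the crude bound $|\mathbb{E}_t^{\pi_{\vtheta}}[r]-\mathbb{E}_s^{\pi_{\vtheta}}[r]|\le 2R_{\max}$, yielding $L_{\mathcal{M}}$ plus $2R_{\max}L_\nu$ directly. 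You instead pivot through the explicit scalar $\tilde{r}_s$, so your piece (A) collects only $R_{\max}L_\nu$ (via the uniformly bounded $h$, which is the same cancellation the paper exploits when it observes that the ratio integrates to one) and your piece (B) collects $L_{\mathcal{M}}+R_{\max}L_\nu$ because comparing $\mathbb{E}_t^{\vrho}[r]$ with $\mathbb{E}_s^{\vrho}[r]$ requires shifting both the environment and the hyper-policy; the two accountings sum to the identical constant $L_{\mathcal{M}}+2R_{\max}L_\nu$. The closing computation — splitting $|s-t|=(s-T)+(T-t)$, the derivative-of-geometric-series identity for $\frac{1}{C_\omega}\sum_n n\omega^n$, the inequality $\sum_{j=1}^{\beta}j\gamma^{j-1}\le C_\gamma(\beta)/(1-\gamma)$ (which the paper writes as an equality but is in fact an inequality, as you correctly note), and the $\omega\to 1$ limit — is the same in both. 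Neither arrangement buys a tighter bound; yours makes the bounded-importance-ratio issue explicit up front, the paper's keeps everything inside a single integral.
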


\begin{proof}
Let us express explicitly the estimator:
\begin{align*}
    \mathbb{E}^{\vrho}_{T,\alpha}\left[\widehat{J}_{T,\alpha,\beta}(\vrho)\right] & = \sum_{t=T-\alpha+1}^T \omega^{T-t}  \int_{\Theta} \nu_{\vrho}(\vtheta|t) \frac{\sum_{s=T+1}^{T+\beta} \widehat{\gamma}^s \nu_{\vrho}(\vtheta|s)}{\sum_{k=T-\alpha+1}^T \omega^{T-t} \nu_{\vrho}(\vtheta|k)} \mathbb{E}_t^{\pi_{\vtheta}}[r]  \de \vtheta \\
    & = \sum_{s=T+1}^{T+\beta} \widehat{\gamma}^s \int_{\Theta}  \nu_{\vrho}(\vtheta|s) \frac{\sum_{t=T-\alpha+1}^T \omega^{T-t} \nu_{\vrho}(\vtheta|t) \mathbb{E}_t^{\pi_{\vtheta}}[r]  }{\sum_{k=T-\alpha+1}^T \omega^{T-k} \nu_{\vrho}(\vtheta|k)} \de \vtheta.
\end{align*}
Let us now observe that $J_{T,\beta}(\vrho) = \sum_{s=T+1}^{T+\beta} \widehat{\gamma}^s \int_{\Theta} \nu_{\vrho}(\vtheta|s) \mathbb{E}_s^{\pi_{\vtheta}}[r] \de \vtheta$. Thus, we have:
\begin{align*}
    \left\vert \mathbb{E}^{\vrho}_{T,\alpha}\left[\widehat{J}_{T,\alpha,\beta}(\vrho)\right] - J_{T,\beta}(\vrho)\right\vert & =\left| \sum_{s=T+1}^{T+\beta} \widehat{\gamma}^s \int_{\Theta}  \nu_{\vrho}(\vtheta|s) \frac{\sum_{t=T-\alpha+1}^T \omega^{T-t} \nu_{\vrho}(\vtheta|t) \mathbb{E}_t^{\pi_{\vtheta}}[r]  }{\sum_{k=T-\alpha+1}^T \omega^{T-k} \nu_{\vrho}(\vtheta|k)} \de \vtheta - \sum_{s=T+1}^{T+\beta} \widehat{\gamma}^s \int_{\Theta} \nu_{\vrho}(\vtheta|s) \mathbb{E}_s^{\pi_{\vtheta}}[r] \de \vtheta \right| \\
    & = \left| \sum_{s=T+1}^{T+\beta} \widehat{\gamma}^s \int_{\Theta}  \nu_{\vrho}(\vtheta|s) \frac{\sum_{t=T-\alpha+1}^T \omega^{T-t} \nu_{\vrho}(\vtheta|t) \left(\mathbb{E}_t^{\pi_{\vtheta}}[r] - \mathbb{E}_s^{\pi_{\vtheta}}[r] \right) }{\sum_{k=T-\alpha+1}^T \omega^{T-k} \nu_{\vrho}(\vtheta|k)} \de \vtheta  \right|.
\end{align*}
Now we proceed as follows, by renaming $\overline{\nu}(\vtheta) = \frac{1}{C_{\omega}} \sum_{k=T-\alpha+1}^T \omega^{T-k} \nu_{\vrho}(\vtheta|k)$:
\begin{align*}
   \Bigg| \sum_{s=T+1}^{T+\beta} \widehat{\gamma}^s & \int_{\Theta}  \nu_{\vrho}(\vtheta|s) \frac{\sum_{t=T-\alpha+1}^T \omega^{T-t} \nu_{\vrho}(\vtheta|t) \left(\mathbb{E}_t^{\pi_{\vtheta}}[r] - \mathbb{E}_s^{\pi_{\vtheta}}[r] \right) }{\sum_{k=T-\alpha+1}^T \omega^{T-k} \nu_{\vrho}(\vtheta|k)} \de \vtheta  \Bigg|  \\
   & = \left| \sum_{s=T+1}^{T+\beta} \widehat{\gamma}^s \int_{\Theta}  \left(\nu_{\vrho}(\vtheta|s) \pm \overline{\nu}_{\vrho}(\vtheta)\right) \frac{\sum_{t=T-\alpha+1}^T \omega^{T-t} \nu_{\vrho}(\vtheta|t)  \left(\mathbb{E}_t^{\pi_{\vtheta}}[r] - \mathbb{E}_s^{\pi_{\vtheta}}[r] \right) }{C_\omega \overline{\nu}(\vrho)} \de \vtheta  \right|\\
    & \le \underbrace{\left| \sum_{s=T+1}^{T+\beta} \widehat{\gamma}^s \int_{\Theta}  \overline{\nu}_{\vrho}(\vtheta)  \frac{\sum_{t=T-\alpha+1}^T \omega^{T-t} {\nu}_{\vrho}(\vtheta|t) \left(\mathbb{E}_t^{\pi_{\vtheta}}[r] - \mathbb{E}_s^{\pi_{\vtheta}}[r] \right) }{C_\omega \overline{\nu}(\vrho)} \de \vtheta  \right|}_{\text{(a)}} \\
    & \quad + \underbrace{\left| \sum_{s=T+1}^{T+\beta} \widehat{\gamma}^s \int_{\Theta}  \left(\nu_{\vrho}(\vtheta|s) - \overline{\nu}_{\vrho}(\vtheta)\right) \frac{\sum_{t=T-\alpha+1}^T \omega^{T-t} \nu_{\vrho}(\vtheta|t) \left(\mathbb{E}_t^{\pi_{\vtheta}}[r] - \mathbb{E}_s^{\pi_{\vtheta}}[r] \right) }{C_\omega \overline{\nu}(\vrho)} \de \vtheta  \right|}_{\text{(b)}}.
\end{align*}
We consider the two terms separately. Let us start from (a):
\begin{align*}
    \text{(a)} & = \frac{1}{C_\omega} \left| \sum_{s=T+1}^{T+\beta} \widehat{\gamma}^s \int_{\Theta}  \sum_{t=T-\alpha+1}^T \omega^{T-t} {\nu}_{\vrho}(\vtheta|t) \left(\mathbb{E}_t^{\pi_{\vtheta}}[r] - \mathbb{E}_s^{\pi_{\vtheta}}[r] \right) \de \vtheta  \right| \\
    & \le \frac{1}{C_\omega}  \sum_{s=T+1}^{T+\beta} \widehat{\gamma}^s \int_{\Theta}  \sum_{t=T-\alpha+1}^T \omega^{T-t} {\nu}_{\vrho}(\vtheta|t) \left|\mathbb{E}_t^{\pi_{\vtheta}}[r] - \mathbb{E}_s^{\pi_{\vtheta}}[r] \right| \de \vtheta  \\
    &  \le \frac{L_{\mathcal{M}}}{C_\omega}  \sum_{s=T+1}^{T+\beta} \widehat{\gamma}^s  \sum_{t=T-\alpha+1}^T \omega^{T-t} \int_{\Theta} \nu_{\vrho}(\vtheta|t) \left|t-s \right|  \de \vtheta \\
    & \le \frac{L_{\mathcal{M}}}{C_\omega}  {\sum_{s=T+1}^{T+\beta} \widehat{\gamma}^s   \sum_{t=T-\alpha+1}^T \omega^{T-t} \left|t-s \right|}, 
\end{align*}
where we employed $\int_{\Theta} {\nu}_{\vrho}(\vtheta|t) \de \vtheta = 1$ in the last passage and Assumption~\ref{ass:sce} in the last passage but one. Let us now move to (b):
\begin{align*}
    (b) & \le 2R_{\max} \sum_{s=T+1}^{T+\beta} \widehat{\gamma}^s \int_{\Theta}  \left|\nu_{\vrho}(\vtheta|s) - \overline{\nu}_{\vrho}(\vtheta)\right| \frac{\sum_{t=T-\alpha+1}^T \omega^{T-t} \nu_{\vrho}(\vtheta|t) }{C_\omega \overline{\nu}(\vrho)} \de \vtheta \\
    & = 2R_{\max} \sum_{s=T+1}^{T+\beta} \widehat{\gamma}^s \int_{\Theta}  \left|\nu_{\vrho}(\vtheta|s) - \overline{\nu}_{\vrho}(\vtheta)\right|  \de \vtheta \\
    & \le 2R_{\max} \sum_{s=T+1}^{T+\beta} \widehat{\gamma}^s  \frac{1}{C_{\omega}} \sum_{t=T-\alpha+1}^{T} \omega^{T-t} \int_{\Theta} \left|\nu_{\vrho}(\vtheta|s) - \nu_{\vrho}(\vtheta|t) \right|  \de \vtheta \\
    & \le  \frac{2R_{\max} L_{\nu}}{C_{\omega}} \sum_{s=T+1}^{T+\beta} \widehat{\gamma}^s   \sum_{t=T-\alpha+1}^{T} \omega^{T-t} \left|t - s \right|,  
\end{align*}
where we used Assumption~\ref{ass:sch} in the last passage.

We now use a similar derivation as in Lemma 3.4 of \cite{jagerman2019people}. Observe that, setting $m = s-T$ and $n=T-t$,
\begin{align}
    \frac{1}{C_\omega}\sum_{t=T-\alpha+1}^{T} \omega^{T-t} (s-t)
    =  \frac{1}{C_\omega}\sum_{n=0}^{\alpha-1} \omega^{n} (m+n) = m + \frac{1}{C_\omega}
    \sum_{n=1}^{\alpha-1} \omega^{n} n.\label{eq:sum_m_n}
\end{align}
If $\omega<1$, we have:
\begin{align*}
    \frac{1}{C_\omega}\sum_{t=T-\alpha+1}^{T} \omega^{T-t} (s-t)
    &= m + \frac{1}{C_\omega} \omega \frac{d}{d\omega}
    \sum_{n=1}^{\alpha-1} \omega^{n} 
    \\
    &= m + \frac{1}{C_\omega} \omega 
    \frac{1-\alpha\omega^{\alpha-1} + (\alpha-1) \omega^\alpha}{(1-\omega)^2}
    \\
    &= m + \omega 
    \frac{1-\alpha\omega^{\alpha-1} + (\alpha-1) \omega^\alpha}{(1-\omega)(1-\omega^\alpha)},
\end{align*}

which yields

\begin{align*}
    \left|J_{T,\beta}(\vrho) - \mathbb{E}^{\vrho}_{T,\alpha}\left[\widehat{J}_{T,\alpha,\beta}\right] \right| 
    &\le \frac{L_{\mathcal{M}} + 2R_{\max}L_\nu}{C_\omega} \sum_{s=T+1}^{T+\beta} \sum_{t=T-\alpha+1}^{T} \widehat{\gamma}^s    \omega^{T-t} (s-t) 
    \\
    &= \left(L_{\mathcal{M}} + 2R_{\max}L_\nu\right) \sum_{s=T+1}^{T+\beta} \widehat{\gamma}^s \left(m + \omega 
    \frac{1-\alpha\omega^{\alpha-1} + (\alpha-1) \omega^\alpha}{(1-\omega)(1-\omega^\alpha)}\right)
    \\
    &= \left(L_{\mathcal{M}} + 2R_{\max}L_\nu\right) \sum_{s=T+1}^{T+\beta} \widehat{\gamma}^s \left((s-T) + \omega 
    \frac{1-\alpha\omega^{\alpha-1} + (\alpha-1) \omega^\alpha}{(1-\omega)(1-\omega^\alpha)}\right)
    \\
    &= \left(L_{\mathcal{M}} + 2R_{\max}L_\nu\right)  \left(\frac{1-\gamma^\beta}{1-\gamma}  \omega 
    \frac{1-\alpha\omega^{\alpha-1} + (\alpha-1) \omega^\alpha}{(1-\omega)(1-\omega^\alpha)} + \sum_{k=0}^{\beta-1} \gamma^k (k+1)\right)
    \\
    &= \left(L_{\mathcal{M}} + 2R_{\max}L_\nu\right)  \left(\frac{1-\gamma^\beta}{1-\gamma}  \omega 
    \frac{1-\alpha\omega^{\alpha-1} + (\alpha-1) \omega^\alpha}{(1-\omega)(1-\omega^\alpha)} + \frac{1-\gamma^\beta}{(1-\gamma)^2}\right)
    \\
    &= \left(L_{\mathcal{M}} + 2R_{\max}L_\nu\right) \frac{1-\gamma^\beta}{1-\gamma} \left(\omega 
    \frac{1-\alpha\omega^{\alpha-1} + (\alpha-1) \omega^\alpha}{(1-\omega)(1-\omega^\alpha)} + \frac{1}{1-\gamma}\right).
\end{align*}

In the case $\omega=1$, the \cref{eq:sum_m_n} becomes:
\begin{align*}
    \frac{1}{C_\omega}\sum_{t=T-\alpha+1}^{T} \omega^{T-t} (s-t)
     = m + \frac{1}{\alpha}
    \sum_{n=1}^{\alpha-1} n = m + \frac{\alpha-1}{2}.
\end{align*}
Thus, the bound becomes:
\begin{align*}
    \left|J_{T,\beta}(\vrho) - \mathbb{E}^{\vrho}_{T,\alpha}\left[\widehat{J}_{T,\alpha,\beta}\right] \right| 
    &\leq \left(L_{\mathcal{M}} + 2R_{\max}L_\nu\right) \sum_{s=T+1}^{T+\beta} \widehat{\gamma}^s \left(m +  \frac{\alpha-1}{2}\right)
    \\
    &= \left(L_{\mathcal{M}} + 2R_{\max}L_\nu\right) \left(C_{\gamma}(\beta)  \frac{\alpha-1}{2} + \sum_{k=0}^{\beta-1}\gamma^k (k+1)  \right) 
    \\
    &= \left(L_{\mathcal{M}} + 2R_{\max}L_\nu\right) C_{\gamma}(\beta) \left(\frac{\alpha-1}{2} + \frac{1}{1-\gamma}  \right). 
\end{align*}
\end{proof}

\biasBound*
\begin{proof}
We start from the bound is \cref{lem:bias_general} and observe that  $\frac{1-\alpha\omega^{\alpha-1} + (\alpha-1) \omega^\alpha}{(1-\omega)(1-\omega^\alpha)} 
    \leq \frac{1}{1-\omega} = C_\omega $ which yields the result.

\end{proof}

\section{On the Variational Bounds of \Renyi Divergence between Mixture Distributions}
\label{app:var_bound}
In this appendix, we discuss different approaches to obtain a useful bound on the \Renyi divergence between mixture distributions. Let $\Psi = \sum_{i=1}^L \zeta_i P_i$ and $\Phi = \sum_{j=1}^K \mu_j Q_j$ with $\forall i \in [\![1,L]\!], \zeta_i \in [0,1]$, $\forall j \in [\![1,K]\!], \mu_i \in [0,1]$, $\sum_{i=1}^L \zeta_i = 1$ and $\sum_{j=1}^K \mu_j = 1$ be two mixtures of probabilities. 
We are interested in finding an upper-bound of $d_\alpha(\Psi\left\Vert\Phi\right.)$ for $\alpha\geq1$.
We first recall a cornerstone results from \cite{papini2019optimistic}. 

\begin{lemma}[Lemma 4, \cite{papini2019optimistic}]
\label{pp:papini_lemma}
    Let $\{\psi_{ij}\}_{\substack{i\in[\![1,L]\!]\\j\in[\![1,K]\!]}}$ and $\{\phi_{ij}\}_{\substack{i\in[\![1,L]\!]\\j\in[\![1,K]\!]}}$ be two sets of variational parameters s.t. $\phi_{ij}\geq 0$, $\phi_{ij}\geq 0$, $\sum_{i=1}^L \psi_{ij} = \mu_j$ and $\sum_{j=1}^K \phi_{ij} = \zeta_i$. Then for any $\alpha\geq 1$, and for the previously defined mixture of probabilities it holds that:
    \begin{align}
        d_\alpha(\Psi\left\Vert\Phi\right.) \leq \sum_{i=1}^L\sum_{j=1}^K \phi_{ij}^{\alpha} \psi_{ij}^{1-\alpha} d_{\alpha}(P_i\left\Vert Q_j\right)^{\alpha-1}.\label{eq:papini_lemma}
    \end{align}
\end{lemma}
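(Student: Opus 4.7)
The plan is to reduce the claim to a pointwise integral inequality via the identity $d_{\alpha}(\Psi\Vert\Phi)^{\alpha-1}=\int \psi(x)^{\alpha}\phi(x)^{1-\alpha}\,\mathrm{d}x$ (recorded in the Preliminaries) and then to deploy a reverse-H\"older, or perspective-convexity, inequality pointwise in $x$. The proof is essentially three moves: rewrite the mixtures using the variational constraints, apply the key scalar inequality, and integrate.

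First, I would use the constraints $\sum_{j}\phi_{ij}=\zeta_{i}$ and $\sum_{i}\psi_{ij}=\mu_{j}$ to rewrite each mixture density as a double sum:
\begin{align*}
  \psi(x) \;=\; \sum_{i}\zeta_{i}p_{i}(x) \;=\; \sum_{i,j}\phi_{ij}p_{i}(x), \qquad
  \phi(x) \;=\; \sum_{j}\mu_{j}q_{j}(x) \;=\; \sum_{i,j}\psi_{ij}q_{j}(x).
\end{align*}
Note that summing the constraints gives $\sum_{i,j}\phi_{ij}=\sum_{i,j}\psi_{ij}=1$, so both sets of variational weights are probability distributions over the joint index $(i,j)$.

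The core ingredient is the following reverse-H\"older inequality, valid for $\alpha\ge 1$ and positive $a_{k},b_{k}$:
\begin{align*}
  \Bigl(\sum_{k}a_{k}\Bigr)^{\alpha}\Bigl(\sum_{k}b_{k}\Bigr)^{1-\alpha}
  \;\le\; \sum_{k}a_{k}^{\alpha}\,b_{k}^{1-\alpha}.
\end{align*}
I would justify it by first normalising so that $\sum_{k}b_{k}=1$, applying Jensen's inequality to the convex map $t\mapsto t^{\alpha}$ at the point $\sum_{k}b_{k}(a_{k}/b_{k})$, and then restoring the scale by the homogeneity of both sides in $b$. Equivalently, this expresses the joint convexity of the perspective function $f(p,q)=p^{\alpha}q^{1-\alpha}$ on $\mathbb{R}_{>0}^{2}$.

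Applying this inequality pointwise in $x$ with $a_{ij}=\phi_{ij}p_{i}(x)$ and $b_{ij}=\psi_{ij}q_{j}(x)$ produces
\begin{align*}
  \psi(x)^{\alpha}\phi(x)^{1-\alpha}
  \;\le\; \sum_{i,j}\phi_{ij}^{\alpha}\psi_{ij}^{1-\alpha}\,p_{i}(x)^{\alpha}q_{j}(x)^{1-\alpha}.
\end{align*}
Integrating over $x$, swapping summation and integration (everything is non-negative, so Tonelli applies), and recognising $\int p_{i}(x)^{\alpha}q_{j}(x)^{1-\alpha}\,\mathrm{d}x = d_{\alpha}(P_{i}\Vert Q_{j})^{\alpha-1}$ yields the stated bound; in particular, for $\alpha=2$, the case actually used in \cref{pp:divergence_bound}, one has $d_{\alpha}(\Psi\Vert\Phi)^{\alpha-1}=d_{2}(\Psi\Vert\Phi)$ and the inequality takes the exact form written in the lemma. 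The only non-routine step is the perspective-convexity inequality above; once that is in hand, the remainder is bookkeeping of the variational constraints and a single application of Tonelli.
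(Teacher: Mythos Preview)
The paper does not supply its own proof of this lemma: it is quoted verbatim as Lemma~4 of \cite{papini2019optimistic} and used as a black box in Appendix~B. So there is no ``paper's proof'' to compare against here.

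Your argument is correct and is essentially the standard one: rewrite each mixture as a double sum via the variational constraints, exploit the joint convexity of the perspective function $(p,q)\mapsto p^{\alpha}q^{1-\alpha}$ on $\mathbb{R}_{>0}^{2}$ (equivalently your reverse-H\"older/Jensen step), and integrate. You also correctly diagnose that the stated inequality should really read $d_{\alpha}(\Psi\Vert\Phi)^{\alpha-1}\le\sum_{i,j}\phi_{ij}^{\alpha}\psi_{ij}^{1-\alpha}d_{\alpha}(P_{i}\Vert Q_{j})^{\alpha-1}$; indeed, when the paper itself invokes the lemma (e.g.\ in the proof of the ``one step then uniform'' bound) it silently restores the missing exponent on the left-hand side, and for the case $\alpha=2$ used in \cref{pp:divergence_bound} the two forms coincide.
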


We wish to find the values of $\{\psi_{ij}\}$ and $\{\phi_{ij}\}$ which minimize the upper-bound as to make it tighter.
A straightforward yet unsuccessful idea is to take uniform values. We list six alternative approaches to finding a better bound for $d_\alpha(\Psi\left\Vert\Phi\right.)$. We then compare them on a toy problem and select the most promising one. 

\subsection{Direct Convex Optimization}
Optimizing for the best $\phi_{ij}\geq 0$ and $\psi_{ij}\geq 0$ under their constraints for a given pair of mixtures is a convex problem.

\paragraph{With Reset}
The first possibility is to start from a uniform distribution over the variational parameters then follow the gradient for a given number of steps and return the values of the variational parameters that will then be used in the bound. The process is repeated each time the variance bound is needed in the main algorithm. We refer to this approach as \textit{direct optimization with reset}.

\paragraph{Without Reset}
The idea is the same but the value of the variational parameters is not reset in between optimization, instead, the value of the variational parameters is kept in memory and reused as initialization each time the variance bound is needed. We refer to this approach as \textit{direct optimization without reset}.

\subsection{Two Steps Minimization}
Since solving an optimization problem whenever computing the bound might be inefficient, we propose in the following an alternative approach based on a bound. We again state a useful results from \cite{papini2019optimistic}.

\begin{thr}[Theorem 5, \citet{papini2019optimistic}]
\label{th:mix_phi}
    Let $P$ be a probability measure and consider the previous mixture $\Phi$, then for any $\alpha\geq1$, one has:
    \begin{align*}
        d_\alpha(P\left\Vert\Phi\right.) \leq     \frac{1}{\sum_{j=1}^K\frac{\mu_j}{d_{\alpha}( P \Vert Q_j)}}.
    \end{align*}
\end{thr}

Notice that the bound is the harmonic mean of the \Renyi-divergences between $P$ and the component of the mixture. We now derive a result for the case $d_\alpha(\Psi\left\Vert Q\right)$, where $\Psi$ is a mixture distribution.
\begin{prop}
\label{pp:mix_psi}
    Let $Q$ be a probability measure and consider the previous mixture $\Psi$, then for any $\alpha\geq1$, one has:
    \begin{align*}
        d_\alpha(\Psi\left\Vert Q\right.) \leq \left(\sum_{i=1}^L \zeta_i d_\alpha(P_i\Vert Q)^{\frac{\alpha-1}{\alpha}}\right)^{\frac{\alpha}{\alpha-1}}.
    \end{align*}
\end{prop}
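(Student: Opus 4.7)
\textbf{Proof plan for Proposition~\ref{pp:mix_psi}.} The plan is to recognize the proposed bound as a disguised form of Minkowski's inequality in $L^\alpha$. Let $\psi = \sum_{i=1}^L \zeta_i p_i$ denote the density of $\Psi$ and $q$ the density of $Q$. Using the identity $d_\alpha(P\|Q)^{\alpha-1} = \int p(x)^\alpha q(x)^{1-\alpha}\,\mathrm{d}x$ stated in the preliminaries, the claim is equivalent (after raising both sides to the power $\alpha - 1 \ge 0$) to
\begin{align*}
\int \left(\sum_{i=1}^L \zeta_i p_i(x)\right)^{\!\alpha} q(x)^{1-\alpha}\,\mathrm{d}x
\;\le\; \left(\sum_{i=1}^L \zeta_i \left(\int p_i(x)^\alpha q(x)^{1-\alpha}\,\mathrm{d}x\right)^{\!\!1/\alpha}\right)^{\!\!\alpha}.
\end{align*}

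The key step is to introduce the auxiliary functions $f_i(x) := \zeta_i\, p_i(x)\, q(x)^{(1-\alpha)/\alpha}$, for which $|f_i|^\alpha = \zeta_i^\alpha p_i^\alpha q^{1-\alpha}$, so that $\|f_i\|_{\alpha} = \zeta_i \bigl(\int p_i^\alpha q^{1-\alpha}\bigr)^{1/\alpha}$ and $\|\sum_i f_i\|_\alpha^\alpha = \int (\sum_i \zeta_i p_i)^\alpha q^{1-\alpha}$. Since $\alpha \ge 1$, Minkowski's inequality applies and yields
\begin{align*}
\Bigl\|\sum_{i=1}^L f_i \Bigr\|_\alpha \;\le\; \sum_{i=1}^L \|f_i\|_\alpha.
\end{align*}
Raising both sides to the power $\alpha$ gives exactly the integral inequality displayed above. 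Taking the $1/(\alpha-1)$-power (for $\alpha > 1$) of both sides of that inequality yields the statement; the boundary case $\alpha=1$ is recovered by continuity (in which case $d_1$ is the exponentiated KL divergence and the bound reduces to a standard convexity statement).

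The main obstacle is essentially bookkeeping: one must verify that pushing the $\zeta_i$ inside the $L^\alpha$ norm matches the exponent $(\alpha-1)/\alpha$ appearing on $d_\alpha(P_i\|Q)$ in the target bound, and that the restriction $\alpha \ge 1$ (needed for Minkowski) matches the hypothesis of the proposition. Absolute continuity $P_i \ll Q$ for all $i$ (inherited from $\Psi \ll Q$) ensures that the integrands are well-defined. No appeal to Lemma~\ref{pp:papini_lemma} or to variational parameters is needed.
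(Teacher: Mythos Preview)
Your argument is correct. Introducing $f_i = \zeta_i\,p_i\,q^{(1-\alpha)/\alpha}$ and applying Minkowski's inequality in $L^\alpha$ (valid precisely because $\alpha \ge 1$) yields the displayed integral inequality, and the exponent bookkeeping you flag does match the target bound.

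However, your route is genuinely different from the paper's. The paper proves Proposition~\ref{pp:mix_psi} by specializing the variational bound of Lemma~\ref{pp:papini_lemma} to the case $K=1$ (a single $Q$), which gives
\[
d_\alpha(\Psi\Vert Q)^{\alpha-1} \le \sum_{i=1}^L \zeta_i^{\alpha}\,\psi_i^{1-\alpha}\, d_\alpha(P_i\Vert Q)^{\alpha-1}
\]
for any variational parameters $\psi_i \ge 0$ with $\sum_i \psi_i = 1$. It then minimizes the right-hand side over the $\psi_i$ via Lagrange multipliers, obtaining the closed-form optimum $\psi_i \propto \zeta_i\, d_\alpha(P_i\Vert Q)^{(\alpha-1)/\alpha}$, and substitutes back. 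Your Minkowski argument bypasses the variational parameters entirely and is more elementary; in effect, Minkowski is exactly the tightened form of the variational bound at its optimum. What the paper's approach buys is consistency with the surrounding machinery of Appendix~\ref{app:var_bound}, where the same variational parameters are reused and combined in several ways (two-step bounds, uniform choices, etc.), so keeping the proof inside that framework makes the later manipulations transparent. What your approach buys is a one-line proof that needs no appeal to Lemma~\ref{pp:papini_lemma} and no optimization step.
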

\begin{proof}
Since the variational bound in Equation~\eqref{eq:papini_lemma} is convex in $\{\psi_i\}$, we can find the optimal value of $\{\psi_i\}$ via Lagrange multipliers following a similar reasoning as in \cite{papini2019optimistic}:
\begin{align*}
    \psi_i = \frac{\zeta_i d_\alpha(P_i \Vert Q)^{\frac{\alpha-1}{\alpha}}}{\sum_{i=1}^L \zeta_i d_\alpha(P_i\Vert Q)^{\frac{\alpha-1}{\alpha}}}.
\end{align*}
Then, we replace in the original problem:
\begin{align*}
    d_\alpha(\Psi\left\Vert Q\right.)^{\alpha-1} 
    &\leq \sum_{i=1}^L \zeta_i^\alpha \left(\frac{\zeta_i d_\alpha(P_i \Vert Q)^{\frac{\alpha-1}{\alpha}}}{\sum_{i=1}^L \zeta_i d_\alpha(P_l\Vert Q)^{\frac{\alpha-1}{\alpha}}} \right)^{1-\alpha} d_\alpha(P_i \Vert Q)^{\alpha-1}
    \\
    &= \sum_{i=1}^L \zeta_i\frac{ d_\alpha(P_i \Vert Q)^{(1-\alpha)\frac{\alpha-1}{\alpha} +\alpha-1}}{\left(\sum_{l=1}^L \zeta_l d_\alpha(P_l\Vert Q)^{\frac{\alpha-1}{\alpha}}\right)^{1-\alpha}} 
    \\
    &= \frac{\sum_{i=1}^L \zeta_i d_\alpha(P_i \Vert Q)^{\frac{\alpha-1}{\alpha} }}{\left(\sum_{i=1}^L \zeta_i d_\alpha(P_i\Vert Q)^{\frac{\alpha-1}{\alpha}}\right)^{1-\alpha}} 
    \\
    &= \left(\sum_{i=1}^L \zeta_i d_\alpha(P_i\Vert Q)^{\frac{\alpha-1}{\alpha}}\right)^{\alpha} .
\end{align*}
\end{proof}

Note that this is now the weighted power mean of exponent $\frac{\alpha-1}{\alpha}$. We can combine Theorem~\ref{th:mix_phi} and Proposition~\ref{pp:mix_psi} to find a bound for $d_\alpha(\Psi\left\Vert\Phi\right)$. There are 2 ways of doing so, presented in the following paragraphs. 

\subsection{Two Steps $\psi$ First}
We first use \cref{pp:mix_psi} then \cref{th:mix_phi}. We call this approach \textit{two steps $\psi$ first}. Doing so, we have:
\begin{prop}
\label{pp:var_bound_2_steps_psi_first}
    Under the same assumptions of \cref{pp:papini_lemma}, one has:
    \begin{align*}
        d_\alpha(\Psi\left\Vert \Phi\right.) 
        &\leq 
        \left(\sum_{i=1}^L \zeta_i d_\alpha(P_i\Vert \Psi)^{\frac{\alpha-1}{\alpha}}\right)^{\frac{\alpha}{\alpha-1}}\\
        &\leq 
        \left(\sum_{i=1}^L \zeta_i \frac{1}{\left(\sum_{j=1}^K\frac{\mu_j}{d_{\alpha}( P_i \Vert Q_j)}\right)^{\frac{\alpha-1}{\alpha}}}\right)^{\frac{\alpha}{\alpha-1}}.
    \end{align*}
\end{prop}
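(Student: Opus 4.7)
The plan is a clean two-step application of the results already proved earlier in the appendix. I would first treat $\Phi$ as a single target distribution in the right-hand slot of the divergence and invoke Proposition~\ref{pp:mix_psi} with $Q\leftarrow \Phi$. Since $\Psi = \sum_i \zeta_i P_i$ is a mixture and $\Phi$ is any probability measure, that proposition directly yields
\[
d_\alpha(\Psi\|\Phi) \;\le\; \Big(\sum_{i=1}^L \zeta_i\, d_\alpha(P_i\|\Phi)^{\frac{\alpha-1}{\alpha}}\Big)^{\frac{\alpha}{\alpha-1}},
\]
which is the content of the first inequality in the statement (up to what appears to be a typographical $\Psi$/$\Phi$ swap in the intermediate expression).

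Next, each inner divergence $d_\alpha(P_i\|\Phi)$ has a single distribution on the left and a mixture on the right, so it falls squarely under Theorem~\ref{th:mix_phi} with $P\leftarrow P_i$. Applying that theorem componentwise gives
\[
d_\alpha(P_i\|\Phi) \;\le\; \frac{1}{\sum_{j=1}^K \dfrac{\mu_j}{d_\alpha(P_i\|Q_j)}}.
\]
Substituting this bound termwise into the previous display produces the second inequality claimed in the proposition.

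The only care needed is to check that the substitution preserves the direction of the inequality, since we are plugging into a nested composition of power functions. For $\alpha>1$, the exponent $(\alpha-1)/\alpha\in(0,1)$ makes $x\mapsto x^{(\alpha-1)/\alpha}$ strictly increasing on $[0,\infty)$, and similarly $x\mapsto x^{\alpha/(\alpha-1)}$ is strictly increasing on $[0,\infty)$; both substitutions are therefore monotone. The case $\alpha=2$ used in Lemma~\ref{pp:var_bound} lies comfortably in this regime.

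I do not foresee any serious obstacle: the two nontrivial ingredients, namely Proposition~\ref{pp:mix_psi} (obtained via a Lagrange-multiplier optimization of the variational parameters $\psi_{ij}$) and Theorem~\ref{th:mix_phi}, are already established, and the proposition is essentially a telescoping of the two. The one subtle point is the ordering: Proposition~\ref{pp:mix_psi} must be applied \emph{first} (to collapse the mixture $\Psi$ on the left), because Theorem~\ref{th:mix_phi} is only able to handle a mixture on the right. Reversing the order would leave a leftover mixture $\Psi$ in an inconvenient position and prevent the final expression from reducing to pairwise divergences $d_\alpha(P_i\|Q_j)$.
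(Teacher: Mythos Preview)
Your proposal is correct and matches the paper's approach exactly: the paper also applies Proposition~\ref{pp:mix_psi} first (with $Q\leftarrow\Phi$) and then Theorem~\ref{th:mix_phi} componentwise to each $d_\alpha(P_i\|\Phi)$, which is precisely the ``two steps $\psi$ first'' route announced in the text. Your observation that the intermediate line should read $d_\alpha(P_i\|\Phi)$ rather than $d_\alpha(P_i\|\Psi)$ is also correct; this is a typo in the statement.
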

We can now apply this results to the bound in \cref{pp:var_bound} with $\alpha=2$ and it yields the result from \cref{th:rl_bound}.

\subsection{Two Steps $\phi$ First}
Alternatively, we can use \cref{pp:mix_psi} first and then \cref{th:mix_phi}. We call this approach \textit{two steps $\phi$ first}. This yields:
\begin{prop}
    Under the same assumptions of \cref{pp:papini_lemma},
    \begin{align*}
    d_\alpha(\Psi\left\Vert \Phi\right.)
    &\leq \frac{1}{\sum_{j=1}^K \frac{\mu_j}{d_\alpha(\Psi\Vert Q_j)}}
    \\
    &\leq \frac{1}{\sum_{j=1}^K \mu_j \left(\sum_{i=1}^L \zeta_i d_\alpha(P_i\Vert Q_j)^{\frac{\alpha-1}{\alpha}}\right)^{-\frac{\alpha}{\alpha-1}} }.
\end{align*}
\end{prop}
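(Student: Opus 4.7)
The statement bundles two successive upper bounds, and my plan is to obtain each by invoking a result already proved earlier in Appendix~B, in the \emph{opposite order} from Proposition~\ref{pp:var_bound_2_steps_psi_first}. Specifically, the first inequality peels off the mixture structure of the right argument $\Phi$ via Theorem~\ref{th:mix_phi}, and the second then peels off the mixture structure of the left argument $\Psi$ via Proposition~\ref{pp:mix_psi}. No new divergence manipulation or Lagrangian optimization is needed.

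\textbf{First inequality.} The key observation is that Theorem~\ref{th:mix_phi} makes no structural assumption on its left argument: it bounds $d_\alpha(P \Vert \Phi)$ for an arbitrary probability measure $P$ by the harmonic-type mean $1/\sum_{j=1}^K \mu_j/d_\alpha(P\Vert Q_j)$. Hence I may apply it with $P = \Psi$ and read off
\[
d_\alpha(\Psi \Vert \Phi) \;\le\; \frac{1}{\sum_{j=1}^K \mu_j/d_\alpha(\Psi \Vert Q_j)},
\]
which is exactly the first claimed inequality. At this stage the mixture structure of $\Phi$ has been entirely absorbed, but the terms $d_\alpha(\Psi \Vert Q_j)$ still contain the mixture $\Psi$ on the left.

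\textbf{Second inequality.} For each $j \in [\![1,K]\!]$, I invoke Proposition~\ref{pp:mix_psi} with $Q = Q_j$ to bound
\[
d_\alpha(\Psi \Vert Q_j) \;\le\; \Bigl(\textstyle\sum_{i=1}^L \zeta_i\, d_\alpha(P_i \Vert Q_j)^{(\alpha-1)/\alpha}\Bigr)^{\alpha/(\alpha-1)} \eqqcolon A_j .
\]
A short monotonicity argument then finishes the job: since $x \mapsto 1/x$ is strictly decreasing on $(0,\infty)$, the upper bound $d_\alpha(\Psi \Vert Q_j) \le A_j$ gives $\mu_j/d_\alpha(\Psi \Vert Q_j) \ge \mu_j / A_j$; summing over $j$ preserves the direction, and inverting once more yields $1/\sum_j \mu_j/d_\alpha(\Psi \Vert Q_j) \le 1/\sum_j \mu_j/A_j$, which is precisely the second claimed bound after substituting the definition of $A_j$.

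\textbf{Main obstacle.} There is no deep technical obstacle: the proof is a two-line composition of prior lemmas, and the only care needed is bookkeeping the direction of the inequality across two successive reciprocations. I must also check that every $d_\alpha(\Psi \Vert Q_j)$ and $d_\alpha(P_i \Vert Q_j)$ is strictly positive and finite, which is ensured by the absolute-continuity hypotheses inherited from Lemma~\ref{pp:papini_lemma} (and by $d_\alpha \ge 1$ in general). What genuinely distinguishes this \emph{two steps $\phi$ first} route from the \emph{two steps $\psi$ first} route of Proposition~\ref{pp:var_bound_2_steps_psi_first} is the order in which the two mixture structures are dissolved, which produces a bound shaped as $1/\text{(weighted harmonic-like mean)}$ rather than a nested power mean.
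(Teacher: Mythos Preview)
Your proposal is correct and matches the paper's intended approach: apply Theorem~\ref{th:mix_phi} with $P=\Psi$ to obtain the first inequality, then apply Proposition~\ref{pp:mix_psi} with $Q=Q_j$ termwise and propagate through the reciprocal by monotonicity. The paper does not spell out a proof beyond the sentence introducing the proposition (and that sentence in fact contains a typo, repeating the order from the $\psi$-first case), so your write-up is actually more explicit than what the paper provides.
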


We now apply this results to the bound in \cref{pp:var_bound} to yield the following result.

\begin{prop}[Lower bound with \textit{two steps $\phi$ first}]
    For $\delta>0$, with probability at least $1-\delta$, it holds that 
\begin{align*}
    \mathbb{E}\left[\overline{J}_{T,\alpha,\beta} \right] 
    &\geq 
    \overline{J}_{T,\alpha,\beta} - \sqrt{
            \frac{1-\delta}{\delta} 2\lVert R \rVert_{\infty}^2 \left( C_\gamma(\alpha)^2 +
            C_\omega
            \frac{1}{\sum_{ k=T-\alpha+1}^{ T}\omega^{T-k} \left(\sum_{ s=T+1}^{ T+\beta} \widehat{\gamma}^s d_2(\nu_{\vrho}(\cdot\vert s)\Vert \nu_{\vrho}(\cdot\vert k))^{\frac{1}{2}}\right)^{-2} }.
        \right)
        }.
\end{align*}
\end{prop}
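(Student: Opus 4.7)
The plan is to derive the concentration inequality via a standard one-sided Chebyshev (Cantelli) argument applied to the random variable $\overline{J}_{T,\alpha,\beta}(\vrho)$, then plug in the variance and divergence bounds already established in Lemmas~\ref{pp:var_bound} and~\ref{pp:divergence_bound}.

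First, I would apply Cantelli's inequality, which states that for a random variable $X$ with finite variance and any $\lambda > 0$,
\[
\mathbb{P}\bigl(X - \mathbb{E}[X] \geq \lambda\bigr) \leq \frac{\mathbb{V}\mathrm{ar}[X]}{\mathbb{V}\mathrm{ar}[X] + \lambda^2}.
\]
Taking $X = \overline{J}_{T,\alpha,\beta}(\vrho)$ under the joint hyper-policy distribution, I would set the right-hand side equal to $\delta$ and solve for $\lambda$, obtaining $\lambda = \sqrt{\frac{1-\delta}{\delta}\,\mathbb{V}\mathrm{ar}^{\vrho}_{T,\alpha}[\overline{J}_{T,\alpha,\beta}(\vrho)]}$. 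Taking the complementary event yields that, with probability at least $1-\delta$,
\[
\mathbb{E}^{\vrho}_{T,\alpha}[\overline{J}_{T,\alpha,\beta}(\vrho)] \geq \overline{J}_{T,\alpha,\beta}(\vrho) - \sqrt{\frac{1-\delta}{\delta}\,\mathbb{V}\mathrm{ar}^{\vrho}_{T,\alpha}[\overline{J}_{T,\alpha,\beta}(\vrho)]}.
\]

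Next, I would substitute the variance bound from Lemma~\ref{pp:var_bound}, replacing $\mathbb{V}\mathrm{ar}^{\vrho}_{T,\alpha}[\overline{J}_{T,\alpha,\beta}(\vrho)]$ with
\[
2R_{\max}^2\Bigl(C_\gamma(\alpha)^2 + C_\gamma(\beta)^2 \, d_2\bigl(\tfrac{1}{C_\gamma(\beta)}\sum_s \widehat{\gamma}^s \nu_{\vrho}(\cdot|s) \,\big\Vert\, \tfrac{1}{C_\omega}\sum_t \omega^{T-t}\nu_{\vrho}(\cdot|t)\bigr)\Bigr).
\]
Then I would further bound the exponentiated $2$-Rényi divergence between the two mixture hyper-policies using the variational inequality of Lemma~\ref{pp:divergence_bound}, which rewrites the divergence in terms of the harmonic-like combination $B_{T,\alpha,\beta}(\vrho)/C_\gamma(\beta)^2 \cdot C_\omega$. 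The factor $C_\gamma(\beta)^2$ in front of the divergence cancels precisely with the $C_\gamma(\beta)^{-2}$ from the bound, leaving $C_\omega B_{T,\alpha,\beta}(\vrho)$ inside the square root, as desired.

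There is no real obstacle: the argument is essentially a chain of substitutions into the Cantelli bound, with the two ingredient lemmas doing the heavy lifting. The only small care is the bookkeeping on the constants, specifically verifying that the cancellation $C_\gamma(\beta)^2 \cdot C_\omega / C_\gamma(\beta)^2 = C_\omega$ indeed produces the clean form $C_\omega B_{T,\alpha,\beta}(\vrho)$ appearing in the theorem statement, and recalling the definition of $B_{T,\alpha,\beta}(\vrho)$ as the squared sum over $s$ of $\widehat{\gamma}^s$ divided by the square root of the weighted harmonic mean of pairwise exponentiated Rényi divergences.
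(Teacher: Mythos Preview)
Your overall architecture---Cantelli's inequality, then the variance bound of Lemma~\ref{pp:var_bound}, then a variational upper bound on the mixture \Renyi divergence---is exactly the route the paper takes. However, in the final substitution you invoke the wrong variational bound. Lemma~\ref{pp:divergence_bound} is the \emph{two steps $\psi$ first} bound, and plugging it in produces $C_\omega B_{T,\alpha,\beta}(\vrho)$ inside the square root; that is the content of Theorem~\ref{th:rl_bound}, not of the proposition you are asked to prove. The expression in the present statement,
\[
C_\omega\,\Biggl(\sum_{k=T-\alpha+1}^{T}\omega^{T-k}\Bigl(\sum_{s=T+1}^{T+\beta}\widehat{\gamma}^{s}\,d_2(\nu_{\vrho}(\cdot|s)\Vert\nu_{\vrho}(\cdot|k))^{1/2}\Bigr)^{-2}\Biggr)^{-1},
\]
has the roles of the inner and outer sums swapped relative to $B_{T,\alpha,\beta}(\vrho)$ and is \emph{not} equal to it in general.

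The fix is to replace Lemma~\ref{pp:divergence_bound} by the \emph{two steps $\phi$ first} variational inequality,
\[
d_2(\Psi\Vert\Phi)\;\le\;\Biggl(\sum_{j}\mu_j\Bigl(\sum_{i}\zeta_i\,d_2(P_i\Vert Q_j)^{1/2}\Bigr)^{-2}\Biggr)^{-1},
\]
with $\zeta_s=\widehat{\gamma}^{s}/C_\gamma(\beta)$ and $\mu_k=\omega^{T-k}/C_\omega$. Pulling out the normalizers gives the factor $C_\omega/C_\gamma(\beta)^2$, and the $C_\gamma(\beta)^2$ then cancels against the one from Lemma~\ref{pp:var_bound} exactly as you describe---but now leaving the correct harmonic-type expression from the statement rather than $B_{T,\alpha,\beta}(\vrho)$.
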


\subsection{One Step then Uniform}
Another possible approach is to find the optimal value of the variational parameters $\psi_{ij}$ as a function of the parameters $\phi_{ij}$ and then replace it in \cref{eq:papini_lemma}. It is also possible to do the other way around. The result is given in the next proposition.
\begin{prop}
    Under the same conditions as in \cref{pp:papini_lemma}. The optimal values of $\psi_{ij}$ are
    \begin{align*}
        \psi_{ij} = \mu_{j} \frac{\phi_{ij}d_{\alpha}(P_i\Vert Q_j)^{\frac{\alpha-1}{\alpha}}}{\sum_{l=1}^L \phi_{lj}d_{\alpha}(P_l\Vert Q_j)^{\frac{\alpha-1}{\alpha}}},
    \end{align*}
    and the optimal values of $\phi_{ij}$ are
    \begin{align*}
        \phi_{ij} = \frac{\zeta_i \psi_{ij}}{ d_\alpha(P_i\Vert Q_j)} \left(\sum_{k=1}^K\frac{\psi_{ik}}{d_\alpha(P_i\Vert Q_k)}\right)^{-1}.
    \end{align*}
\end{prop}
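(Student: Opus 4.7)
The plan is to treat the variational upper bound in \cref{eq:papini_lemma} as a constrained minimization problem in each block of variables separately and apply Lagrange multipliers. Write the objective as $F(\psi,\phi) = \sum_{i,j} \phi_{ij}^{\alpha} \psi_{ij}^{1-\alpha} c_{ij}$ where $c_{ij} = d_{\alpha}(P_i\|Q_j)^{\alpha-1}\ge 0$. Since $\alpha\ge 1$, the function $\psi\mapsto \psi^{1-\alpha}$ is convex on $\Reals_{>0}$ and $\phi\mapsto \phi^{\alpha}$ is convex as well, so in each case the first-order KKT conditions are sufficient for a global minimum. Crucially, the constraints $\sum_{i}\psi_{ij}=\mu_j$ decouple across $j$ (one multiplier $\lambda_j$ per column), and the constraints $\sum_{j}\phi_{ij}=\zeta_i$ decouple across $i$ (one multiplier $\eta_i$ per row), so each sub-problem reduces to a single constraint with a closed-form solution.

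For the $\psi$-minimization with $\phi$ held fixed, I will form the Lagrangian $\mathcal{L}_\psi = F - \sum_j \lambda_j \bigl(\sum_i \psi_{ij} - \mu_j\bigr)$ and set $\partial_{\psi_{ij}}\mathcal{L}_\psi = 0$, obtaining $(1-\alpha)\phi_{ij}^{\alpha}\psi_{ij}^{-\alpha} c_{ij} = \lambda_j$. Solving for $\psi_{ij}$ gives $\psi_{ij}\propto \phi_{ij}\,c_{ij}^{1/\alpha}$ with proportionality constant depending only on $j$; imposing $\sum_i \psi_{ij}=\mu_j$ pins down this constant, and substituting $c_{ij}^{1/\alpha}=d_{\alpha}(P_i\|Q_j)^{(\alpha-1)/\alpha}$ yields exactly the stated formula for $\psi_{ij}$.

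For the $\phi$-minimization with $\psi$ held fixed, the analogous Lagrangian $\mathcal{L}_\phi = F - \sum_i \eta_i\bigl(\sum_j \phi_{ij}-\zeta_i\bigr)$ gives the stationarity condition $\alpha\phi_{ij}^{\alpha-1}\psi_{ij}^{1-\alpha} c_{ij}=\eta_i$, whence $\phi_{ij}\propto \psi_{ij}\, c_{ij}^{-1/(\alpha-1)}$ with proportionality constant depending only on $i$. Using $c_{ij}^{1/(\alpha-1)} = d_{\alpha}(P_i\|Q_j)$ and imposing $\sum_j \phi_{ij}=\zeta_i$ to eliminate $\eta_i$ produces exactly the second formula in the statement.

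The only point of care is the sign bookkeeping in the first derivative (since $1-\alpha\le 0$) and checking that the stated solutions satisfy $\psi_{ij},\phi_{ij}\ge 0$ as required by \cref{pp:papini_lemma}; positivity is immediate once one assumes $\phi_{lj}>0$ (resp.\ $\psi_{ik}>0$) so that the denominators are strictly positive, while boundary cases where some $\phi_{lj}$ or $\psi_{ik}$ vanish can be handled by a standard limiting argument. I do not expect a genuine obstacle: the entire derivation follows the same template used in \cref{pp:mix_psi} and in the proof of Theorem~5 of \citet{papini2019optimistic}, and the closed-form solutions are verified by direct substitution back into the stationarity equations.
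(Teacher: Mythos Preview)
Your proposal is correct and follows essentially the same approach as the paper: the paper also writes the Lagrangian for the bound in \cref{eq:papini_lemma}, takes the derivative with respect to each variational parameter, solves the stationarity condition, and uses the simplex constraint to eliminate the multiplier, arriving at exactly the formulas you describe. Your additional remarks on convexity and nonnegativity are sound and in fact make explicit a justification the paper leaves implicit.
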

\begin{proof}
    By \cref{eq:papini_lemma}, one knows that
    \begin{align*}
        d_\alpha(\Psi\Vert\Phi)^{\alpha-1}\leq\sum_{i=1}^{L}\sum_{j=1}^K \phi_{ij}^\alpha \psi_{ij}^{1-\alpha} d_\alpha(P_i\Vert Q_j)^{\alpha-1}.
    \end{align*}
    We write the Lagrangian of the optimisation problem:
    \begin{align*}
        \mathcal{L}(\phi_{ij},\phi_{ij},\lambda_i^{\zeta},\lambda_j^{\mu})=\sum_{i=1}^{L}\sum_{j=1}^K \phi_{ij}^\alpha \psi_{ij}^{1-\alpha} d_\alpha(P_i\Vert Q_j)^{\alpha-1} - \sum_{i=1}^L \lambda_i^{\zeta} (\sum_{j=1}^K \phi_{ij} - \zeta_i)
        - \sum_{j=1}^K \lambda_j^{\mu} (\sum_{i=1}^L \psi_{ij} - \mu_j).
    \end{align*}
    We now look at the zero of the derivative with respect to the variational variables:
    \begin{align*}
        \frac{\partial \mathcal{L}}{\partial \phi_{ij}} &= \alpha \phi_{ij}^{\alpha-1} \psi_{ij}^{1-\alpha} d_\alpha (P_i\Vert Q_j)^{\alpha-1}-\lambda_i^{\zeta} =0.
    \end{align*}
    It implies that: 
    \begin{align*}
        \phi_{ij} = \frac{{\lambda_i^{\zeta}}^{\frac{1}{\alpha-1}}\psi_{ij}}{\alpha^{\frac{1}{\alpha-1}} d_\alpha(P_i\Vert Q_j)}.
    \end{align*}
    Recall that $\sum_{j}\phi_{ij}=\zeta_i$ so:
    \begin{align*}
        \left(\frac{{\lambda_i^{\zeta}}}{\alpha}\right)^{\frac{1}{\alpha-1}} \sum_{k=1}^K \frac{\psi_{ik}}{d_\alpha(P_i\Vert Q_k)} = \zeta_l.
    \end{align*}
    This gives the value of $\lambda_i^{\zeta}$:
    \begin{align*}
        \lambda_i^{\zeta} = \frac{\alpha \zeta_i^{\alpha-1}}{\left(\sum_{k=1}^K\frac{\psi_ik}{d_\alpha(P_i\Vert Q_k)}\right)^{\alpha-1}}.
    \end{align*}
    Finally, by replacing,
    \begin{align}
    \label{eq:phi_optim}
        \phi_{ij} = \frac{\zeta_i \psi_{ij}}{ d_\alpha(P_i\Vert Q_j)} \left(\sum_{k=1}^K\frac{\psi_{ik}}{d_\alpha(P_i\Vert Q_k)}\right)^{-1}.
    \end{align}
    We do the same for $\psi_{ij}$:
    \begin{align*}
        \frac{\partial \mathcal{L}}{\partial \psi_{ij}} &= (1-\alpha) \psi_{ij}^{\alpha} \psi_{ij}^{-\alpha} d_\alpha (P_i\Vert Q_j)^{\alpha-1}-\lambda_j^{\mu} = 0.
    \end{align*}
    It implies that 
    \begin{align*}
        \psi_{ij} = \left(\frac{1-\alpha}{\lambda_j^{\mu}}\right)^{\frac{1}{\alpha}}\phi_{ij}d_\alpha(P_i\Vert Q_j)^{\frac{\alpha-1}{\alpha}}.
    \end{align*}
    Recall that $\sum_{i}\psi_{ij}=\beta_j$ so:
    \begin{align*}
        \left(\frac{1-\alpha}{\lambda_j^{\mu}}\right)^{\frac{1}{\alpha}} \sum_{l=1}^L \phi_{lj}d_\alpha(P_l\Vert Q_j)^{\frac{\alpha-1}{\alpha}} = \mu_l.
    \end{align*}
    This gives the value of $\lambda_j^{\beta}$:
    \begin{align*}
        \lambda_j^{\beta} = \frac{1-\alpha}{\mu_j^\alpha}\left(\sum_{l=1}^L \phi_{lj}d_\alpha(P_l\Vert Q_j)^{\frac{\alpha-1}{\alpha}}\right)^\alpha .
    \end{align*}
    Finally, by replacing,
    \begin{align}
    \label{eq:psi_optim}
        \psi_{ij} = \mu_j \frac{\phi_{ij}d_\alpha(P_i\Vert Q_j)^{\frac{\alpha-1}{\alpha}}}{\sum_{l=1}^L \phi_{lj}d_\alpha(P_l\Vert Q_j)^{\frac{\alpha-1}{\alpha}}}.
    \end{align}
\end{proof}

\subsubsection{Uniform $\psi$}
    We use the optimal value of $\phi_{ij}$ from \cref{eq:phi_optim} inside \cref{eq:papini_lemma} and then use a uniform value of $\psi_{ij}$. We call this approach \textit{uniform $\psi$}. 
    Following this approach we have the following proposition. 
    \begin{prop}
    Under similar assumption as in \cref{pp:papini_lemma}, one has
        \begin{align*}
            d_{\alpha}(\Psi\vert\Phi)
            &\leq \sum_{i=1}^L \frac{\zeta_i^\alpha}{L^{1-\alpha}}
            \left(
                \sum_{j=1}^K \frac{\mu_j}{d_\alpha(P_i \Vert Q_j)}
            \right)^{1-\alpha}.
        \end{align*}
    \end{prop}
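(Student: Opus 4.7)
The plan is to start from Lemma~\ref{pp:papini_lemma}, which supplies the master variational inequality
\[
d_\alpha(\Psi\Vert\Phi)^{\alpha-1}\leq\sum_{i=1}^{L}\sum_{j=1}^K \phi_{ij}^\alpha\,\psi_{ij}^{1-\alpha}\,d_\alpha(P_i\Vert Q_j)^{\alpha-1},
\]
and then plug in the closed-form optimizer for $\phi_{ij}$ given in Equation~\eqref{eq:phi_optim}, namely
\[
\phi_{ij} = \frac{\zeta_i\,\psi_{ij}}{d_\alpha(P_i\Vert Q_j)}\left(\sum_{k=1}^{K}\frac{\psi_{ik}}{d_\alpha(P_i\Vert Q_k)}\right)^{-1}.
\]
At this stage the bound still depends on the free variational parameters $\psi_{ij}$, and the final step is to commit to the uniform-in-$i$ choice $\psi_{ij}=\mu_j/L$, which is feasible because $\sum_{i=1}^L \psi_{ij}=\mu_j$ as required.

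The key algebraic step after substitution is to exploit the telescoping between $\phi_{ij}^\alpha$ and $\psi_{ij}^{1-\alpha}$. Expanding, $\phi_{ij}^\alpha \psi_{ij}^{1-\alpha}$ becomes
\[
\zeta_i^\alpha\,\frac{\psi_{ij}}{d_\alpha(P_i\Vert Q_j)^{\alpha}}\left(\sum_{k=1}^{K}\frac{\psi_{ik}}{d_\alpha(P_i\Vert Q_k)}\right)^{-\alpha},
\]
so multiplying by $d_\alpha(P_i\Vert Q_j)^{\alpha-1}$ and summing over $j$ yields exactly one factor of $\sum_{j}\psi_{ij}/d_\alpha(P_i\Vert Q_j)$ in the numerator, which cancels one of the $\alpha$ powers in the denominator. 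The residual inner sum therefore collapses and leaves
\[
d_\alpha(\Psi\Vert\Phi)^{\alpha-1}\leq\sum_{i=1}^{L}\zeta_i^\alpha\left(\sum_{k=1}^{K}\frac{\psi_{ik}}{d_\alpha(P_i\Vert Q_k)}\right)^{1-\alpha}.
\]

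Finally I would substitute $\psi_{ij}=\mu_j/L$, pull the factor $1/L$ out of the inner sum, and use $(1/L)^{1-\alpha}=L^{\alpha-1}=1/L^{1-\alpha}$ to arrive at the claimed expression. I do not anticipate any serious obstacle beyond bookkeeping on the exponents: the non-trivial ingredient (the explicit minimizer in $\phi_{ij}$) is already established, and the remaining work is the cancellation described above followed by a deterministic choice for $\psi_{ij}$ compatible with its marginal constraint. The only subtlety worth double-checking is that the left-hand side of the final display is $d_\alpha(\Psi\Vert\Phi)^{\alpha-1}$ rather than $d_\alpha(\Psi\Vert\Phi)$ itself, so a final cosmetic raising to the power $1/(\alpha-1)$ may be needed depending on how the proposition is stated.
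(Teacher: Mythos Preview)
Your proposal is correct and follows essentially the same route as the paper: substitute the optimal $\phi_{ij}$ from Equation~\eqref{eq:phi_optim} into the variational bound of Lemma~\ref{pp:papini_lemma}, collapse the inner $j$-sum, and then specialize to the uniform choice $\psi_{ij}=\mu_j/L$. Note that in this paper Lemma~\ref{pp:papini_lemma} is stated with $d_\alpha(\Psi\Vert\Phi)$ (not $d_\alpha(\Psi\Vert\Phi)^{\alpha-1}$) on the left-hand side, so no final power of $1/(\alpha-1)$ is needed---your intermediate display already matches the proposition as written.
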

    \begin{proof}
        We inject the value of $\phi_{ij}$ from \cref{eq:phi_optim} inside \cref{eq:papini_lemma} to get
        \begin{align*}
            d_{\alpha}(\Psi\vert\Phi)
            &\leq \sum_{i=1}^L\sum_{j=1}^K \zeta_i^\alpha \frac{\psi_{ij}}{d_\alpha(P_i \Vert Q_j)}
            \left(
                \sum_{k=1}^K \frac{\psi_{ik}}{d_\alpha(P_i \Vert Q_k)}
            \right)^{-\alpha}
            \\
            &= \sum_{i=1}^L \zeta_i^\alpha
            \left(
                \sum_{j=1}^K \frac{\psi_{ij}}{d_\alpha(P_i \Vert Q_j)}
            \right)^{1-\alpha}.
        \end{align*}
        Then we wish now to find values of $\psi_{ij}$ to minimize this bound. Recall that the constraints are $\psi_{ij}\geq 0$ and $\sum_{i=1}^L \psi_{ij}=\mu_j$. We choose a uniform value, that is $\psi_{ij}=\frac{\mu_j}{L}$.
        This gives the result.
    \end{proof}
    
    We now apply this results to the bound in \cref{pp:var_bound} to yield the following result.
    
\begin{prop}[Lower bound with \textit{uniform $\psi$}]
    For $\delta>0$, with probability at least $1-\delta$, it holds that 
    \begin{align*}
        \mathbb{E}\left[\overline{J}_{T,\alpha,\beta} \right] 
        &\geq  \overline{J}_{T,\alpha,\beta} - 
        \sqrt{
            \frac{1-\delta}{\delta} 2  \lVert R \rVert_{\infty}^2  
        \left(
            C_\gamma(\alpha)^2 +
            \beta C_\omega 
            \sum_{s=T+1}^{T+\beta} \widehat{\gamma}^{2s}
            \left(
                \sum_{k=T-\alpha+1}^T\frac{\omega^{T-k}}{ d_2(\nu_{\vrho}(\cdot\vert s)\Vert \nu_{\vrho}(\cdot\vert k))}
            \right)^{-1}.
        \right)
        }
    \end{align*}
\end{prop}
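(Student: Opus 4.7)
The plan is to mimic exactly the derivation of Theorem~\ref{th:rl_bound}, swapping only the last step in which a variational bound on the mixture $2$-Rényi divergence is invoked: instead of the \textit{two steps $\psi$ first} bound from Lemma~\ref{pp:divergence_bound}, I will plug in the \textit{uniform $\psi$} bound proved just above. All the probabilistic machinery (Cantelli) and the variance decomposition (Lemma~\ref{pp:var_bound}) are reused verbatim.

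Concretely, first I apply Cantelli's inequality to the random variable $\overline{J}_{T,\alpha,\beta}(\vrho)$ under $\mathbb{P}^{\vrho}_{T,\alpha}$, as in the proof of Theorem~\ref{th:rl_bound}, obtaining that with probability at least $1-\delta$,
\begin{align*}
    \mathbb{E}^{\vrho}_{T,\alpha}\!\left[\overline{J}_{T,\alpha,\beta}(\vrho)\right] \geq \overline{J}_{T,\alpha,\beta}(\vrho) - \sqrt{\tfrac{1-\delta}{\delta}\,\mathbb{V}\mathrm{ar}^{\vrho}_{T,\alpha}\!\left[\overline{J}_{T,\alpha,\beta}(\vrho)\right]}.
\end{align*}
Then I invoke Lemma~\ref{pp:var_bound} to upper bound the variance by
$2R_{\max}^{2}\bigl(C_\gamma(\alpha)^{2} + C_\gamma(\beta)^{2}\,d_{2}(\Psi\|\Phi)\bigr)$,
where $\Psi = \tfrac{1}{C_\gamma(\beta)}\sum_{s=T+1}^{T+\beta}\widehat{\gamma}^{s}\nu_{\vrho}(\cdot|s)$ and $\Phi = \tfrac{1}{C_\omega}\sum_{t=T-\alpha+1}^{T}\omega^{T-t}\nu_{\vrho}(\cdot|t)$.

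The key step is to bound $d_{2}(\Psi\|\Phi)$ with the \textit{uniform $\psi$} proposition, specialized to $\alpha=2$, $L=\beta$, $K=\alpha$, mixture weights $\zeta_{s}=\widehat{\gamma}^{s}/C_\gamma(\beta)$ (for $s\in[\![T+1,T+\beta]\!]$) and $\mu_{k}=\omega^{T-k}/C_\omega$ (for $k\in[\![T-\alpha+1,T]\!]$), and components $P_{s}=\nu_{\vrho}(\cdot|s)$, $Q_{k}=\nu_{\vrho}(\cdot|k)$. This gives
\begin{align*}
    d_{2}(\Psi\|\Phi) \leq \beta\sum_{s=T+1}^{T+\beta}\frac{\widehat{\gamma}^{2s}}{C_\gamma(\beta)^{2}}\left(\sum_{k=T-\alpha+1}^{T}\frac{\omega^{T-k}/C_\omega}{d_{2}(\nu_{\vrho}(\cdot|s)\|\nu_{\vrho}(\cdot|k))}\right)^{-1}.
\end{align*}

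Finally, I multiply through by $C_\gamma(\beta)^{2}$, which cancels exactly with the $C_\gamma(\beta)^{-2}$ appearing inside the sum, and factor out $C_\omega$ from the harmonic-mean term, to obtain the stated bound. I expect no real obstacle here: the proof is a rearrangement, and the only point requiring care is the bookkeeping of the normalizing constants $C_\gamma(\beta)$ and $C_\omega$ when relabelling the generic indices $(L,K,\zeta,\mu)$ of the \textit{uniform $\psi$} lemma into the specific mixtures $\Psi$ and $\Phi$ of Lemma~\ref{pp:var_bound}.
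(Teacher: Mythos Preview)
Your proposal is correct and follows exactly the approach the paper takes: the paper simply states ``We now apply this result to the bound in Lemma~\ref{pp:var_bound},'' which amounts to reusing the Cantelli-plus-variance-bound template of Theorem~\ref{th:rl_bound} and replacing Lemma~\ref{pp:divergence_bound} with the \textit{uniform $\psi$} bound. Your bookkeeping of the constants $C_\gamma(\beta)$ and $C_\omega$ is exactly right.
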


\subsubsection{Uniform $\phi$}
We repeat the previous approach but switching the role of $\phi_{ij}$ and $\psi_{ij}$.
    We derive the following proposition. 
    \begin{prop}
    Under similar assumption as in \cref{pp:papini_lemma}, one has
        \begin{align*}
            d_{\alpha}(\Psi\vert\Phi)
            &\leq \sum_{j=1}^K \frac{\mu_j^{1-\alpha}}{K^{\alpha}}  \left(\sum_{i=1}^L \zeta_i d_\alpha(P_i \Vert Q_j)^{\frac{\alpha-1}{\alpha}}
            \right)^{\alpha}.
        \end{align*}
    \end{prop}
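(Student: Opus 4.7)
\textbf{Proof plan for the \emph{uniform $\phi$} proposition.} The strategy mirrors the preceding \emph{uniform $\psi$} argument, with the roles of the two sets of variational parameters swapped. The starting point is Lemma~\ref{pp:papini_lemma}, which gives
\[
d_{\alpha}(\Psi\Vert\Phi)^{\alpha-1} \;\le\; \sum_{i=1}^{L}\sum_{j=1}^{K} \phi_{ij}^{\alpha}\,\psi_{ij}^{1-\alpha}\,d_{\alpha}(P_i\Vert Q_j)^{\alpha-1}.
\]
Rather than keeping $\phi_{ij}$ free and minimizing in $\psi_{ij}$ under the marginal constraint $\sum_i \psi_{ij}=\mu_j$, here I proceed the other way around: first substitute the optimal $\psi_{ij}$ (already derived in \eqref{eq:psi_optim}), which leaves a bound depending only on $\phi_{ij}$, and then pick the simplest admissible $\phi_{ij}$.

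The first step is purely algebraic. Plugging $\psi_{ij}=\mu_j\,\phi_{ij}\,d_\alpha(P_i\Vert Q_j)^{(\alpha-1)/\alpha}/\sum_{l}\phi_{lj}\,d_\alpha(P_l\Vert Q_j)^{(\alpha-1)/\alpha}$ into the Lemma, the factors of $\phi_{ij}$ combine as $\phi_{ij}^{\alpha}\cdot\phi_{ij}^{1-\alpha}=\phi_{ij}$; the exponents of $d_\alpha(P_i\Vert Q_j)$ combine to $(\alpha-1)-(\alpha-1)^{2}/\alpha=(\alpha-1)/\alpha$; and the normalizer raised to the power $\alpha-1$ is independent of $i$, so it can be pulled out of the $i$-sum. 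Performing the inner sum then produces exactly one extra factor of that normalizer, yielding
\[
d_{\alpha}(\Psi\Vert\Phi)^{\alpha-1} \;\le\; \sum_{j=1}^{K} \mu_j^{1-\alpha} \left(\sum_{i=1}^{L} \phi_{ij}\,d_{\alpha}(P_i\Vert Q_j)^{(\alpha-1)/\alpha}\right)^{\!\alpha}.
\]

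The second step is to choose $\phi_{ij}$ uniformly in $j$: set $\phi_{ij}=\zeta_i/K$, which is non-negative and respects the constraint $\sum_{j=1}^{K}\phi_{ij}=\zeta_i$ of Lemma~\ref{pp:papini_lemma}. Substituting and pulling $1/K^{\alpha}$ out of the $\alpha$-th power yields
\[
d_{\alpha}(\Psi\Vert\Phi)^{\alpha-1} \;\le\; \sum_{j=1}^{K} \frac{\mu_j^{1-\alpha}}{K^{\alpha}}\left(\sum_{i=1}^{L}\zeta_i\,d_{\alpha}(P_i\Vert Q_j)^{(\alpha-1)/\alpha}\right)^{\!\alpha},
\]
which is the claimed inequality (after taking the $(\alpha-1)$-th root is absorbed in the definition of $d_\alpha$ used throughout the paper, consistently with the analogous \emph{uniform $\psi$} proposition).

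The only non-trivial piece of work is the exponent bookkeeping in the substitution step; everything else is transcription. I do not expect any real obstacle, since the optimal $\psi_{ij}$ was already obtained in closed form and the uniform choice of $\phi_{ij}$ automatically satisfies the marginal constraint. A small sanity check one might want to include is the monotonicity in $K$: the bound should degrade gracefully as the mixture $\Phi$ uses more components, which is reflected by the $1/K^{\alpha}$ being paired with a $K$-term sum.
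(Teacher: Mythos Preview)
Your proposal is correct and follows essentially the same approach as the paper: substitute the optimal $\psi_{ij}$ from \eqref{eq:psi_optim} into Lemma~\ref{pp:papini_lemma}, simplify to obtain $\sum_{j}\mu_j^{1-\alpha}\bigl(\sum_i \phi_{ij}\,d_\alpha(P_i\Vert Q_j)^{(\alpha-1)/\alpha}\bigr)^{\alpha}$, and then set $\phi_{ij}=\zeta_i/K$. Your exponent bookkeeping is accurate, and your remark about the $(\alpha-1)$-th root matches the paper's (somewhat loose) convention used in the analogous \emph{uniform $\psi$} proposition.
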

    \begin{proof}
        We inject the value of $\psi_{ij}$ from \cref{eq:phi_optim} inside \cref{eq:papini_lemma} to get
        \begin{align*}
            d_{\alpha}(\Psi\vert\Phi)
            &\leq \sum_{i=1}^L\sum_{j=1}^K \mu_j^{1-\alpha} \frac{\phi_{ij}d_\alpha(P_i \Vert Q_j)^{\frac{\alpha-1}{\alpha}}}{\left(\sum_{l=1}^L \phi_{lj}d_\alpha(P_l \Vert Q_j)^{\frac{\alpha-1}{\alpha}}\right)^{1-\alpha}}
            \\
            &= \sum_{j=1}^K \mu_j^{1-\alpha} \left(\sum_{i=1}^L \phi_{ij}d_\alpha(P_i \Vert Q_j)^{\frac{\alpha-1}{\alpha}}
            \right)^{\alpha}.
        \end{align*}
        Then we wish now to find values of $\phi_{ij}$ to minimize this bound. Recall that the constraints are $\phi_{ij}\geq 0$ and $\sum_{j=1}^K \phi_{ij}=\zeta_i$. We choose a uniform value, that is $\phi_{ij}=\frac{\zeta_i}{K}$.
        This gives the result.
    \end{proof}
    
    We now apply this results to the bound in \cref{pp:var_bound} to yield the following result.
    
\begin{prop}[Lower bound with \textit{uniform $\psi$}]
    For $\delta>0$, with probability at least $1-\delta$, it holds that 
    \begin{align*}
        \mathbb{E}\left[\overline{J}_{T,\alpha,\beta} \right] 
        &\geq  \overline{J}_{T,\alpha,\beta} - 
        \sqrt{
            \frac{1-\delta}{\delta} 2 C^2 \alpha \lVert R \rVert_{\infty}^2  
       \left(
            C_\gamma(\alpha)^2 +
            \frac{C_\omega}{\alpha^2}
            \sum_{k=T-\alpha+1}^T \frac{1}{\omega^{T-k}} \left(\sum_{s=T+1}^{T+\beta}\widehat{\gamma}^s d_2(\nu_{\vrho}(\cdot\vert s)\Vert \nu_{\vrho}(\cdot\vert k))^{\frac{1}{2}}
            \right)^{2}
        \right).
        }
    \end{align*}
\end{prop}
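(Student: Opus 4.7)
\textbf{Proof plan for Theorem~\ref{thr:thrlbound}.} The strategy is a three-step pipeline: (i) convert a variance bound into a high-probability one-sided deviation bound via Cantelli's inequality, (ii) substitute the variance bound of Lemma~\ref{pp:var_bound}, and (iii) replace the intractable \Renyi divergence between hyper-policy mixtures by the closed-form upper bound of Lemma~\ref{pp:divergence_bound}, at which point the quantity $B_{T,\alpha,\beta}(\vrho)$ appears by definition.

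First I would apply the one-sided Cantelli inequality to the random variable $\overline{J}_{T,\alpha,\beta}(\vrho)$ (which is a function of the samples drawn under the joint hyper-policy $\prod_{t=T-\alpha+1}^{T}\nu_{\vrho}(\cdot\vert t)$): for any $t>0$,
\begin{equation*}
\mathbb{P}\!\left(\overline{J}_{T,\alpha,\beta}(\vrho)-\mathbb{E}_{T,\alpha}^{\vrho}[\overline{J}_{T,\alpha,\beta}(\vrho)]\ge t\right)\le \frac{\mathbb{V}\mathrm{ar}^{\vrho}_{T,\alpha}[\overline{J}_{T,\alpha,\beta}(\vrho)]}{t^2+\mathbb{V}\mathrm{ar}^{\vrho}_{T,\alpha}[\overline{J}_{T,\alpha,\beta}(\vrho)]}.
\end{equation*}
Setting the right-hand side equal to $\delta$ gives $t=\sqrt{\tfrac{1-\delta}{\delta}\,\mathbb{V}\mathrm{ar}^{\vrho}_{T,\alpha}[\overline{J}_{T,\alpha,\beta}(\vrho)]}$, and passing to the complementary event yields, with probability at least $1-\delta$,
\begin{equation*}
\mathbb{E}_{T,\alpha}^{\vrho}[\overline{J}_{T,\alpha,\beta}(\vrho)] \;\ge\; \overline{J}_{T,\alpha,\beta}(\vrho)-\sqrt{\tfrac{1-\delta}{\delta}\,\mathbb{V}\mathrm{ar}^{\vrho}_{T,\alpha}[\overline{J}_{T,\alpha,\beta}(\vrho)]}.
\end{equation*}

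Next I would plug in Lemma~\ref{pp:var_bound}, replacing the variance by $2R_{\max}^2\bigl(C_\gamma(\alpha)^2+C_\gamma(\beta)^2\, d_2(\,\cdot\,\Vert\,\cdot\,)\bigr)$, where the divergence is between the two mixture hyper-policies. Then I would apply Lemma~\ref{pp:divergence_bound} to upper bound this divergence by $\tfrac{C_\omega}{C_\gamma(\beta)^2}\,B_{T,\alpha,\beta}(\vrho)$, which conveniently cancels the $C_\gamma(\beta)^2$ factor and produces exactly the $C_\omega B_{T,\alpha,\beta}(\vrho)$ term in the statement. Monotonicity of the square root ensures the high-probability inequality is preserved under these substitutions.

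I do not expect genuine obstacles: Cantelli's inequality is standard, the variance bound is established upstream, and the divergence bound does all the heavy lifting needed to obtain a closed-form expression. The only mildly delicate point is matching constants after substitution, namely verifying that the $C_\gamma(\beta)^2$ appearing as a prefactor on the \Renyi term in Lemma~\ref{pp:var_bound} exactly cancels the $1/C_\gamma(\beta)^2$ sitting in Lemma~\ref{pp:divergence_bound}, which leaves $C_\omega B_{T,\alpha,\beta}(\vrho)$ as written.
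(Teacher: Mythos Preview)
Your three-step skeleton (Cantelli, then the variance bound of Lemma~\ref{pp:var_bound}, then a variational upper bound on the mixture \Renyi divergence) is exactly the route the paper takes. The problem is step~(iii): you invoke Lemma~\ref{pp:divergence_bound}, which is the \emph{two steps $\psi$ first} bound and produces $C_\omega B_{T,\alpha,\beta}(\vrho)$. That would establish Theorem~\ref{thr:thrlbound}, not the proposition you were asked to prove. The displayed expression in the statement---with the prefactor $C_\omega/\alpha^{2}$, the sum $\sum_{k}\omega^{-(T-k)}$, and the inner factor $\bigl(\sum_{s}\widehat{\gamma}^{s}d_2(\cdot\Vert\cdot)^{1/2}\bigr)^{2}$---does not arise from Lemma~\ref{pp:divergence_bound} at all.

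What you need in step~(iii) is the \emph{uniform $\phi$} variational bound derived just above the proposition in the appendix, namely
\[
d_{2}(\Psi\Vert\Phi)\;\le\;\sum_{j=1}^{K}\frac{\mu_{j}^{-1}}{K^{2}}\Bigl(\sum_{i=1}^{L}\zeta_{i}\,d_{2}(P_{i}\Vert Q_{j})^{1/2}\Bigr)^{2}.
\]
Instantiating with $K=\alpha$, $\mu_{j}=\omega^{T-j}/C_\omega$, $\zeta_{i}=\widehat{\gamma}^{i}/C_\gamma(\beta)$ and then multiplying by the $C_\gamma(\beta)^{2}$ prefactor from Lemma~\ref{pp:var_bound} is what yields the $\tfrac{C_\omega}{\alpha^{2}}\sum_{k}\omega^{-(T-k)}(\cdots)^{2}$ term in the statement (the $C_\gamma(\beta)^{2}$ cancels against the $1/C_\gamma(\beta)^{2}$ coming from $\zeta_i^2$, just as you anticipated for the other lemma). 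So the fix is simply to swap out the divergence lemma; the rest of your argument goes through unchanged. (Note also that the proposition is titled ``uniform $\psi$'' in the paper but sits in the \emph{Uniform $\phi$} subsection and uses that bound---the title appears to be a typo.)
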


\subsection{Comparison of the Bounds}
In this section, we discuss, from the six lower-bounds of the variance of the total return $\overline{J}_{T,\alpha,\beta}(\vrho)$, which is tighter and, when used in the optimization, which constrains the hyper-policy toward being stationary. 

To do so, we design the following test.
Our hyper-policy will be sinusoidal, that is, it will output the following mean for the policy parameter $\theta$:
\begin{align*}
    \theta_t = A \sin(\phi t + \psi) + B.
\end{align*}
We choose such a hyper-policy since the scale parameter $A$ is the parameter which controls non-stationarity. 
Ideally, optimizing our lower-bound, we should see $A$ converging to 0.
The environment is not relevant for this study since we optimize the hyper-policy solely on the variance term, discarding $\overline{J}_{T,\alpha,\beta}(\vrho)$. 
However, for completeness, we report that the environment is a contextual bandit, where the context follows a sinusoidal function. 

From \cref{fig:bandit_A} we see that the most efficient methods when it comes to making the hyper-policy stationary, and thus push $A$ toward 0, are \textit{uniform $\Psi$}, \textit{two steps $\Psi$ first}, \textit{two steps $\Phi$ first} and the \textit{direct optimization with reset}.

From \cref{fig:bandit_var} we see however that the upper-bound is less smooth for the convex optimization based approaches. Moreover, the upper-bound is tighter for \textit{uniform $\Psi$} and \textit{two steps $\Psi$ first}. 
The final choice of our bound in practice will thus be made between those two. 
The bounds have similarities but we believe that the \textit{two steps $\Psi$ first} may better adapt to other scenarios, since setting one set of the variational parameters to a uniform distribution as in \textit{uniform $\Psi$} doesn't seem to be a robust choice.


\begin{figure*}[t]
\begin{minipage}{.47\textwidth}
\centering
  \includegraphics[scale=0.36]{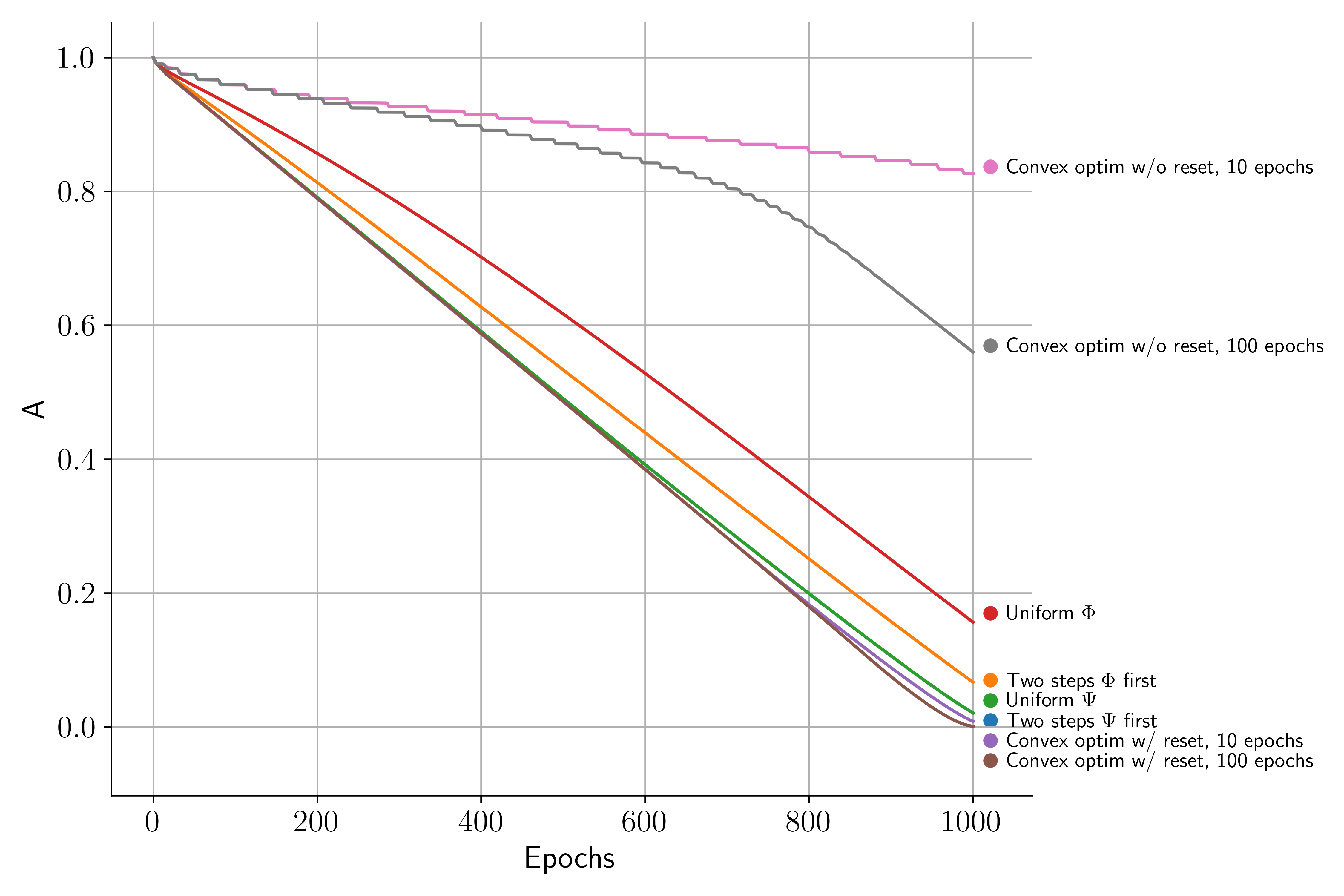}
  \captionof{figure}{Evolution of the scale parameter for several approaches on the upper-bounds of the variance. Note that the value of A for the \textit{two steps $\Psi$ first} and \textit{uniform $\Psi$} are confounded.}
  \label{fig:bandit_A}
\end{minipage}
\hfill
\begin{minipage}{.47\textwidth}
\centering
  \includegraphics[scale=0.36]{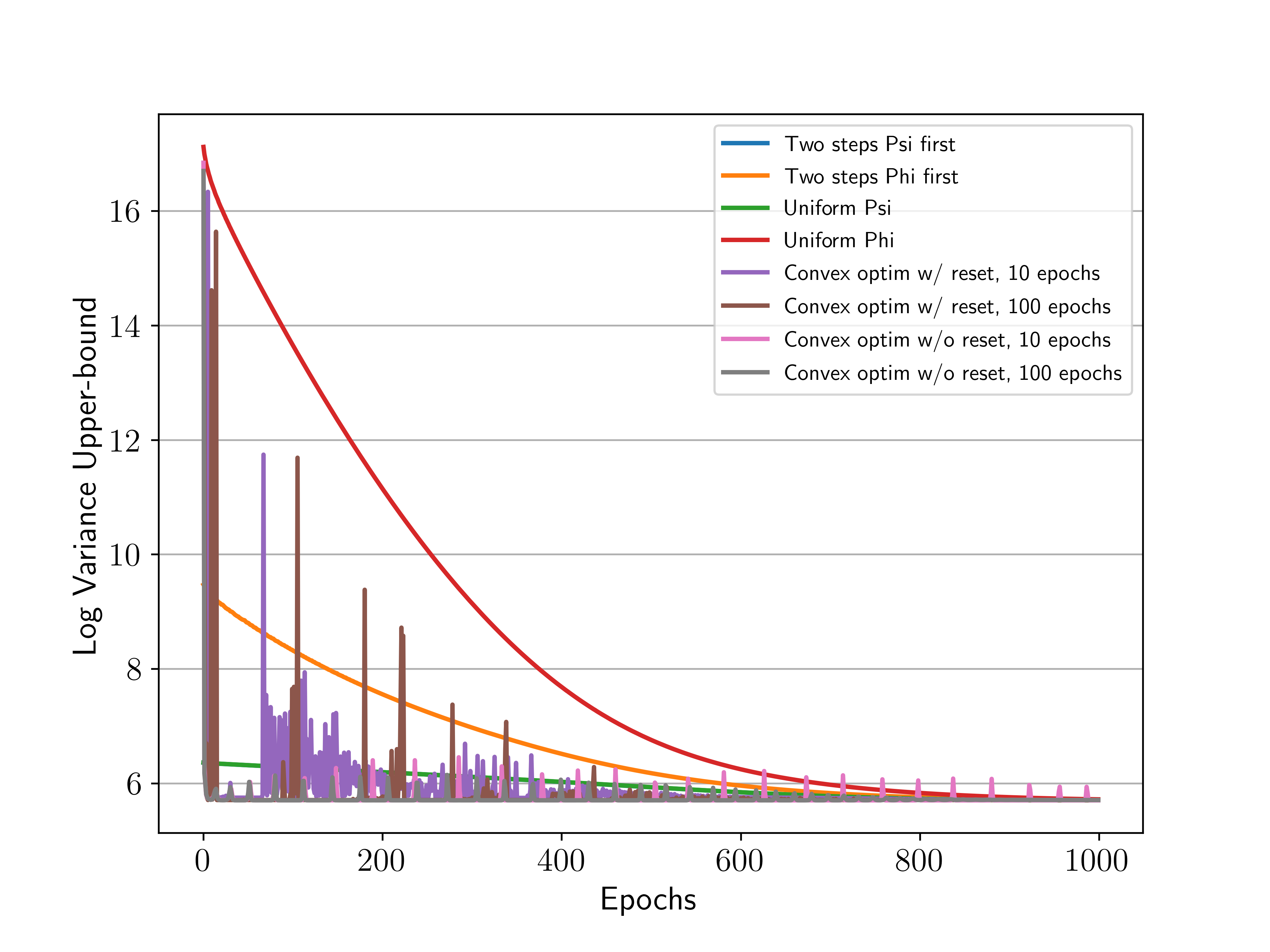}
  \captionof{figure}{Evolution of the variational upper-bound on the variance for several approaches. Here again, the log upper-bound for the \textit{two steps $\Psi$ first} and \textit{uniform $\Psi$} are confounded.}
  \label{fig:bandit_var}
\end{minipage}
\end{figure*}

\section{Experiments Details}
\label{app:exp_details}

All experiments are conducted on Python using Pytorch \cite{NEURIPS2019_9015} as deep learning library. 
Below, we give more details the parameters of each algorithm used in our experiments.

\subsection{Hyper-parameters}
\subsubsection{POLIS}
The hyper-policy has 1046 parameters for the Trading environment and 1040 for the Dam. This is due to the different size of the environment state in the Dam and Trading environments.
Indeed, the state has two entries for the Trading and one for the Dam. This makes 3 parameters for the affine policy for Trading and 2 for the Dam.

Recall that we have set $\alpha=500$, $\gamma=\omega=1$. For the gradient optimization, we use RMSprop \cite{Tieleman2012} with a learning rate of $1e-3$, smoothing constant $\alpha_{RMSprop}$ of 0.9 and parameter $\epsilon_{RMSprop}=1e-10$ for numerical stability. Before each gradient step, we sample 100 replayed trajectory on the last $\alpha$ steps in order to take into account the hyper-policy's stochasticity.
We randomly initialize the hyper-policy's convolutional parameters using Pytorch's default initialization. 
We set the size of the channels inside the temporal convolution of the hyper-policy to $[8,8,4]$, their kernel size to 3 and the positional encoding's dimension to 8. 
We use the same hyper-policy in the behavioural and target period  at the exception of the vector of policy parameters' standard deviation $\boldsymbol{\sigma}$. For the behavioural hyper-policy it is set for all entries to $e^{\frac12}$. For the target hyper-policy, whether it will be fixed during training or not, its initial value is set to a vector of $e^{-1}$.

The remaining parameters are the one which will vary during hyper-parameter search. They are the variance regularization level $\lambda$, the steps for estimation of future performance $\beta$ and whether or not to fix $\boldsymbol{\sigma}$ during training. 
We tested values from $\{10,100,1000\}$ for $\lambda$ $\{10,100,50\}$ for $\beta$ and whether or not to fix $\boldsymbol{\sigma}$ on the Trading environment. For the Dam, we also tested whether or not to fix $\boldsymbol{\sigma}$. We explored values of $\lambda$ in $\{0,10,100\}$ and $\beta$ in $\{10,50,100\}$.

We report the best performing set of hyper-parameters in \cref{tab:opt_params_polis}.

\begin{table}[]
\centering
\begin{tabular}{l|rrrr}
  & \multicolumn{1}{c}{Dam} & \multicolumn{1}{c}{Trading (1)} & \multicolumn{1}{c}{Trading (2)} &  \\ \hline\hline 
$\lambda$ &  100 & 10 & 10 &   \\
$\beta$ &  50 & 500 & 100 &     \\
Fix $\boldsymbol{\sigma}$ &  False & True & True &   
\end{tabular}
\caption{Optimal parameters for the POLIS algorithm on each dataset. The Trading (1) has hyper-parameters selected on the return of 2009-2012 dataset while Trading (2)'s hyper-parameters are selected on the mean return of 2013-2016 and 2017-2020.}
\label{tab:opt_params_polis}
\end{table}

\subsubsection{Stationary Hyper-policy}
The stationary hyper-policy shares most of its parameters with POLIS. Of course, it doesn't use the temporal convolution and the positional encoding and its optimization involves only the $\alpha$-step behind expected return, so $\beta$ and $\lambda$ are not used. The standard deviation $\boldsymbol{\sigma}$ can still be learned or fixed, we report the best performing set of hyper-parameters in \cref{tab:opt_params_polis_stat}.

\begin{table}[h]
\centering
\begin{tabular}{l|rrrr}
  & \multicolumn{1}{c}{Dam} & \multicolumn{1}{c}{Trading (1)} & \multicolumn{1}{c}{Trading (2)} &  \\ \hline\hline
Fix $\boldsymbol{\sigma}$ &  False & True & True &   
\end{tabular}
\caption{Optimal parameters for the Stationary hyper-policy on each dataset. The Trading (1) has hyper-parameters selected on the return of 2009-2012 dataset while Trading (2)'s hyper-parameters are selected on the mean return of 2013-2016 and 2017-2020.}
\label{tab:opt_params_polis_stat}
\end{table}

\subsubsection{Baselines}
Although the three baselines Pro-OLS, Pro-WLS and ONPG implement different ideas, their parameters are similar. 
They all proceed to a number of inner optimizations steps each time the policy is updated, which we set to 10. We also set the standard deviation of the normal distribution under which the action is selected to $0.5$. 
The policy learns features from the state with a neural network which we refer to as state representation module. We refer to the policy module as the neural network which takes  the output features of the state representation and outputs the actions. We set the learning rate of both the state representation module and the policy module to $1e-2$. The policy module uses a hidden layer of 16 neurons. The number of neurons of the state representation module is discussed below. 
We use a buffer size of 1000 and a maximum horizon inside the buffer of 150.

We then proceed to a grid search over following the recommendations for the hyper-parameters values of \cite{chandak2020optimizing}. For the entropy regularization parameter $\lambda_{entropy}$, we consider the set $\{0,1e-3,1e-2\}$. for the importance clipping threshold $t_{IS}$, $\{10,15\}$. The state representation module's neurons per layer $n_{neurons}$ is explored in $\{32,64,[32, 32]\}$, where $[32,32]$ corresponds to 2 layers of 32 neurons. We consider the set $\{5,7\}$ for the size of the extrapolator Fourier basis $k_{Fourier}$ which is used in the performance prediction. We consider the set $\{1,3,5\}$ for the number of step ahead to predict the performance $\delta$. 

\begin{table}[h]
\centering
\begin{tabular}{l|rrr|rrr|rrr}
  & \multicolumn{3}{c|}{Dam}  & \multicolumn{3}{c|}{Trading (1)}  & \multicolumn{3}{c}{Trading (2)}  \\ \hline
  & \multicolumn{1}{c}{Pro-OLS}  & \multicolumn{1}{c}{Pro-WLS} & \multicolumn{1}{c|}{ONPG} &  \multicolumn{1}{c}{Pro-OLS} & \multicolumn{1}{c}{Pro-WLS} & \multicolumn{1}{c|}{ONPG}  & \multicolumn{1}{c}{Pro-OLS} & \multicolumn{1}{c}{Pro-WLS} & \multicolumn{1}{c}{ONPG} \\ \hline\hline
$\lambda_{entropy}$ &0&$1e-2$&$1e-2$ &$1e-3$&$1e-2$&0 &0&$1e-2$&$1e-3$   \\
$t_{IS}$ &10&15&15 &10&15&10 &15&15&15    \\
$n_{neurons}$ &[32,32]&[32,32]&[32,32] &64&32&32 &[32,32]&[32,32]&64  \\
$k_{Fourier}$ &7&7&7 &7&7&5 &7&7&5 \\
$\delta$ &1&5&5 &5&1&1 &1&5&5
\end{tabular}
\caption{Optimal parameters for the Pro-OLS, Pro-WLS and ONPG baselines. The Trading (1) has hyper-parameters selected on the return of 2009-2012 dataset while Trading (2)'s hyper-parameters are selected on the mean return of 2013-2016 and 2017-2020.}
\label{tab:opt_params_baselines}
\end{table}

\subsection{Description of Dataset}

\subsubsection{Dam}
Recall that the cost that the agent gets is a convex combination of the cost related to flooding and the one for not meeting the daily demand. The parameters of this convex combination are, respectively, 0.3 and 0.7 for the first inflow profile, 0.8 and 0.2 for the second and 0.35 and 0.65 for the last one.
We give the mean inflow throughout the year for each profile in \cref{fig:dam_inflows}.

\label{app:dam_dataset}
The three inflows used for the Dam environment are given in 
\begin{figure*}[t]
  \centering
  \includegraphics[scale=0.36]{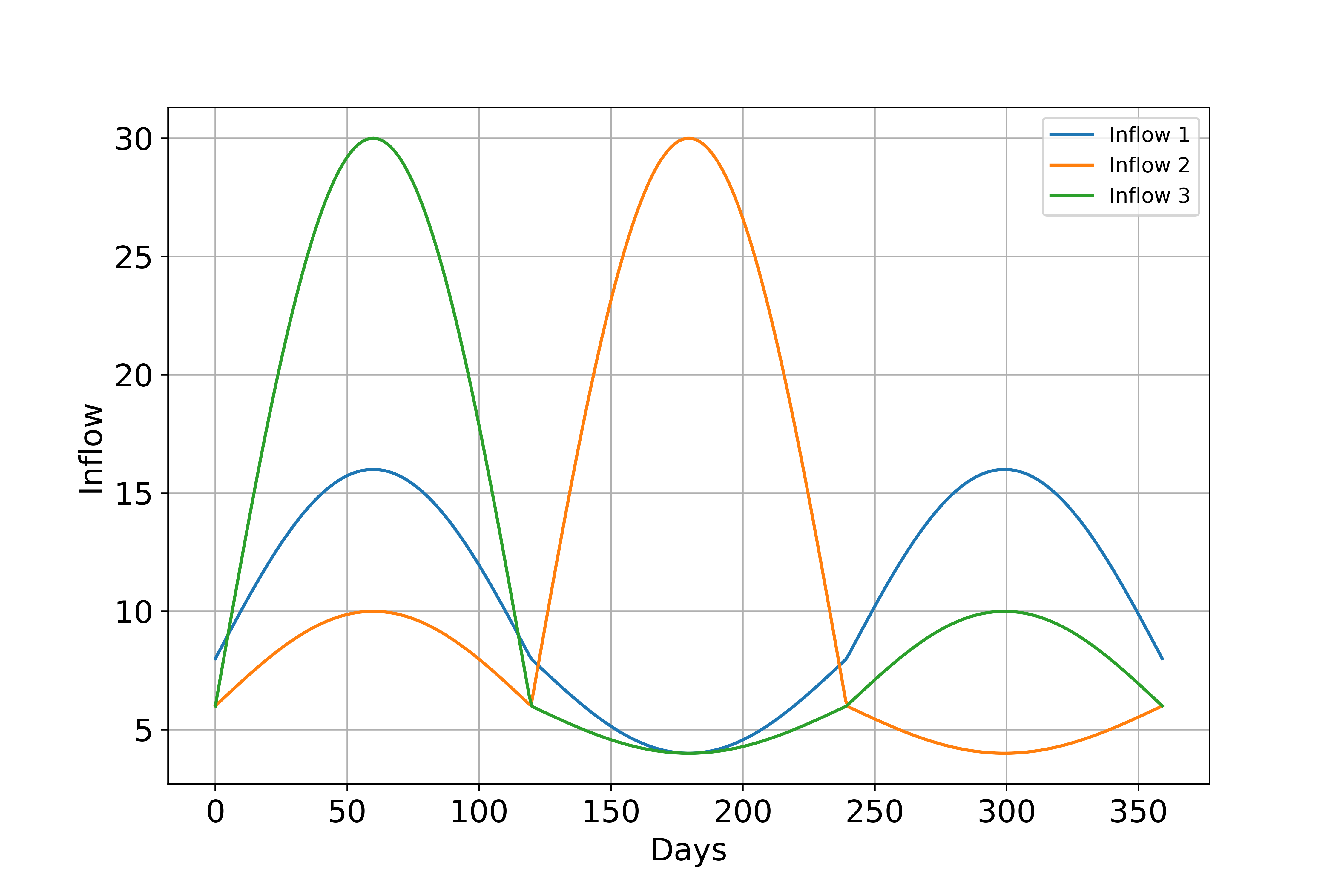}
  \caption{The three inflow profiles for the Dam experiment.}
  \label{fig:dam_inflows}
\end{figure*}

\subsubsection{Trading}
The Trading dataset is composed of the day price of the EUR-USD rate as given in \cref{fig:eurusd_dataset}.

\begin{figure*}[t]
\centering

\begin{subfigure}{.3\textwidth}
  \centering
  \includegraphics[width=\linewidth]{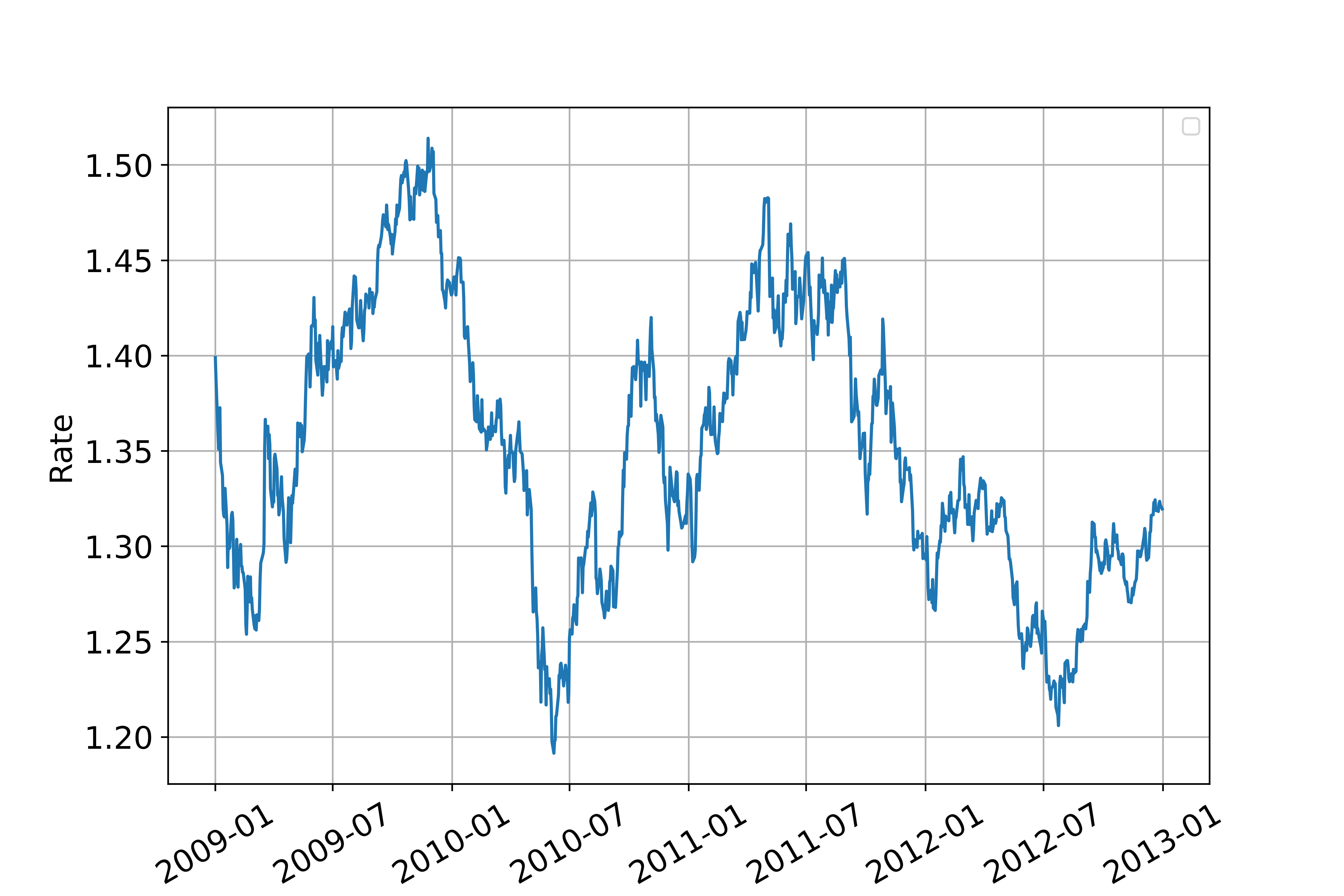}
          \caption*{Dataset 2009-2012.}
\end{subfigure}%
\hfill
\begin{subfigure}{.3\textwidth}
  \centering
  \includegraphics[width=\linewidth]{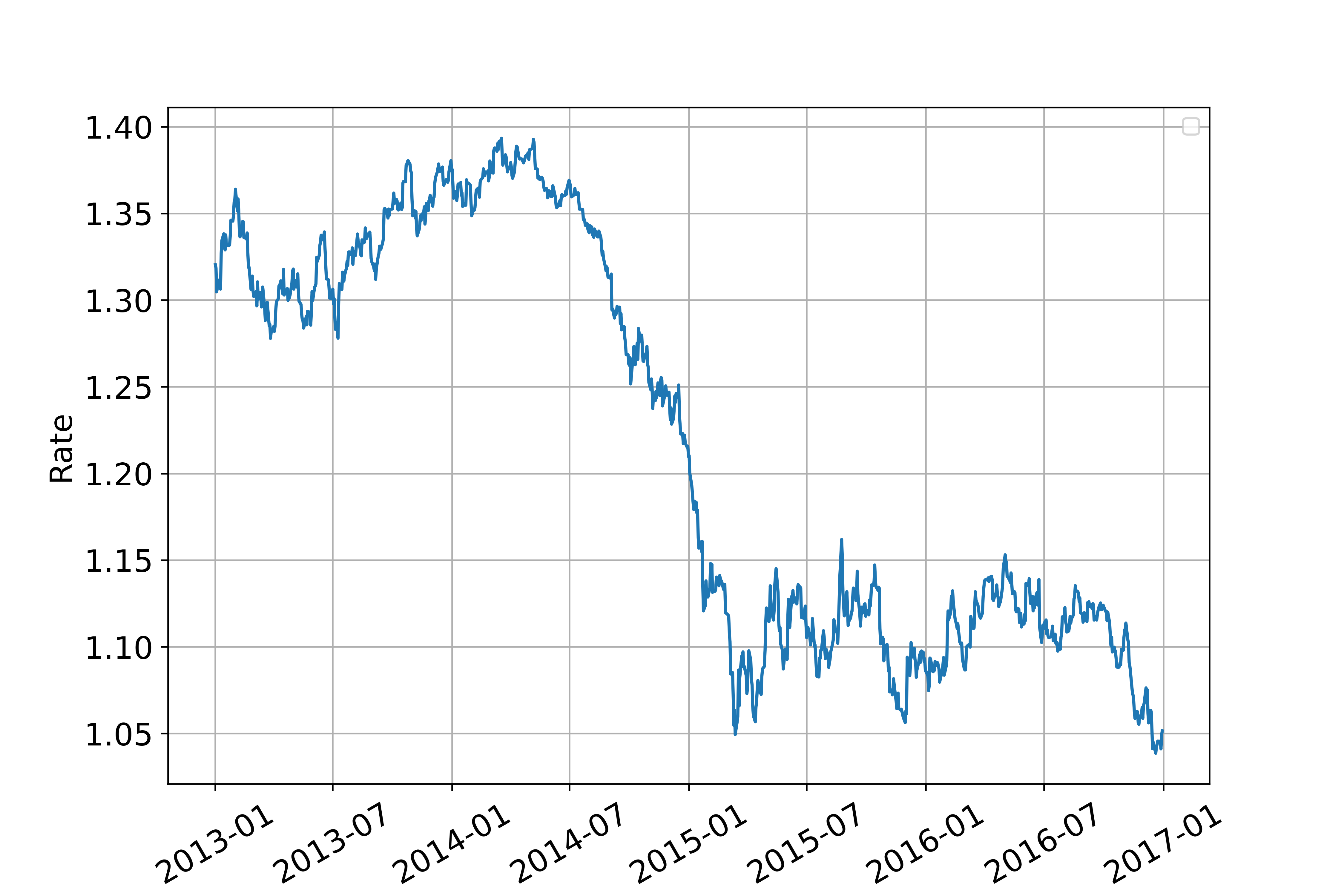}
          \caption*{Dataset 2013-2016.}
\end{subfigure}%
\hfill
\begin{subfigure}{.3\textwidth}
  \centering
  \includegraphics[width=\linewidth]{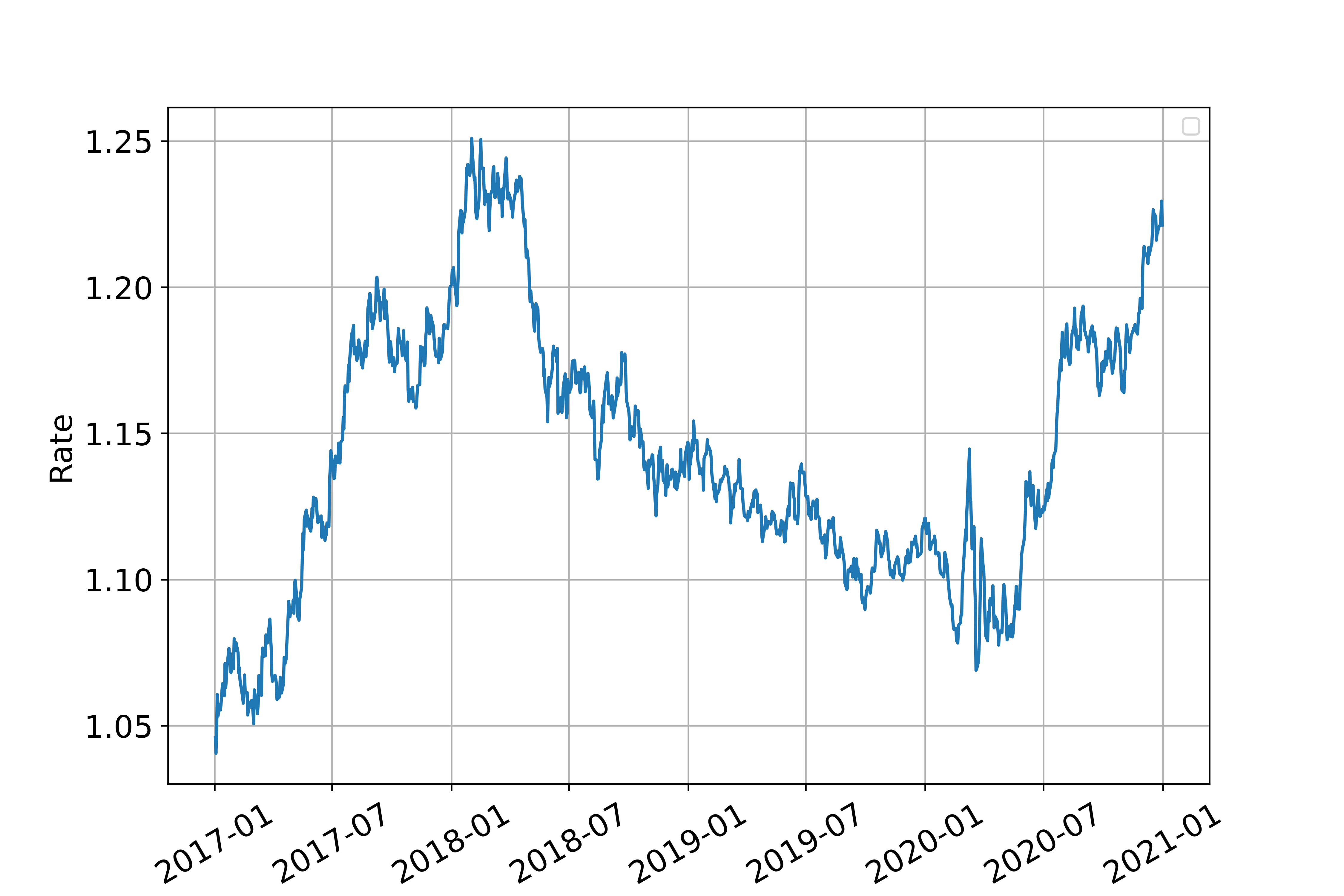}
          \caption*{Dataset 2017-2020.}
\end{subfigure}%
\captionof{figure}{Value of the rate of the EUR-USD on the period 2009-2020, divided in the three datasets used in experiments.}\label{fig:eurusd_dataset}
\end{figure*}

\subsection{Further Experiments}

We report in this section extra experiments for the POLIS algorithm on the Vasicek process. We study values of $\lambda$ in the range $[1,100]$ and values of $\beta$ in the range $(1,100]$ ($\beta=1$ is not considered as it does not involve a mixture of distribution in the variance bound). 
We are interested in the trade-off between the return and the standard deviation of the rewards. 
The results are reported in \cref{fig:pareto_vasicek} and \cref{fig:heatmap_vasicek}. The result suggest that a smaller $\beta$ allows for a general better return a the expense of a higher standard deviation. 
The dependence on $\lambda$ is less clear. However it can be seen that generally, smaller values of $\lambda$ have the same effect as the smaller values of $\beta$.
For a clearer view of this experiment, we report the same plots but for only one value of the other paramter in \cref{fig:pareto_vasicek_one}.

\begin{figure*}[t]
\centering

\begin{subfigure}{.45\textwidth}
  \centering
  \includegraphics[width=\linewidth]{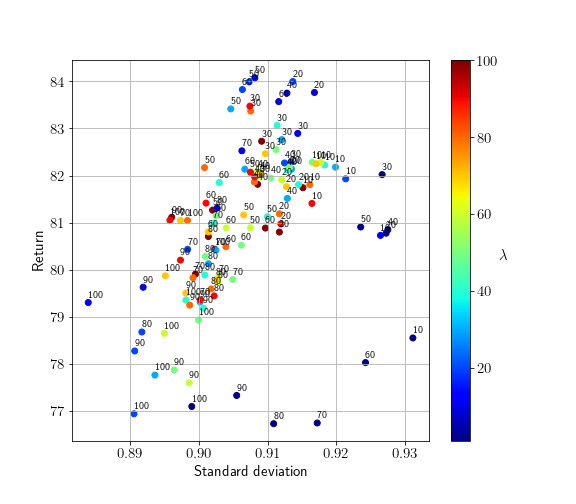}
\end{subfigure}%
\hfill
\begin{subfigure}{.45\textwidth}
  \centering
  \includegraphics[width=\linewidth]{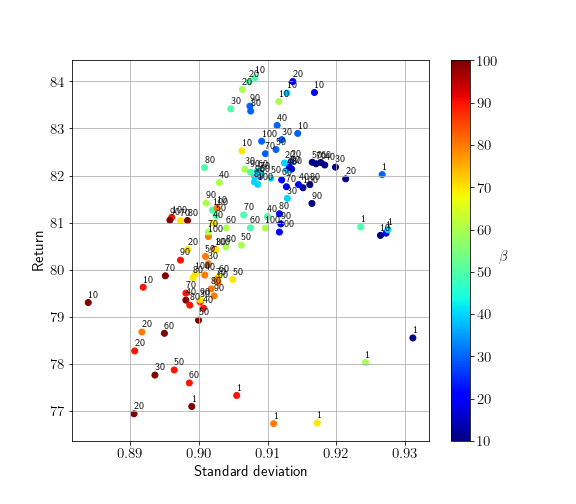}
\end{subfigure}%
\hfill
\captionof{figure}{Plot of the standard deviation of the rewards and return of POLIS on the Vasicek process for different values of $\lambda$ and $\beta$. The value of one parameter is shown on a color scale while the other parameter is labelled on each point. Left figure uses a color scale for $\lambda$ and right for $\beta$. }\label{fig:pareto_vasicek}
\end{figure*}

\begin{figure*}[t]
\centering

\begin{subfigure}{.45\textwidth}
  \centering
  \includegraphics[width=\linewidth]{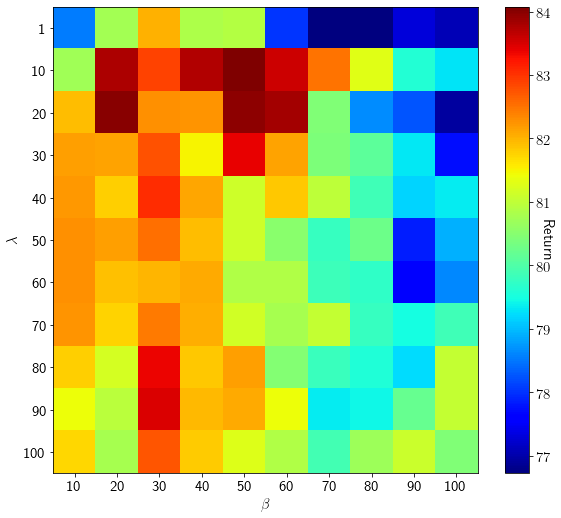}
\end{subfigure}%
\hfill
\begin{subfigure}{.45\textwidth}
  \centering
  \includegraphics[width=\linewidth]{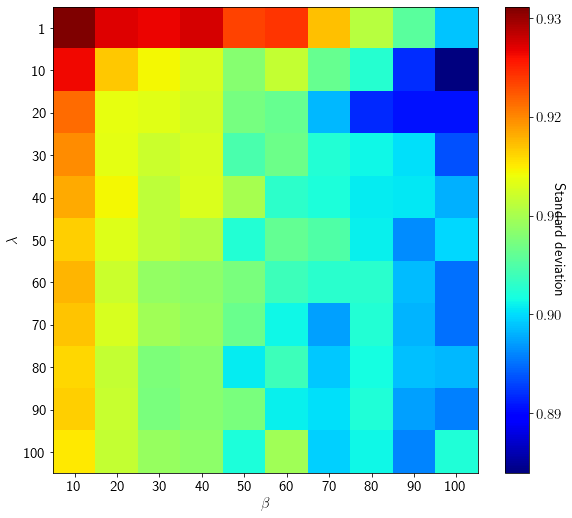}
\end{subfigure}%
\hfill
\captionof{figure}{Return (left) and standard deviation of the rewards (right) of POLIS on the Vasicek process for different values of $\lambda$ and $\beta$. }\label{fig:heatmap_vasicek}
\end{figure*}

\begin{figure*}[t]
\centering

\begin{subfigure}{.45\textwidth}
  \centering
  \includegraphics[width=\linewidth]{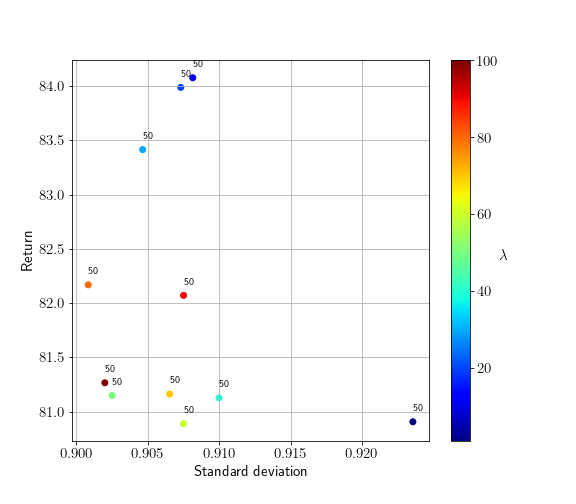}
\end{subfigure}%
\hfill
\begin{subfigure}{.45\textwidth}
  \centering
  \includegraphics[width=\linewidth]{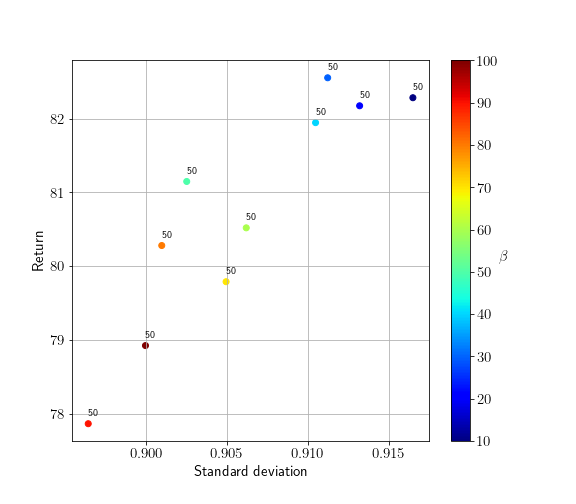}
\end{subfigure}%
\hfill
\captionof{figure}{Plot of the standard deviation of the rewards and return of POLIS on the Vasicek process for different values of $\lambda$ and $\beta$. The value of one parameter is shown on a color scale while the other parameter is labelled on each point. Left figure uses a color scale for $\lambda$ and $\beta$ is fixed to 50 while right figure uses a color scale for $\beta$ and  $\lambda$ is fixed to 50. }\label{fig:pareto_vasicek_one}
\end{figure*}



\end{document}